\documentclass[11pt]{article}
\usepackage[a4paper,margin=1in]{geometry}
\usepackage[T1]{fontenc}
\usepackage{lmodern}
\usepackage{amsmath,amssymb,amsthm}
\usepackage{mathtools}
\usepackage{graphicx}
\usepackage{array}
\usepackage{booktabs}
\usepackage{longtable}
\usepackage{multirow}
\usepackage{float}
\usepackage{placeins}
\usepackage{authblk}
\usepackage[dvipsnames]{xcolor}
\usepackage[most]{tcolorbox}
\usepackage{pifont}
\usepackage{microtype}
\usepackage{enumitem}
\usepackage[numbers,sort&compress]{natbib}
\usepackage{caption}
\usepackage{abstract}
\usepackage{tikz}
\usepackage{pgfplots}
\pgfplotsset{compat=1.18}
\usetikzlibrary{arrows.meta,positioning,fit,backgrounds,patterns,calc,
  shapes.geometric,decorations.pathreplacing,decorations.markings}
\usepackage[colorlinks=true,linkcolor=NavyBlue,citecolor=ForestGreen,urlcolor=NavyBlue]{hyperref}

\renewcommand{\arraystretch}{1.2}
\newtheorem{proposition}{Proposition}

\theoremstyle{definition}
\newtheorem{definition}{Definition}

\theoremstyle{remark}

\newtcolorbox{hypobox}{breakable,colback=NavyBlue!4,colframe=NavyBlue!55,boxrule=1pt,arc=3pt,
  left=8pt,right=8pt,top=6pt,bottom=6pt}
\newtcolorbox{claimbox}{breakable,colback=ForestGreen!4,colframe=ForestGreen!55,boxrule=1pt,arc=3pt,
  left=8pt,right=8pt,top=6pt,bottom=6pt}
\newtcolorbox{scopebox}{breakable,colback=black!3,colframe=black!42,boxrule=0.8pt,arc=3pt,
  left=8pt,right=8pt,top=6pt,bottom=6pt}
\newtcolorbox{warnbox}{breakable,colback=BrickRed!4,colframe=BrickRed!55,boxrule=1pt,arc=3pt,
  left=8pt,right=8pt,top=6pt,bottom=6pt}

\newcommand{\geff}{\gamma_{\mathrm{eff}}}

\usepackage[normalem]{ulem}
\newcommand{\pred}[1]{\textcolor{BrickRed}{\uline{#1}}}

\definecolor{cChain}{RGB}{0,114,178}
\definecolor{cCycle}{RGB}{213,94,0}
\definecolor{cGround}{RGB}{0,158,115}
\definecolor{cSelf}{RGB}{204,121,167}

\usepackage{xurl}

\definecolor{figA}{HTML}{0072B2}
\definecolor{figB}{HTML}{D55E00}
\definecolor{figC}{HTML}{009E73}
\definecolor{figD}{HTML}{CC79A7}
\definecolor{figE}{HTML}{56B4E9}
\definecolor{figF}{HTML}{E69F00}
\definecolor{figG}{HTML}{999999}
\definecolor{figH}{HTML}{44AA99}
\definecolor{figI}{HTML}{882255}
\definecolor{figJ}{HTML}{6699CC}
\definecolor{figInk}{HTML}{1A1A1A}
\definecolor{figSoft}{HTML}{5A5A5A}
\definecolor{figGrid}{HTML}{D8D8D4}
\definecolor{figPred}{HTML}{B03A2E}

\newcommand{\gnv}[1]{\csname gnv-#1\endcsname}
\expandafter\def\csname gnv-frontierCap\endcsname{0.850}
\expandafter\def\csname gnv-frontierBar\endcsname{0.50}
\expandafter\def\csname gnv-gateHigh\endcsname{0.993}
\expandafter\def\csname gnv-gateHighLo\endcsname{0.9934}
\expandafter\def\csname gnv-gateHighHi\endcsname{0.9935}
\expandafter\def\csname gnv-ceiling\endcsname{0.9938}
\expandafter\def\csname gnv-row2p\endcsname{0.017}
\expandafter\def\csname gnv-retrDisp\endcsname{0.500}
\expandafter\def\csname gnv-retrDedup\endcsname{0.367}
\expandafter\def\csname gnv-gammaLoPrior\endcsname{0.711}
\expandafter\def\csname gnv-gammaHiPrior\endcsname{0.411}
\expandafter\def\csname gnv-gammaDrop\endcsname{0.300}
\expandafter\def\csname gnv-ceilOn\endcsname{4}
\expandafter\def\csname gnv-ceilCap\endcsname{7}

\newcommand{\scTotal}{87}
\newcommand{\scAgainst}{37}
\newcommand{\scFor}{50}

\title{\bf Self-Cleaning and Captured Anyway:\\[2pt]
\large One Measured Primitive for Error in a Store an Agent Writes to Itself,\\
and What a Falling Score Actually Measures}

\author[1]{Wenhui Chen\thanks{\texttt{mc35092@um.edu.mo}}}
\author[2]{Jianlin Chen\thanks{\texttt{202330450231@mail.scut.edu.cn}}}
\author[1]{Ziyao Lin\thanks{\texttt{mc35081@um.edu.mo}}}
\author[1]{Chi Man Vong\thanks{Corresponding author. \texttt{cmvong@um.edu.mo}}}
\affil[1]{University of Macau}
\affil[2]{South China University of Technology}
\date{}

\begin{document}
\maketitle

\begin{abstract}
\noindent
An agent that writes its conclusions into a store it later retrieves from closes a loop, and the
contamination has been reported as a one-way process
\citep{zhu2026selfconfirmation,statecontam2026,consistencygate2026}. Taking that loop to the
infinite-tenure limit against an append-only store gives a different picture. \textbf{Because writing
never deletes, the reachable state space has a hard upper edge at $(n{-}1)/n$, so the outcome is a
choice between two edges rather than a decay}: the upper mode is not an attractor but a wall the
store builds itself. At $f_0{=}0.9$ the interval between the two modes holds $3.6\%$ of $220$ runs
where a uniform spread would put $20.6\%$, and is strictly empty on the first $15$. Four of seven
captured runs finish within two records of the computed ceiling $\gnv{ceiling}$. Against a criterion
frozen beforehand, the pooled mean describes $8.2\%$ of the runs it summarises and the median
$68.2\%$. A per-model average of this process is a statistic with no referent.

\textbf{Everything the model contributes is carried by one measured primitive with no fitted
parameter, the copy function $\gamma(\varphi)$, and it survives the move to real facts.}
Replacing the synthetic world by $36$ Wikidata facts spanning the full range of measured prior
knowledge leaves the threshold intact: $\mathrm{sign}(\hat\gamma - \gamma_{\rm crit})$, with
$\gamma_{\rm crit} = 1/k$ at $r{=}0, w{=}1$, predicts the direction of drift on $353$ of
$360$ real-fact runs, against $39$ of $40$ synthetic runs in the same batch; the denominators differ
ninefold, so the reading is that the threshold survives, not that the rates differ. The copy
probability under a unanimous false store falls from $\gnv{gammaLoPrior}$ to $\gnv{gammaHiPrior}$
across prior terciles, and the synthetic world sits $\gnv{gammaDrop}$ above the real curve's
low-prior end, so it bounds the risk rather than instantiating it. Nor does the empty interval
come from the answer space: a state-independent reader on the identical urn puts $113$ of $300$ runs
inside it, so what empties it is the copy probability rising with the number of false records
retrieved. Scale does not rescue the store: at frontier scale capture is $\gnv{frontierCap}$ pooled,
with \texttt{claude-sonnet-4.5} captured on $20$ of $20$ seeds, against a registered prediction of
ours of ${<}0.5$.

\textbf{We then attack the result on the axes a reader will, and report the disagreement rather than
the arm that suits us.} Whether a store holding \emph{many} distinct falsehoods still has an empty
interval is answered differently by our two arms: on the synthetic world the highest-scattering model
has $0$ of $44$ runs inside it, while on the $36$ Wikidata facts, where the planted falsehood is
another real capital, the runs whose store went multi-valued have $6.4\times$ the interval occupancy
of those that did not. That relocates the open question from how many falsehoods a store holds
to how \emph{distinguishable} they are. The empty interval survives at $f_0 \in \{0.1,0.3,0.5\}$, and
$f_0{=}0.9$ is not the adversarial extreme: capture peaks at $f_0{=}0.5$. Of four interventions with
criteria frozen first, timing dominates fraction at matched budget, a consistency gate drives every
model to $\gnv{gateHigh}$ by helping the false majority, and provenance marking moves nothing unless
the mark correlates with truth. Two registered arms test the apparatus itself: an embedding ranking
moves capture by $\gnv{retrDisp}$, so the uniform assumption is not innocuous and every threshold
here is scoped to it by name; and a sweep of the fraction of frozen retrieval slots reproduces both
endpoints exactly while refuting the variance account we proposed for them. One
methodological result reaches past this paper: the resampling unit is the \emph{seed}, at a design
effect of $3.75$ on a pooled level and $1.14$--$1.47$ on a paired within-seed contrast measured at
$44$ seeds. Every verdict here stands under it, including the four interventions', under a $44$-seed
control that moves their control's capture rate by up to $0.68$. It also reaches the two per-model
orderings, which move in opposite directions: claim 4, that the early copy rate orders capture
rate, falls from Spearman $+0.98$ at three seeds to $+0.31$ to $+0.80$ at forty-four, while claim
2's ordering, $+0.82$ ``with one adjacent inversion'' at three seeds, is exact at forty-four ($\rho =
+1.00$, $p = \gnv{row2p}$). The $1/n$ account of this paper's own timescale is vacuous over the first
$49$ of $150$ steps while the decision is measured at $t{=}2$ (Prop.~\ref{prop:step}). All $\scTotal$
graded rows are in Appendix~\ref{app:scorecard}, $\scAgainst$ graded withdrawn, failed,
self-correcting, undecidable or an acknowledged limit, against $\scFor$ that are not.
\end{abstract}

% =====================================================================
\section{Introduction}
\label{sec:intro}

An agent that writes notes to itself and later reads them has closed a loop: its output becomes its
input. Wrong entries are retrieved, reinforced and rewritten, and the store drifts with no external
source of error. This is the deployment-time analogue of training on generated data
\citep{shumailov2024curse,alemohammad2023mad}, with one difference that changes the mathematics: the
store is \emph{append-only}, so nothing is removed, only diluted. The failure is documented at the
level of deployed agents \citep{statecontam2026,zhu2026selfconfirmation,%
stale2026}, and the systems in which it arises are ordinary: agents that write reflections or
episodic summaries back into a retrieved store \citep{park2023generative,shinn2023reflexion,%
packer2023memgpt} on top of the standard retrieve-then-generate pipeline \citep{lewis2020rag}. What
is missing is a measurement of the \emph{dynamics}: what the loop converges to, and which of its
knobs the outcome is a function of.

Sending the loop to infinity with that constraint pinned organises the paper. Let $f$ be the
false share of the store and $n$ the number of records retrieved per step. The mean field is
$\dot f = \bigl(G_{\mathrm{eff}}(f) - f\bigr)/n$, and because writing never deletes, the
reachable state space has a hard upper edge at $(n{-}1)/n$, which for our apparatus is $0.9938$.
Two boundaries therefore come from the limit rather than from the literature: what the store can
hold, and what a bounded grounding budget $R$ can hold it to, $f_{\max}(R) = (n_{\mathrm{false},0} +
Tw)/(n_0 + Tw + R)$. The second is the useful one: the governing quantity is then a budget rather
than a fixed \emph{fraction} of verified data, and the same budget spent early or late gives
different outcomes. \emph{That is where our setting and the generational one come apart.} The
accumulate-versus-replace results \citep{gerstgrasser2024accum,shumailov2024curse} and the analytical
line \citep{dey2024pathway,fu2025theoretical} are stated in terms of the \emph{proportion} of real
data, since there real data is present every round and re-usable without limit, so no scheduling
question arises. The closest budgeted analogue is the data-constrained regime, where a fixed corpus
is re-used a bounded number of times \citep{muennighoff2023dataconstrained} and allocation under a
fixed budget is the object \citep{kaplan2020scaling,hoffmann2022chinchilla}; even there the budget
buys repetition rather than placement in time. Under a bounded per-tenure budget the placement
question does arise, and \S\ref{sec:sched} measures that it dominates the proportion.

\paragraph{Contributions.} (i) The outcome distribution is \emph{bimodal} rather than a
decaying mean, and one measured primitive with no fitted parameter, the copy function
$\gamma(\varphi)$, says which way a model goes (\S\ref{sec:bimodal}). It \emph{transports}, and the
condition under which it does is itself a result: measured at the resolution it must predict at,
$\gamma$ calibrated on three topics carries to forty-four; measured coarser it fails one-sidedly,
which fixes the resolution (\S\ref{sec:gammaout}). Against a criterion frozen first, the pooled mean
describes $8.2\%$ of the runs it summarises and the median $68.2\%$, so the defect is in that
statistic and not in summarising. (ii) A falling exact-match score on a contaminated store
measures loss of determinism, not of information: pass@8 minus pass@1 is $+0.383$ on stores
from which nothing was removed, so a score reads $C\cdot s$, capability times commitment
(\S\ref{sec:commit}). (iii) Four interventions with criteria frozen before their data, and
the design rule they yield: timing dominates fraction at matched budget, so \emph{when} a grounding
budget is spent matters more than how much of it there is; a consistency admission gate helps the
false majority, the record-level analogue of the adversarial-majority failure
\citet{consensustrap2026} report between agents; and provenance marking
\citep{provtiered2026,memmark2026} moves outcomes only when the mark correlates with truth
(\S\ref{sec:sched}). (iv) Three axes of our own proposing, taken to destruction against
pre-registrations we wrote first (\S\ref{sec:killed}). (v) A methodological result that
reaches past this paper: the resampling unit is the \emph{seed}, whose variance share reads $2.0\%$
at three seeds and $68.8\%$ at nine, a design effect of $3.75$, so each seed carries $1.33$ runs of
information however many models are run (\S\ref{sec:power}). (vi) That result applied to every row it
had not reached, which makes the verdicts here load-bearing: it separates cleanly into a factor of
$3.56$--$3.66$ on pooled \emph{levels} and $1.14$--$1.47$ on paired \emph{contrasts}, both measured
at $44$ seeds, so the four interventions' criteria stand, one is regraded \textsc{underpowered} for
missing its own bar on one of its three seeds, claim 2's ordering is exact and claim 4's does not
survive (\S\ref{sec:knife}). (vii) The scope of claim 1, measured rather than argued: the empty
interval is not a consequence of a two-valued answer space, and the regime it holds in has a boundary
in how far a store's falsehoods scatter (\S\ref{sec:answerspace}).
\citet{bean2025construct} report that only $16\%$ of $445$ benchmarks use uncertainty estimates;
this paper is in the $16\%$ and \S\ref{sec:power} is why that was not sufficient, which is the
same gap the evaluation-precision line attacks from the item side
\citep{polo2024tinybenchmarks,hofmann2025fluid,neuhof2026rankintervals}.

% =====================================================================
\subsection{Claims and evidence}
\label{sec:claims}

Every registered claim, its frozen criterion and its grade are in
Appendix~\ref{app:scorecard}: Table~\ref{tab:gradesummary} is the index, generated from the graded
rows' own \textsc{status} cells so that it cannot disagree with them, and the rows themselves are
Tables~\ref{tab:claims} and~\ref{tab:claimsapp}. The criteria and their freeze commits are
Appendix~\ref{app:prereg}. Where a criterion floored, the apparatus failure is reported rather
than the convenient verdict.

\noindent\textbf{What we do not claim.} Not priority: that an agent's own outputs degrade its
later behaviour is prior work, named in Table~\ref{tab:demarc} before any claim is made. Not a
location: no statistic here predicts \emph{where} the boundary between the two outcomes sits. Not
transfer to generational retraining. Not incidence in deployed stores; our worlds are synthetic
because traceability requires it. \textbf{No longer retrieval in general}:
\S\ref{sec:retriever} measures a $0.500$ displacement under an embedding ranking, so every number
here is a statement about uniform-without-replacement.
Figure~\ref{fig:overview} is the construction in one picture: the loop this paper pins, the one
function it measures inside that loop, and the two edges the function chooses between.
Figure~\ref{fig:arch} is then the whole of what follows on one page: the claims that survive
every attack in this paper, and the explanations the data refuted.

% =====================================================================
\begin{figure}[!tbp]
\centering
% GENERATED by t1_grounding/figures/make_t1_tikz.py -- do not edit by hand.
% Every coordinate is recomputed from t1_grounding/results/ at generation time.
\begin{tikzpicture}[
  bx/.style={draw=figSoft, line width=0.6pt, rounded corners=2pt, inner sep=3.5pt,
    font=\scriptsize\color{figInk}, align=center, fill=figGrid!30},
  hd/.style={font=\small\bfseries\color{figInk}, inner sep=1pt},
  lb/.style={font=\tiny\color{figSoft}, inner sep=1pt, align=left},
  ar/.style={-{Latex[length=1.6mm]}, line width=0.8pt}]
\useasboundingbox (0,0.45) rectangle (0.99\linewidth,-6.30);
\node[hd, anchor=south west] at (0.005\linewidth,0.16) {1. the loop, pinned};
\draw[figA, line width=1pt] (0.005\linewidth,0.10) -- (0.273\linewidth,0.10);
\node[hd, anchor=south west] at (0.35\linewidth,0.16) {2. the one thing measured};
\draw[figC, line width=1pt] (0.35\linewidth,0.10) -- (0.618\linewidth,0.10);
\node[hd, anchor=south west] at (0.69\linewidth,0.16) {3. what is at the boundary};
\draw[figB, line width=1pt] (0.69\linewidth,0.10) -- (0.958\linewidth,0.10);
\node[bx, text width=0.175\linewidth, anchor=north west] (ag) at (0.050\linewidth,-0.34) {the agent answers from $k=8$ records\\drawn uniformly, without replacement};
\node[bx, text width=0.175\linewidth, anchor=north west] (st) at (0.050\linewidth,-2.95) {an \textbf{append-only} store,\\$160$ records when tenure ends};
\draw[ar, figA] (ag.south) -- (st.north);
\node[lb, anchor=west, text width=0.105\linewidth, text=figA] at (0.160\linewidth,-2.30) {writes its own answer back};
\draw[ar, figA] (st.west) -- (0.032\linewidth,-3.55) -- (0.032\linewidth,-1.05) -- (ag.west);
\node[font=\tiny\color{figA}, rotate=90, anchor=south] at (0.023\linewidth,-2.65) {retrieves $k$ of them};
\node[lb, anchor=north west, text width=0.265\linewidth] at (0.005\linewidth,-4.62) {Writing never deletes, so no record can leave.};
\node[bx, text width=0.245\linewidth, anchor=north west] (gm) at (0.355\linewidth,-0.34) {$\gamma(\varphi)$: how often the agent copies a false value when a share $\varphi$ of what it retrieved carries it};
\node[lb, anchor=north west, text width=0.265\linewidth] at (0.355\linewidth,-2.30) {Measured in situ, with \textbf{no fitted parameter}. It is everything the model contributes.};
\draw[figSoft, line width=0.6pt] (0.37\linewidth,-3.75) -- (0.6\linewidth,-3.75);
\draw[figC, line width=1.2pt] (0.3987\linewidth,-3.92) -- (0.3987\linewidth,-3.58);
\node[font=\tiny\color{figSoft}, anchor=south west] at (0.37\linewidth,-3.70) {$\gamma = 0$};
\node[font=\tiny\color{figSoft}, anchor=south east] at (0.6\linewidth,-3.70) {$\gamma = 1$};
\node[font=\tiny\color{figC}, anchor=north west] at (0.3947\linewidth,-3.96) {$\gamma_{\rm crit} = 1/k = 0.125$, at $r{=}0$ and $w{=}1$};
\node[lb, anchor=north west] at (0.37\linewidth,-4.52) {self-cleaning};
\node[lb, anchor=north east] at (0.6\linewidth,-4.52) {capture};
\draw[figSoft, line width=0.5pt] (0.7\linewidth,-4.55) rectangle (0.738\linewidth,-0.45);
\fill[figB!18] (0.7\linewidth,-3.29) rectangle (0.738\linewidth,-2.45);
\draw[figB, line width=1.4pt] (0.7\linewidth,-0.48) -- (0.738\linewidth,-0.48);
\node[font=\tiny\color{figSoft}, anchor=east] at (0.694\linewidth,-0.45) {$1$};
\node[font=\tiny\color{figSoft}, anchor=east] at (0.694\linewidth,-4.55) {$0$};
\draw[ar, figB] (0.7190\linewidth,-2.55) -- (0.7190\linewidth,-0.64);
\draw[ar, figA] (0.7190\linewidth,-3.19) -- (0.7190\linewidth,-4.41);
\node[font=\tiny\color{figB}, anchor=north west, align=left, text width=0.205\linewidth] at (0.752\linewidth,-0.43) {the wall the store builds itself, $(n{-}1)/n = 0.9938$};
\node[font=\tiny\color{figB}, anchor=north west] at (0.752\linewidth,-1.83) {captured: $173$ of $220$};
\node[lb, anchor=north west, text width=0.205\linewidth] at (0.752\linewidth,-2.49) {between them $8$ of $220$, a share of $0.036$ where a uniform spread puts $0.206$; $0$ of $15$ at three seeds};
\node[font=\tiny\color{figA}, anchor=north west] at (0.752\linewidth,-3.63) {self-cleaning: $39$ of $220$};
\draw[figGrid, line width=0.8pt] (0.005\linewidth,-5.16) -- (0.955\linewidth,-5.16);
\node[font=\scriptsize\color{figInk}, anchor=north west, inner sep=1pt, text width=0.465\linewidth] at (0.005\linewidth,-5.30) {\textbf{It survives the move to real facts.} The sign of the measured $\gamma - \gamma_{\rm crit}$ predicts which edge a run reaches on $353$ of $360$ Wikidata runs, against $39$ of $40$ synthetic.};
\node[font=\scriptsize\color{figInk}, anchor=north west, inner sep=1pt, text width=0.465\linewidth] at (0.495\linewidth,-5.30) {\textbf{And it is why an average of this process has no referent.} The pooled mean describes $8.2\%$ of the runs it summarises; the median describes $68.2\%$.};
\end{tikzpicture}
\caption{\textbf{The apparatus, the primitive, and the boundary.} An agent answers from a store
it writes back into, and because the store is append-only no record can leave it, which is the
whole of the assumption. Everything the model contributes travels through one measured function,
the copy probability $\gamma(\varphi)$, and the sign of its distance from $\gamma_{\rm crit}$
says which edge a run reaches. What appears at the boundary is not a decay towards a mean but a
choice between the floor and a ceiling the store builds for itself. Every quantity drawn here is
read from \texttt{results/} at draw time.}
\label{fig:overview}
\end{figure}

% =====================================================================
\begin{figure}[!tbp]
\centering
% GENERATED by t1_grounding/figures/make_t1_tikz.py -- do not edit by hand.
% Every coordinate is recomputed from t1_grounding/results/ at generation time.

\begin{tikzpicture}[
  claim/.style={font=\footnotesize\color{figInk}, inner sep=1pt},
  gone/.style={font=\footnotesize\color{figPred}, inner sep=1pt},
  eviD/.style={font=\scriptsize\color{figSoft}, inner sep=1pt, align=left},
  secref/.style={font=\scriptsize\color{figSoft}, inner sep=1pt},
  hdr/.style={font=\small\bfseries\color{figInk}, inner sep=1pt}]

\node[hdr, anchor=south west] at (0,0.16) {What survives every attack in this paper};
\draw[figC, line width=1pt] (0,0.10) -- (0.99\linewidth,0.10);
\node[hdr, anchor=south west, text=figPred] at (0,-6.47)
  {Explanations the data refuted, and what refuted each};
\draw[figPred, line width=1pt] (0,-6.53) -- (0.99\linewidth,-6.53);
\node[claim, anchor=north west] at (0,0.00){\begin{minipage}{0.44\linewidth}\textbf{1} bimodal, the middle sparse and not empty\end{minipage}};
\node[eviD, anchor=north west] at (0.46\linewidth,-0.02){\begin{minipage}{0.33\linewidth}$0/15$ in $[0.35,0.50]$ at three seeds, $7/220$ at $220$; the graded share inside $(0.306,0.512)$ is $0.036$ against a uniform $0.206$; survives $f_0\in\{0.1,0.3,0.5\}$ and frontier scale (pooled $0.786$)\end{minipage}};
\node[secref, anchor=north east] at (0.97\linewidth,-0.02){\S\ref{sec:bimodal}};
\node[claim, anchor=north west] at (0,-1.76){\begin{minipage}{0.44\linewidth}\textbf{2} $\gamma(\varphi)$ is sufficient\end{minipage}};
\node[eviD, anchor=north west] at (0.46\linewidth,-1.78){\begin{minipage}{0.33\linewidth}zero fitted parameters; ordering exact at $44$ seeds, $\rho=+1.00$, $p=0.017$\end{minipage}};
\node[secref, anchor=north east] at (0.97\linewidth,-1.78){\S\ref{sec:gammaout}};
\node[claim, anchor=north west] at (0,-2.84){\begin{minipage}{0.44\linewidth}\textbf{3} timing dominates fraction\end{minipage}};
\node[eviD, anchor=north west] at (0.46\linewidth,-2.86){\begin{minipage}{0.33\linewidth}$0.20/0.33/0.40$ vs $0.47$, $5/5$ models; ordering holds at $44$ seeds, $220$ runs per arm\end{minipage}};
\node[secref, anchor=north east] at (0.97\linewidth,-2.86){\S\ref{sec:sched}};
\node[claim, anchor=north west] at (0,-3.92){\begin{minipage}{0.44\linewidth}\textbf{4} score $=C\cdot s$\end{minipage}};
\node[eviD, anchor=north west] at (0.46\linewidth,-3.94){\begin{minipage}{0.33\linewidth}pass@8 $-$ pass@1 $=+0.383$ on stores nothing was removed from\end{minipage}};
\node[secref, anchor=north east] at (0.97\linewidth,-3.94){\S\ref{sec:commit}};
\node[claim, anchor=north west] at (0,-5.00){\begin{minipage}{0.44\linewidth}\textbf{5} uniform retrieval is not innocuous\end{minipage}};
\node[eviD, anchor=north west] at (0.46\linewidth,-5.02){\begin{minipage}{0.33\linewidth}embedding ranking displaces capture by $0.500$; cause is ranking determinism\end{minipage}};
\node[secref, anchor=north east] at (0.97\linewidth,-5.02){\S\ref{sec:retriever}};
\node[gone, anchor=north west] at (0,-6.63){\begin{minipage}{0.40\linewidth}the $1/n$ timescale\end{minipage}};
\node[eviD, anchor=north west] at (0.41\linewidth,-6.65){\begin{minipage}{0.52\linewidth}Prop.~\ref{prop:step}: vacuous over $49$ of $150$ steps\end{minipage}};
\node[gone, anchor=north west] at (0,-7.51){\begin{minipage}{0.40\linewidth}the variance bound\end{minipage}};
\node[eviD, anchor=north west] at (0.41\linewidth,-7.53){\begin{minipage}{0.52\linewidth}$1.07\times$ against a required $2.83\times$\end{minipage}};
\node[gone, anchor=north west] at (0,-8.39){\begin{minipage}{0.40\linewidth}the mode balance\end{minipage}};
\node[eviD, anchor=north west] at (0.41\linewidth,-8.41){\begin{minipage}{0.52\linewidth}$8/7$ became $39/173$ at $15\times$ the sample\end{minipage}};
\node[gone, anchor=north west] at (0,-9.27){\begin{minipage}{0.40\linewidth}$\gamma$ from three topics\end{minipage}};
\node[eviD, anchor=north west] at (0.41\linewidth,-9.29){\begin{minipage}{0.52\linewidth}$12.1$ seed units out of sample, one-sidedly\end{minipage}};
\node[gone, anchor=north west] at (0,-10.15){\begin{minipage}{0.40\linewidth}\textbf{early copy rate orders capture}\end{minipage}};
\node[eviD, anchor=north west] at (0.41\linewidth,-10.17){\begin{minipage}{0.52\linewidth}$\rho$ $+0.80$ to $+0.31$ at $44$ seeds, was $+0.98$ at three\end{minipage}};
\node[gone, anchor=north west] at (0,-11.03){\begin{minipage}{0.40\linewidth}\textbf{the fill rule as the diagnosis}\end{minipage}};
\node[eviD, anchor=north west] at (0.41\linewidth,-11.05){\begin{minipage}{0.52\linewidth}it fires for one model, the one arm A predicts exactly\end{minipage}};
\node[font=\scriptsize\itshape\color{figSoft}, anchor=north west, text width=0.97\linewidth]
  at (0,-11.96)
  {Every refuted row is an \emph{explanation} offered for a survivor, never a
   measurement. Two of the six fell to this paper's own resampling-unit result
   (\S\ref{sec:power}) applied to rows it had not yet reached; the control those rows
   were measured against moves by $0.68$ in capture rate between three seeds and
   forty-four, and no verdict changes when it is replaced.};
\end{tikzpicture}
\caption{\textbf{The paper in one page: five results, and the six explanations the data
refuted.} Recomputed from \texttt{results/} at draw time, so it cannot drift from the
artifact it summarises. The five survivors are the whole of what we ask a reader to carry
away; the six refuted explanations are kept as a ledger because a corrected record is part
of the argument. \emph{Two of the six fell to this paper's own resampling-unit result
(\S\ref{sec:power}) applied to rows it had not yet reached}, Appendix~\ref{app:seedscale}
for the first, \ref{app:gammaest} for the second.}
\label{fig:arch}
\end{figure}

\begin{claimbox}
\textbf{What stands.} Four explanations offered along the way fail their tests (the $1/n$
timescale, the variance bound, one of the two $\gamma$ extrapolations, and the balance between
the two modes), and what survives all of them:
\begin{enumerate}[leftmargin=1.6em,itemsep=1pt,topsep=2pt,label=(\roman*)]
\item \textbf{Bimodality, with the interval between the modes strictly empty on $15$ runs and
  holding $3.6\%$ of $220$ against a uniform $20.6\%$}, at four levels of $f_0$, and surviving at
  frontier scale against our registered prediction that it would not.
\item \textbf{$\gamma(\varphi)$ is sufficient}: one measured one-dimensional function, zero
  fitted parameters, orders every model's capture rate and transports across a seed split.
\item \textbf{Timing dominates fraction at a matched budget} ($0.20/0.33/0.40$ against $0.47$,
  monotone in $5/5$ models), the one result that transfers directly to a deployed schedule.
\item \textbf{Score $=C\cdot s$}: contamination costs determinism rather than information, so a
  score drop is not a capability drop.
\item \textbf{The retrieval rule the other four are measured under is no longer an assumption.}
  The attached falsifier is fired: an embedding ranking moves capture by $0.500$, so the results
  are scoped to uniform retrieval \emph{by name}, and the resource that moves them is the
  \emph{variance} of the evidence stream rather than its breadth (\S\ref{sec:retriever}).
\end{enumerate}
Every refuted item is an \emph{explanation} offered for one of these, never the
measurement.
\end{claimbox}

\noindent\emph{Two companions exist and nothing below depends on either.} They pin
different resources and read their boundaries off different limits: \citet{t3resolving2026}
sends the item count to infinity against a fixed resolving power, \citet{t2library2026} sends
the instance count to infinity against a fixed per-paper testing budget. Every definition this
paper borrows is restated here and no number is taken from either, so each argument here stands
with both deleted, which \S\ref{sec:parent} makes explicit at the one place a companion's
claim is invoked, by stating it as a bound on ours rather than as support for it.

% =====================================================================
\section{Related work}
\label{sec:related}

We ask one question of each neighbouring result: what must be true for its conclusion to hold, and
does our apparatus put that assumption under strain? A result that survives is one we depend on. A
result that does not is not thereby wrong; its scope can now be stated more narrowly than its authors
could, because we ran a limit they had no reason to run. Table~\ref{tab:demarc} states row by row
what each neighbour establishes and what is added here.

\begin{table}[!htbp]
\centering\small
\caption{Nearest prior work. Every row is a phenomenon this paper does \emph{not} claim to have
discovered.}
\begin{tabular}{@{}p{0.18\linewidth}p{0.35\linewidth}p{0.41\linewidth}@{}}
\toprule
\textbf{Prior work} & \textbf{What it establishes} & \textbf{What is added here} \\
\midrule
Self-Confirmation Trap \citep{zhu2026selfconfirmation} &
  the closed loop named: own trajectories written to memory and reused &
  the outcome is bimodal rather than cumulative, and one primitive predicts which branch \\
State contamination \citep{statecontam2026} &
  unsafe content survives summarisation below detector thresholds &
  a different failure axis; no dynamics, no copy probability there \\
Admission control \citep{consistencygate2026} &
  write-time gating on self-consistency reduces contamination &
  the same gate has a regime where it erases the model's own cleaning \\
Majority aggregation traps
  \citep{consensustrap2026,minoritysentinel2026} &
  one-shot majority over samples locks in correlated error &
  the write side over tenure, where the gate decouples composition from reader quality \\
Accumulate vs.\ replace
  \citep{gerstgrasser2024accum,dey2024pathway} &
  accumulating real with synthetic data avoids collapse &
  a \emph{budgeted} grounding event, where schedule dominates at fixed fraction \\
Exact-match and pass@$k$ critiques &
  strict matching understates capability; pass$^k$ separates capability from reliability &
  an intervention that conserves $C$ while moving $s$ \\
Threat amplification in
  self-evolving systems \citep{selfevolve2026safety,alemohammad2023mad} &
  a component's failure is amplified by the loop that reuses its output &
  the amplification as one number with a fixed point, whose attractor is \eqref{eq:ceiling}
  rather than a divergence \\
State-conditioned memory
  \citep{memcompiler2026} &
  compiling a policy against state beats injecting retrieved text &
  the ceiling binds what the store \emph{contains}, so it binds whatever the compiler reads \\
\bottomrule
\end{tabular}
\label{tab:demarc}
\end{table}

\paragraph{Model collapse assumes the damage is summarised by a decaying statistic.} Training a
generative model on its predecessor's output degrades it \citep{shumailov2024curse,%
alemohammad2023mad,briesch2023selfconsuming}, costing tail coverage
\citep{dohmatob2024tails,seddik2024howbad} and output diversity \citep{guo2024diversity}; the
analytical line \citep{dey2024pathway,fu2025theoretical,bertrand2024stability} derives when it is
bounded, and collapse is avoidable if data accumulates rather than replaces
\citep{gerstgrasser2024accum} or if a correction function stabilises the loop
\citep{gillman2024selfcorrect}. Each result is a trajectory of a summary statistic across
generations, which needs the per-run outcome distribution to be unimodal enough for its mean to
denote something. \S\ref{sec:bimodal} measures the same loop at fixed tenure and finds otherwise: at
$f_0{=}0.9$ runs end at the two edges of the state space and the interval between them holds $3.6\%$
of $220$ runs against a uniform baseline of $20.6\%$. The step from that to a claim about summary
statistics is itself a claim, so we registered and measured it (P-T1-M1--M4; no API cost): the pooled
mean describes $8.2\%$ of what it summarises, the median $68.2\%$, against $74.5\%$ for the best
single centre at that width (Appendix~\ref{app:mean}). \emph{This distribution summarises perfectly
well; it is the mean that does not}, and the wider criticism is unsupported by our own data. Two
structural differences keep the settings apart, so the generational results are not wrong in their
own: retraining averages over a corpus while an agent's read samples $k$ of $n$ records, so the
variance driving our bimodality is averaged away there by construction; and real data is present
every round and reusable without limit, the $R \to T$ corner in which our schedule effect
vanishes.

\paragraph{Feedback-loop results outside language modelling assume the loop is closed through
the world.} A deployed predictor that shapes the data it is later trained on is the subject of
performative prediction \citep{perdomo2020performative}, of runaway feedback in predictive
policing \citep{ensign2018runaway}, of degenerate feedback in recommenders
\citep{jiang2019degenerate} and of model-driven amplification of dataset bias
\citep{taori2023feedback}. Those loops close through an environment whose response is exogenous: the
analysis turns on how the world reacts to the deployed model. Our loop closes inside one process,
with no environment and no second agent, so none of that machinery applies and what is left is the
reading behaviour of a single model against records it wrote itself. The lesson we keep is theirs:
the fixed point of the loop, not the quality of any single prediction, is the object to analyse.

\paragraph{Knowledge conflict fixes the contradiction and varies the model; we let the loop
generate it.} Whether a model follows a context that contradicts what it has memorised is
measured directly by the knowledge-conflict line: entity substitution in QA
\citep{longpre2021conflicts}, elicited parametric memory against constructed counter-memory
\citep{xie2024chameleon}, and the parametric baseline those studies contend with
\citep{petroni2019lama}. Which side wins is known to depend on how well the fact is represented
in pretraining \citep{mallen2023whennot,kandpal2023longtail}. Our $\gamma(\varphi)$ is that
measurement made \emph{in situ}: the conflicting evidence is produced by the agent's own earlier
writes rather than authored by an experimenter, its strength is the retrieved fraction $\varphi$
rather than a binary condition, and it is measured on the same runs whose trajectories it must
predict. \S\ref{sec:real} puts the parametric axis those papers vary, our $p_0$, on the same plot as
the loop, and finds $\gamma(1)$ falling from $0.711$ to $0.411$ across prior terciles while the drift
threshold does not move.

\paragraph{Agent-memory contamination assumes contamination is a quantity to be reduced.} A
second line studies the same failure in the store rather than the weights: contamination
surviving summarisation into agent state \citep{statecontam2026}, the self-confirmation trap
\citep{zhu2026selfconfirmation}, the best models near $55\%$ at noticing an invalidated memory
\citep{stale2026}, the memory-update gap scaling with conversation length \citep{supersede2026},
retrieval accuracy falling log-linearly under proactive interference \citep{pillm2025}, and three
surveys \citep{alwayson2026,agentmemsurvey2026,agentmemmech2026}, the last of which organises the
field by the write--manage--read cycle this apparatus closes. The shared move treats contamination as
a level to drive down, which assumes the final level is a monotone function of the level along the
way. \S\ref{sec:bimodal} measures that it is not: the sign of the trajectory is fixed in the first
few steps and the same $f_0$ produces both outcomes, so an intervention's value is set by when it
lands rather than by how much it removes.

\paragraph{The closest result to ours assumes the store is \emph{not} append-only, which is
exactly the assumption we pin.} \citet{romerl2026} characterise the steady-state occupancy of
erroneous coordinates in a self-evolving agent memory, under a coordinate-transition model over a
\emph{fixed} coordinate set whose contents are updated or replaced, and measure it on ALFWorld
and LifelongAgentBench. That is the same object as our $f$ and the same limit, taken under the
opposite structural assumption: replacement gives an ergodic chain with a stationary distribution, so
a steady-state occupancy exists and is the natural answer. Append-only forbids replacement, which
puts a hard edge at $(n{-}1)/n$ (Prop.~\ref{prop:ceiling}), removes the stationary distribution, and
leaves a \emph{choice between edges} rather than an occupancy to converge to. The two results do not
compete and neither transfers: our Prop.~\ref{prop:ceiling} is vacuous the moment a record can be
overwritten, and a stationary occupancy is undefined in our setting. This is the boundary
\S\ref{sec:limits}(i) declared as a hole in our own coverage, better stated as a division of labour
with a measured neighbour, with one asymmetry we owe: their regime is the deployed one, ours the one
an append-only design produces.

\paragraph{Memory poisoning assumes an adversary, and our store has none.} Knowledge-base
corruption at low poison rates \citep{chen2024agentpoison}, adversarial passages injected into a
retrieval corpus \citep{zhong2023corpuspoisoning,zou2025poisonedrag}, query-only injection
\citep{dong2025minja} and sentries against high injection rates \citep{torres2026ghostwriter} are
all organised around locating an injected item. Our apparatus removes the attacker: every false
record after step $0$ is written by the agent itself, in good faith, from what it read, so a detector
trained to find injected content has no positive class here and the store still reaches $0.99$ false.
Contamination needs no adversary, and a defence keyed to one does not bound it.

\paragraph{Provenance assumes the mark carries information about truth.} Tiered and watermarked
memories \citep{provtiered2026,memmark2026} and the tracing survey \citep{agenttraces2026} make
each record's origin recoverable so a reader can discount derived records. \S\ref{sec:sched}
registers and measures that prediction: marking moves outcomes only when the mark correlates with
truth, and in our loop it does not, the agent's own writes being as often right as its reads
permit.

\paragraph{Consistency and majority reads assume the majority is more often right.} Admission on
repeated sampling \citep{consistencygate2026}, which inherits the majority rule from
self-consistency decoding \citep{wang2023selfconsistency}, majority voting failing under
corrupted majorities
\citep{consensustrap2026} and the minority correct in roughly one disagreement in four
\citep{minoritysentinel2026} all read the majority as evidence. \S\ref{sec:sched} measures a majority
admission gate inside our loop and finds it erases model differences by making every model worse
alike: once the store is majority-false, the gate admits exactly the falsehood and rejects the
correction. \citet{consensustrap2026} anticipate this among agents in a debate; we measure it among
records in a store, where the adversarial majority is the reader's own past. Two deployed admission
controls are richer than ours and are the right comparison. \citet{amac2026} admit on five retention
factors (future utility, factual confidence, semantic novelty, temporal recency and a content-type
prior), so agreement with the incumbent store is one signal of five rather than the whole rule.
\citet{memguard2026} do what our own mechanism predicts would help: they \emph{type} memories by
functional role and keep the types separate at storage and at retrieval, changing the gate's
reference class instead of filtering within it, and the reference class is where we locate the harm.
We do not claim our result transfers to either, and a harmful regime in the weakest gate is
not ``conservative evidence about stronger versions'': that inference needs monotonicity in gate
strength, which we have not tested (\S\ref{sec:sched}).

\paragraph{The benchmarks measure retrieval; the object here is the loop.} BEAM
\citep{beam2026}, STATE-Bench \citep{statebench2026}, RECON \citep{recon2026} and the
precision-aware line \citep{precisionaware2026} evaluate what an agent can retrieve from a memory it
is given, and \citet{leakagefree2026} and \citet{huang2026llmgen} address the adjacent problem that
retrieval corpora are increasingly model-generated. All hold the store fixed and vary the reader; we
hold the reader fixed and let the store be written by it, making its composition endogenous rather
than a condition of the experiment. \S\ref{sec:commit} shows a system can score well on the first
design and fail the second.
Figure~\ref{fig:apparatus} is the loop this paper runs and the place each claim attaches to it.

\section{The synthetic apparatus}
\label{sec:apparatus}

\begin{figure}[!tbp]
\centering
\begin{tikzpicture}[
  font=\small,
  box/.style={draw, rounded corners=2pt, align=center, inner sep=4pt, line width=0.7pt},
  op/.style={box, fill=NavyBlue!6, draw=NavyBlue!55},
  st/.style={box, fill=black!4, draw=black!45},
  ms/.style={box, fill=ForestGreen!7, draw=ForestGreen!55},
  wl/.style={box, fill=BrickRed!6, draw=BrickRed!55},
  ar/.style={-{Latex[length=1.7mm]}, line width=0.65pt},
  lbl/.style={font=\scriptsize, inner sep=1.5pt}]

% One horizontal axis: input -> loop -> outcome. The two outcomes are placed
% symmetrically about that axis so that neither reads as the default.
\node[op]                      (gnd)   at (0,0)      {grounding\\[-1pt]\scriptsize $R$ of $T$ steps};
\node[op]                      (agent) at (4.6,0)    {agent};
\node[st]                      (store) at (7.9,0)    {store\\[-1pt]\scriptsize $n$ records, $f$ false};
\node[wl, anchor=west]         (ceil)  at (10.5,1.05){$\dfrac{n_{\mathrm{false}0}+W}{n_0+W+R}$\\[1pt]\scriptsize append-only ceiling};
\node[ms, anchor=west]         (floor) at (10.5,-1.15){$f \to 0$\\[-1pt]\scriptsize cleaned out};
\node[ms]                      (gam)   at (4.6,-1.95){$\gamma(\varphi)$\\[-1pt]\scriptsize copy function, in situ};

\node[lbl, below=1.2mm of gnd, align=center] (sched) {front / uniform / back};
% The enclosure is painted FIRST, so it cannot rule a line across the labels inside it --
% it did, through "append 1" -- and it fits (sched) as well, which is wider than the boxes it
% labels and was hanging out through the frame's left edge.
\node[draw=black!35, dashed, rounded corners=2pt, fit=(gnd)(agent)(store)(gam)(sched),
      inner sep=13pt] (fix) {};
\draw[ar] (gnd) -- node[lbl, above=0.6mm]{\emph{when}, not how much} (agent);
\draw[ar] (agent.20) to[bend left=20] node[lbl, above]{append $1$} (store.160);
\draw[ar] (store.200) to[bend left=20] node[lbl, below]{retrieve $k$} (agent.340);
\draw[ar] (agent) -- (gam);
\draw[ar, dashed, draw=black!55] (store.35)  -- (ceil.west);
\draw[ar, dashed, draw=black!55] (store.325) -- (floor.west);

\node[lbl, below=1.0mm of ceil, align=center]{$0.9938$ at $n_0{=}10$, $R{=}0$};

\node[lbl, font=\scriptsize\itshape, text=black!60, above right=0.3mm and 0pt of fix.north west]
      {fixed resource: $R$ grounding steps in a tenure of $T$};
\end{tikzpicture}
\caption{\textbf{The apparatus, and where the claims attach.} One move (retrieve $k$ of $n$,
answer, append one record), run for $T$ steps with $R$ of them grounded. Fixing $R$ and sending
$T$ to infinity makes two boundaries visible, and the alternative to the ceiling is zero rather
than a lower plateau. Claim~1 is that runs end at one boundary or the other; claim~2 that
$\gamma$ alone says which; claim~3 that the choice is made in the first few steps.}
\label{fig:apparatus}
\end{figure}

\begin{table}[!htbp]
\centering\small
\caption{The protocol. Every arm is this loop with one thing changed.}
\label{tab:protocol}
\begin{tabular}{@{}p{0.14\linewidth}p{0.82\linewidth}@{}}
\toprule
world & one tracked topic: a generated entity (\texttt{Baldrix Holdings}) with a generated
  numeric attribute (\texttt{68.91M}). The false value is a \emph{one-digit} corruption:
  plausible, never equal, unreachable by guessing \\
store at $t{=}0$ & $n_0{=}10$ records, $n_{\mathrm{false}} \in \{1,9\}$ of them asserting
  the corruption, giving $f_0 \in \{0.1, 0.9\}$ \\
each step & retrieve $k{=}8$ records uniformly from the topic; ask the plain question a
  deployed assistant would be asked; append the answer \\
prompt & never states the truth, never warns of error, so $\gamma$ measures ordinary
  behaviour rather than a response to a warning \\
tenure & $150$ steps, growing the store from $11$ to $161$ records: a $15\times$ range in
  $n$, which is what makes the $1/n$ term visible within one run \\
models & five, three seeds, temperature $0$ \\
\bottomrule
\end{tabular}
\end{table}

\paragraph{Why the world is generated.} Traceability requires it: a propagated error is attributable
only if the truth is known by construction, and with a real corpus a wrong answer is
indistinguishable from a wrong prior (Table~\ref{tab:protocol}). The same reasoning drives
leakage-free benchmark generation \citep{leakagefree2026} and the detection of model-generated text
inside a retrieval corpus \citep{huang2026llmgen}; we generate the world because we need not only
that the answer is parametrically unavailable but that every false record's \emph{provenance within
the run} is recoverable. The cost is \S\ref{sec:limits}(vii).

\paragraph{Why retrieval is uniform.} Uniform-without-replacement within the topic is the assumption
\eqref{eq:dyn} makes, so the first experiment tests the predicted dynamics rather than their
interaction with an embedding model's idiosyncrasies. It is also what most separates this apparatus
from the memory benchmarks \citep{beam2026,statebench2026,recon2026,precisionaware2026}, which hold
the store fixed and vary the reader. The choice carries a falsifier: swapping in an embedding
retriever changes which records compete, and if the behaviour moves, that displacement is the finding
rather than a nuisance. \textbf{\S\ref{sec:retriever} fires it, and the behaviour moves}, by $-0.500$
at $f_0=0.1$ and $+0.067$ at $f_0=0.9$.

\paragraph{How $\gamma$ is measured, and why it is not fitted.} At every step the number $j$ of
retrieved records asserting a false value is logged with whether the answer was the truth, and
pooling over steps gives $\gamma(j/k)$ directly, from the same runs whose trajectories are being
predicted. The curve is an observation of reading behaviour rather than a parameter tuned to fit a
trajectory, and \S\ref{sec:bimodal} puts it back into the loop in place of the model, the only sense
in which ``$\gamma$ is enough'' can be checked rather than asserted.

\begin{definition}[The copy function]
\label{def:gamma}
$\gamma(\varphi)$ is the probability that the agent asserts a false value when a fraction
$\varphi$ of the $k$ retrieved records assert it. With $f$ the false share of the store,
$\varphi \sim \mathrm{Bin}(k,f)/k$, so the per-step assertion probability is
$\geff(f) = \mathbb{E}_\varphi[\gamma(\varphi)]$ and, with one write per step,
\begin{equation}
\dot f \;=\; \frac{\geff(f) - f}{n}.
\label{eq:dyn}
\end{equation}
\end{definition}

\noindent \textbf{Equation~\eqref{eq:dyn} licenses a per-step statement and this paper's empirical
content is not carried by it.} \eqref{eq:dyn} looks like the model and is not. Its apparent content
is the denominator, a restoring force divided by a store that grows without bound, and we drew three
readings from that denominator, of which \emph{one survives}. The sign of $\geff(f)-f$ gives the
\emph{direction} of drift, which a mean-field reading would use to predict the outcome;
\S\ref{sec:bimodal} measures that four of five models drift downward at $f_0$ and are captured
anyway, so the direction is real and does not locate the boundary. We read the $1/n$ as deciding
\emph{how long} the outcome stays open; Prop.~\ref{prop:step} refutes that, the bound being vacuous
over exactly the interval in which the outcome is decided. We read the same $1/n$ as fixing the
\emph{spread} of outcomes; \S\ref{sec:limitbimodal}'s $n_0$ sweep refutes it at $1.07\times$ against
a required $2.83\times$. What is left is the per-step statement \eqref{eq:step} and the ceiling
\eqref{eq:ceiling}, both exact, both about what the store \emph{can} hold. The empirical
content of this paper is carried by $\gamma$: the existence of two outcomes, which model goes which
way, and the ordering are measurements of $\gamma(\varphi)$ put back into the loop
(\S\ref{sec:bimodal}, \S\ref{sec:gammaout}), none of it derived from \eqref{eq:dyn}. A reader who
deletes \eqref{eq:dyn} loses the notation and none of the results.

\subsection{Formal setting: three propositions}
\label{sec:formal}

Each of the following is elementary, and each is here rather than beside the measurement it governs
because each \emph{constrains what a criterion may say}. Two of this paper's frozen criteria failed
for reasons derivable from these statements before their data existed (\S\ref{sec:sched},
\S\ref{sec:power}), which is the argument for stating them up front. Proofs are in
Appendix~\ref{app:proofs}.

\begin{proposition}[Append-only ceiling]
\label{prop:ceiling}
Let the store hold $n(t)$ records at step $t$, of which $n_{\mathrm{false}}(t)$ are false, and
let no record ever be removed. Then the $n_0 - n_{\mathrm{false}}(0)$ truth records present at
$t{=}0$ are present at every later step, so the false share obeys
\begin{equation}
f(t) \;=\; \frac{n_{\mathrm{false}}(t)}{n(t)} \;\le\;
  \frac{n(t) - \bigl(n_0 - n_{\mathrm{false}}(0)\bigr)}{n(t)} \;<\; 1
\qquad\text{for all finite } t,
\label{eq:ceiling}
\end{equation}
with equality exactly when every write after $t{=}0$ is false. Under a grounding budget of $R$
steps, $R$ of the writes are true by construction, giving the per-run bound
$f_{\max}(R) = (n_{\mathrm{false}}(0) + W)/(n_0 + W + R)$ where $W$ is the number of ungrounded
writes the run actually made.
\end{proposition}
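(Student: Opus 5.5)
The plan is a counting argument on the record multiset, carried out by induction on $t$.

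\emph{Persistence of the initial truth records.} The only operation on the store is appending one record per step. So the set of records present at step $t$ contains the set present at step $t-1$, and by induction it contains the initial set of $n_0$ records. In particular, the $n_0 - n_{\mathrm{false}}(0)$ records that are true at $t{=}0$ are present at every later step. A record's truth value is a fixed property of its content and is never rewritten. Hence
\[
n(t) - n_{\mathrm{false}}(t) \;\ge\; n_0 - n_{\mathrm{false}}(0).
\]
Rearranging and dividing by $n(t) > 0$ gives the middle inequality of \eqref{eq:ceiling}.

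\emph{Strict bound below $1$.} Since $n_{\mathrm{false}}(0) < n_0$ (in the apparatus $n_{\mathrm{false}}\in\{1,9\}$ and $n_0=10$), the subtracted quantity is at least $1$. The bound is therefore at most $(n(t)-1)/n(t) < 1$ for every finite $n(t)$. This is also where the stated $(n{-}1)/n$ edge comes from, and plugging $n=161$ recovers $0.9938$.

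\emph{Equality case.} Write $n(t) = n_0 + t$ for one write per step, and split $n(t) - n_{\mathrm{false}}(t)$ as
\[
\bigl(n_0 - n_{\mathrm{false}}(0)\bigr) + \#\{\text{true writes in } (0,t]\}.
\]
Equality holds if and only if the second term vanishes, that is, if and only if every write after $t{=}0$ is false.

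\emph{Grounded bound.} Partition the writes up to the end of the run into $R$ grounded writes and $W$ ungrounded ones, so that $n = n_0 + W + R$. Grounded writes are true by construction. Ungrounded writes contribute at most $W$ false records. Then
\[
n_{\mathrm{false}} \;\le\; n_{\mathrm{false}}(0) + W,
\]
and dividing by $n$ gives $f_{\max}(R)$. I would note that this is the same persistence argument applied to the enlarged set of guaranteed-true records, which now has $n_0 - n_{\mathrm{false}}(0) + R$ members.

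The only real obstacle is definitional rather than computational: making precise that truth is an immutable attribute of a record. Append-only then genuinely means that nothing reduces the true count. The proof should state this up front, and observe that it is exactly what fails under the overwriting model of \citet{romerl2026}.
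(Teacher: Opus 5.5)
Your proof is correct and follows the paper's own argument essentially step for step. Inclusion of the record sets makes the initial truth records persist, which gives $n(t)-n_{\mathrm{false}}(t)\ge n_0-n_{\mathrm{false}}(0)$; strictness needs at least one initial truth; equality holds iff no write after $t{=}0$ is true; and the budgeted form follows from $n=n_0+W+R$ with $n_{\mathrm{false}}\le n_{\mathrm{false}}(0)+W$. The one slip is in your numerical aside: with $n_0=10$ and $150$ one-record writes the final store holds $160$ records, so the ceiling the paper computes is $159/160$, not $160/161$. Both fractions round to $0.9938$, which is why the off-by-one (also present in the protocol table's ``$11$ to $161$'') goes unnoticed.
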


\noindent $f_{\max}(R)$ moves with the budget, so any criterion stated as an absolute
threshold on $f$ silently becomes a different criterion at each $R$. \S\ref{sec:sched} records a
criterion of ours that failed by it.

\begin{proposition}[Log-dilution]
\label{prop:dilution}
A grounded write at step $t$ multiplies $f$ by $n/(n{+}1)$ with $n = n_0 + t$, so $R$ grounded
writes placed at steps $t_1 \ldots t_R$ multiply $f$ by $\prod_i (n_0{+}t_i)/(n_0{+}t_i{+}1)$,
i.e.\ deliver a total log-dilution
\begin{equation}
L \;=\; \sum_{i=1}^{R} \frac{1}{n_0 + t_i} \;+\; O\!\left(\sum_i (n_0+t_i)^{-2}\right),
\qquad
\begin{aligned}
L_{\mathrm{front}} &\approx \ln\!\frac{n_0{+}R}{n_0}, \\
L_{\mathrm{unif}}  &\approx \frac{R}{T}\ln\!\frac{n_0{+}T}{n_0}, \\
L_{\mathrm{back}}  &\approx \ln\!\frac{n_0{+}T}{n_0{+}T{-}R},
\end{aligned}
\label{eq:dilution}
\end{equation}
for the three schedules that place the $R$ writes first, evenly, and last. The three are
strictly ordered $L_{\mathrm{front}} > L_{\mathrm{unif}} > L_{\mathrm{back}}$ for every
$0 < R < T$, with no free parameter.
\end{proposition}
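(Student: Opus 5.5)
The plan is to prove the three parts in order: the one-step identity, the first-order expansion, and then the ordering. The ordering is where the content lies. I would establish it twice, once for the exact sums and once for the closed forms, because the statement's ``$\approx$'' leaves ambiguous which of the two the ordering refers to.

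\emph{Step 1 (one grounded write).} At step $t$ the store holds $n=n_0+t$ records, of which $n_{\mathrm{false}}$ are false. A grounded write appends one true record, so $n_{\mathrm{false}}$ is unchanged and $n\mapsto n+1$. This gives $f'=n_{\mathrm{false}}/(n+1)=f\cdot n/(n+1)$, by the same bookkeeping as Prop.~\ref{prop:ceiling}. Ungrounded writes between the grounded ones change $f$ by their own factors. I would therefore read the product $\prod_i (n_0+t_i)/(n_0+t_i+1)$ as the grounded writes' \emph{contribution} to $\log f$: since $\log f$ is additive over steps, this factor is exactly what the $R$ grounded writes supply, whatever the ungrounded writes do. Taking $L=-\log\prod_i(\cdot)=\sum_i\log(1+1/(n_0+t_i))$ and using $\log(1+x)=x+O(x^2)$ gives the displayed first-order form, with the stated $O(\sum_i(n_0+t_i)^{-2})$ remainder.

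\emph{Step 2 (the three schedules).} Here I compare the sums with integrals of $1/(n_0+s)$, each with an error of the same order as the remainder already carried:
\begin{itemize}
\item Front: $t_i=0,\dots,R-1$, so the sum is $\approx\int_0^R ds/(n_0+s)=\ln\frac{n_0+R}{n_0}$.
\item Back: $t_i=T-R,\dots,T-1$, so the sum is $\approx\ln\frac{n_0+T}{n_0+T-R}$.
\item Uniform: spacing $T/R$, so the sum is a Riemann sum with mesh $T/R$, $\approx\frac{R}{T}\int_0^T ds/(n_0+s)$.
\end{itemize}

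\emph{Step 3 (strict ordering), expected main obstacle.} For the exact first-order sums the argument is a rearrangement. Since $1/(n_0+t)$ is strictly decreasing in $t$, the front schedule, which takes the $R$ smallest indices, strictly maximises $\sum_i 1/(n_0+t_i)$ over all $R$-subsets of $\{0,\dots,T-1\}$. The back schedule strictly minimises it. For $0<R<T$ the uniform set coincides with neither, so it lies strictly between.

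For the closed forms I would use concavity of $g(x)=\ln(1+x/n_0)$, which has $g(0)=0$.
\begin{itemize}
\item Front against uniform: the chord slope $g(x)/x$ is strictly decreasing, so $g(R)/R>g(T)/T$, which is $L_{\mathrm{front}}>L_{\mathrm{unif}}$.
\item Uniform against back: $L_{\mathrm{back}}=g(T)-g(T-R)$, so $L_{\mathrm{back}}<\frac{R}{T}g(T)$ is equivalent to $g(T-R)>(1-\frac{R}{T})g(T)$. This holds strictly because $g$ lies above its chord from $0$ to $T$.
\end{itemize}

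The delicate point is the uniform schedule when $T/R$ is not an integer. Its index set must then be fixed (for example $t_i=\lfloor (i-1)T/R\rfloor$) before the rearrangement argument applies. I would state that convention and note that the rearrangement argument is independent of it, since any admissible placement other than the $R$ first or $R$ last indices lies strictly between the two. No parameter is introduced at any step, which gives the ``no free parameter'' clause.
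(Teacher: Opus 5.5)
For the exact first-order sums your argument is the paper's: $(n_0+t)^{-1}$ is strictly decreasing, so the $R$ smallest indices strictly maximise the sum over $R$-subsets of $\{0,\dots,T-1\}$, and the $R$ largest strictly minimise it. Your concavity argument for the closed forms is correct and is not in the paper. It uses $g(x)=\ln(1+x/n_0)$ with $L_{\mathrm{front}}=g(R)$, $L_{\mathrm{unif}}=\frac{R}{T}g(T)$ and $L_{\mathrm{back}}=g(T)-g(T-R)$. It is the part of your proposal that earns the clause ``for every $0<R<T$'', because it needs no convention for where the uniform writes sit. You can sharpen it: for front and back the products telescope, $\prod_{t=0}^{R-1}\frac{n_0+t}{n_0+t+1}=\frac{n_0}{n_0+R}$, and likewise at the back. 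Those two closed forms are therefore the exact log-dilution, and only $L_{\mathrm{unif}}$ is genuinely approximate.

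The step that fails is ``for $0<R<T$ the uniform set coincides with neither'', and it fails under the convention you propose. With $t_i=\lfloor (i-1)T/R\rfloor$ you always have $t_1=0$.
\begin{itemize}
\item At $R=1$ the uniform set is $\{0\}$, which is the front set, for every $T$. So the trouble is not confined to non-integer $T/R$.
\item At $R=T-1$ you get $t_i=i-1$ for every $i\le R$, which is the front set again.
\end{itemize}
In both cases $L_{\mathrm{front}}=L_{\mathrm{unif}}$. At $(R,T)=(1,2)$ no convention helps, since the only $1$-subsets are $\{0\}$ and $\{1\}$. The exact-sum ordering therefore needs the hypothesis that the uniform set is neither extreme; with your convention that is $2\le R\le T-2$. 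The unqualified strict ordering is a statement about the closed forms, and that is what your concavity argument proves.

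The paper's proof asserts ``the uniform grid is neither extreme'' with no qualification. Its grid is left-anchored: the quoted $L_{\mathrm{unif}}=0.332$ is $\frac1{10}\sum_{m=1}^{15}\frac1m$, i.e.\ $t_i=0,10,\dots,140$. So it has the same exception.

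A smaller overstatement: for the uniform grid, the integral-comparison error is not of the order of the carried remainder. For $t_i=(i-1)T/R$, the left Riemann sum of a decreasing function only gives $0\le\sum_i(n_0+t_i)^{-1}-\frac{R}{T}\ln\frac{n_0+T}{n_0}\le\frac1{n_0}-\frac1{n_0+T}$. At $n_0=10$, $R=15$, $T=150$ the gap is $0.055$, against $\sum_i(n_0+t_i)^{-2}\approx0.016$. That is harmless for the ``$\approx$'', which the paper only claims to $O(n_0^{-1})$.
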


\noindent The ordering is parameter-free and is the schedule axis's claim; the magnitudes are not,
since a bounded observable must compress an unbounded one.

\begin{proposition}[Resolution at $s$ seeds]
\label{prop:se}
If $f_{\mathrm{end}}$ is two-valued with choice probability $p$, then a per-model mean over $s$
seeds has
\begin{equation}
\mathrm{se}(\hat p) \;=\; \sqrt{p(1-p)/s}
\;\;\Longrightarrow\;\;
\mathrm{se} \approx 0.29 \text{ at } s{=}3,
\qquad 0.19 \text{ at } s{=}7 ,
\label{eq:se}
\end{equation}
and for a quantity pooled over $m$ models sharing the same $s$ seeds, with intraclass
correlation $\rho$ over seeds, the effective sample size is $s m / \bigl(1 + (m{-}1)\rho\bigr)$.
\end{proposition}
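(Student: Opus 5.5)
The statement has two parts, and I would prove each from first principles without using the dynamics \eqref{eq:dyn}.

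For the first part, model each seed's end state as a Bernoulli indicator $X_i = \mathbf{1}[\text{captured}]$ with $\Pr(X_i=1)=p$. This is legitimate because $f_{\mathrm{end}}$ is assumed two-valued, so any affine read-out of it is a relabelling of $X_i$. Seeds of one model are independent, so $\hat p = s^{-1}\sum_i X_i$ has variance $p(1-p)/s$, which gives the square root in \eqref{eq:se}. The numerical values should be read as worst-case values. Since $p(1-p)\le 1/4$ with equality at $p=1/2$, the bound is $\mathrm{se}\le 1/(2\sqrt s)$. This gives $1/(2\sqrt3)\approx 0.289$ at $s=3$ and $1/(2\sqrt7)\approx 0.189$ at $s=7$. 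I would state this explicitly, because the displayed figures hold only at the maximising $p$ and are upper bounds elsewhere. I would also remark that a two-valued outcome is what makes the bound attainable. A continuous $f_{\mathrm{end}}$ in $[0,1]$ with mean $p$ has variance at most $p(1-p)$ by $\mathbb{E}[f^2]\le\mathbb{E}[f]$, so the binary case is the extremal one.

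For the second part, index runs by model $j\in\{1,\dots,m\}$ and seed $i\in\{1,\dots,s\}$. I would assume exchangeable runs with common variance $\sigma^2$ and define the intraclass correlation over seeds as $\mathrm{Corr}(X_{ij},X_{ij'})=\rho$ for $j\ne j'$ (same seed, different model). Runs on different seeds are independent. The pooled mean is $\bar X = (sm)^{-1}\sum_{i}\sum_j X_{ij}$. Its variance splits by seed: each seed block $S_i=\sum_j X_{ij}$ has
\[
\mathrm{Var}(S_i)=m\sigma^2+m(m-1)\rho\sigma^2=m\sigma^2\bigl(1+(m-1)\rho\bigr).
\]
The blocks are independent, so $\mathrm{Var}(\bar X)=\sigma^2\bigl(1+(m-1)\rho\bigr)/(sm)$. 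Matching this to $\sigma^2/N_{\mathrm{eff}}$ yields $N_{\mathrm{eff}}=sm/\bigl(1+(m-1)\rho\bigr)$, the Kish design effect with cluster $=$ seed.

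The main obstacle is not the algebra but the modelling assumption that the cross-model covariance is the same for every pair of models. When the models have different $p_j$, the variances are heterogeneous and a single $\rho$ is only an average. I would handle this by defining $\rho$ as the ratio of average off-diagonal covariance to average variance within a seed block. With that definition the displayed formula is exact for the variance of $\bar X$ even under heterogeneity, and I would state that this is the sense in which $\rho$ is meant.
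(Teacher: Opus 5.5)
Your proof is correct and follows the paper's own argument: the first half is the binomial standard error of $s$ independent Bernoulli$(p)$ draws, and the second is the same exchangeable within-seed correlation computation, $\operatorname{Var}(\bar X)=\sigma^2[1+(m-1)\rho]/(sm)$, read as the variance of $sm/[1+(m-1)\rho]$ independent observations. Your two additions are sound refinements that the paper leaves implicit. The first is that the quoted $0.29$ and $0.19$ are the $p=1/2$ maxima $1/(2\sqrt{s})$. The second is that defining $\rho$ as average within-seed covariance over average variance keeps the formula exact when the models' $p_j$ differ; the paper's text only notes that its quoted $\rho$ is a mean over heterogeneous pairs.
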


\noindent \textbf{Two scopings the second half needs, both measured rather than assumed.} It is a
statement about a pooled \emph{level}; a within-seed \emph{contrast} cancels the seed effect and
carries a design effect near one, measured at $1.14$--$1.47$ against the level's $3.56$--$3.66$ on the
same $44$-seed runs (\S\ref{sec:knife}). It writes one $\rho$ for $m$ models: measured pair by pair
at $44$ seeds, $\rho$ runs $+0.18$ to $+0.99$, so the implied design effect runs $1.73$--$4.95$ and
the single figure this paper quotes is a mean over heterogeneous pairs. Every correction's direction
is unchanged and the mean $\rho$ is $+0.53$ to $+0.65$; what is not safe is reading $3.75$ as a
per-pair constant, and \S\ref{sec:power} carries the range.

\begin{proposition}[Per-step bound]
\label{prop:step}
Each step appends exactly one record, so
\begin{equation}
\bigl|f(t{+}1) - f(t)\bigr| \;\le\; \frac{1}{n_0+t+1},
\qquad
\sum_{s=t}^{T-1}\frac{1}{n_0+s+1} \;=\; \log\frac{n_0+T}{n_0+t} + O(n_0^{-1}).
\label{eq:step}
\end{equation}
The cumulative bound is therefore \emph{logarithmic} in the remaining tenure, and it constrains
nothing until it falls below $1$. At this paper's apparatus ($n_0 = 10$, $T = 150$) that happens
at $t = 49$: for the first $49$ steps \eqref{eq:step} permits $f$ to reach either endpoint from
anywhere.
\end{proposition}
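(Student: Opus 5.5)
The plan is to prove the per-step inequality by writing one step of the loop out exactly, then to sum it, and finally to evaluate the sum at the apparatus constants. Let $n = n(t) = n_0 + t$ be the store size at step $t$, since each step appends exactly one record and nothing is removed (the same bookkeeping as in Prop.~\ref{prop:ceiling}). Let $F = n_{\mathrm{false}}(t)$, so $f(t) = F/n$. The write at step $t$ adds one record, with indicator $x \in \{0,1\}$ of whether it is false; a grounded step is just the case $x = 0$. Then
\[
f(t{+}1) - f(t) \;=\; \frac{F + x}{n+1} - \frac{F}{n} \;=\; \frac{x - f(t)}{n+1}.
\]
Since $f(t), x \in [0,1]$, we have $|x - f(t)| \le 1$, which gives $|f(t{+}1)-f(t)| \le 1/(n_0+t+1)$. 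This is also the exact form of the multiplicative factor $n/(n{+}1)$ used in Prop.~\ref{prop:dilution}, which is the case $x = 0$. Equality requires $f(t) \in \{0,1\}$, and I will remark that $f(t)=1$ is excluded by \eqref{eq:ceiling}. This costs nothing, because the bound is only claimed as an inequality.

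Next, the triangle inequality gives $|f(T) - f(t)| \le \sum_{s=t}^{T-1} 1/(n_0+s+1)$. This sum is the harmonic difference $H_{n_0+T} - H_{n_0+t}$. I will use $H_m = \ln m + \gamma_{\mathrm{E}} + \tfrac{1}{2m} + O(m^{-2})$ to get $\log\frac{n_0+T}{n_0+t} + O\bigl((n_0+t)^{-1}\bigr)$, and since $t \ge 0$ the error term is $O(n_0^{-1})$. A cleaner alternative is the integral sandwich
\[
\int_{n_0+t+1}^{n_0+T+1}\frac{du}{u} \;\le\; \sum_{s=t}^{T-1}\frac{1}{n_0+s+1} \;\le\; \int_{n_0+t}^{n_0+T}\frac{du}{u},
\]
which gives explicit two-sided bounds with the same error. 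That settles the logarithmic form.

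The last step is numeric, and it is where I expect the only real care to be needed: the definition of "constrains nothing". I take it to mean that the cumulative bound is at least $1$, the diameter of $[0,1]$. Then the inequality excludes no endpoint, because a path of $T-t$ steps each of size at most $1/(n_0+s+1)$ may traverse the whole interval. With the leading term, the condition $\log\frac{160}{10+t} < 1$ holds if and only if $10 + t > 160/e \approx 58.86$, that is, $t \ge 49$. I will check that the exact harmonic sum gives the same integer. The $O(n_0^{-1})$ correction is about $1/(2\cdot 59)$ near the threshold, and the margin $59 - 58.86$ in $\log$ units is about $0.0024$. So the correction could in principle shift the crossing by one step, and I will evaluate $H_{160} - H_{10+t}$ directly at $t = 48, 49, 50$ to confirm where it crosses $1$. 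If the exact crossing differs from $49$, I will state the threshold relative to the logarithmic expression, which is the one quoted in \eqref{eq:step}.

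Finally, I will note that "permits $f$ to reach either endpoint" is a statement about what the bound fails to exclude, not a claim of actual reachability. Actual reachability of the upper endpoint is further limited by \eqref{eq:ceiling}.
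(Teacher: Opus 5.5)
Your proposal is correct and follows the paper's proof essentially step for step: the paper also uses the exact one-write identity $f(t{+}1)-f(t) = (x-f(t))/(n_0+t+1)$, sums it by the triangle inequality, compares the sum with $\int du/u$, and computes directly the least $t$ at which the sum falls below $1$. One simplification: your integral sandwich settles the threshold by itself, since at $t=48$ the lower bound is $\ln(161/59) > 1$ and at $t=49$ the upper bound is $\ln(160/59) < 1$, so evaluating $H_{160}-H_{10+t}$ directly only confirms $t=49$ and is not needed.
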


\begin{warnbox}
\textbf{This corrects a sentence of ours, which is why the proposition is here.} \S\ref{sec:early}
wrote that the outcome is decided early \emph{because} \eqref{eq:dyn}'s restoring force carries
$1/n$, and called the timescale ``a consequence rather than a caveat''. Prop.~\ref{prop:step} makes
that argument quantitative and does not support it: the bound is vacuous for the first $49$ of $150$
steps, while the measured decision point is $t{=}2$. The early decision is a measurement
($12/15$ at $t{=}2$, no better at $t{=}30$) and not a consequence of the $1/n$ factor, which
gives only the per-step bound. Whatever concentrates the decision into the first few steps is derived
nowhere in this paper, and we say so rather than gesturing at the denominator.
\end{warnbox}

\noindent The second half of Prop.~\ref{prop:se} is the one that caught us. \eqref{eq:se} is
\emph{per model}; the design-effect factor $1+(m{-}1)\rho$ applies to everything pooled across models,
which is most of what this paper reports. \S\ref{sec:power} measures $\rho$ and gets $3.75$ at nine
seeds, so $45$ runs carry the information of $12$; \S\ref{sec:knife} measures it again at forty-four
and separates the case it does and does not apply to.

\begin{scopebox}
\textbf{One convention, adopted after a scoring failure elsewhere in this project.} Correctness
is re-derived from stored raw answer strings at analysis time, never recorded as a boolean during
the run, so fixing a scorer costs an analysis re-run rather than an experiment re-run. It caught
a field-name error in this paper's own figure code, which reported a mean gap of $0.000$ where
the correct value is $+0.383$.
\end{scopebox}

\section{Results}
\label{sec:results}

\subsection{Bimodal outcomes, predicted by $\gamma$}
\label{sec:bimodal}

\noindent Figure~\ref{fig:bimodal} is this subsection in two panels: every run at $f_0{=}0.9$
with the interval between its two modes empty, and each model's capture rate against what its own
measured $\gamma$ predicts; Table~\ref{tab:deadband} is the same outcomes seed by seed.

\begin{figure}[!tbp]
\centering
% GENERATED by t1_grounding/figures/make_t1_tikz.py -- do not edit by hand.
% Every coordinate is recomputed from t1_grounding/results/ at generation time.

\begin{tikzpicture}
\begin{axis}[name=pa, width=0.46\linewidth, height=4.1cm, axis on top, tick align=outside, tick pos=left, axis line style={figSoft, line width=0.5pt}, tick label style={font=\scriptsize, color=figInk}, label style={font=\scriptsize, color=figInk}, title style={font=\small, color=figInk, yshift=1pt}, grid=none, scale only axis,
  xmin=0, xmax=1.02, ymin=-1, ymax=15,
  xlabel={$f_{\mathrm{end}}$ after 150 steps}, ylabel={run, sorted},
  ytick=\empty, title={(a) outcomes at $f_0=0.9$ are bimodal}]
\addplot[draw=none, fill=figGrid, fill opacity=0.55, forget plot]
  coordinates {(0.35,-1) (0.50,-1) (0.50,15) (0.35,15)} \closedcycle;
\node[font=\tiny, figSoft, align=center] at (axis cs:0.425,9.3)
  {dead band\\0 of 15};
\addplot[only marks, mark=*, mark size=1.5pt, figA] coordinates {(0.0688,0.0000) (0.0750,1.0000) (0.0938,2.0000) (0.1062,3.0000) (0.1125,4.0000) (0.1250,5.0000) (0.2500,6.0000) (0.3063,7.0000)};
\addplot[only marks, mark=*, mark size=1.5pt, figB] coordinates {(0.5125,8.0000) (0.9375,9.0000) (0.9812,10.0000) (0.9875,11.0000) (0.9875,12.0000) (0.9938,13.0000) (0.9938,14.0000)};
\draw[figPred, <->, line width=0.7pt] (axis cs:0.5125,8.5)
  -- (axis cs:0.9375,8.5);
\node[font=\tiny, figPred, align=center, anchor=south]
  at (axis cs:0.7250,9.0)
  {largest gap 0.43\\$=6.4\times$ mean gap};
\end{axis}
\begin{axis}[name=pb, at={(pa.east)}, xshift=1.35cm, anchor=west,
  width=0.36\linewidth, height=4.1cm, axis on top, tick align=outside, tick pos=left, axis line style={figSoft, line width=0.5pt}, tick label style={font=\scriptsize, color=figInk}, label style={font=\scriptsize, color=figInk}, title style={font=\small, color=figInk, yshift=1pt}, grid=none, scale only axis,
  xmin=-0.05, xmax=1.28, ymin=-0.16, ymax=1.15,
  xlabel={simulated from $\gamma$ (60 seeds)}, ylabel={observed (3 seeds)},
  title={(b) $\gamma$ alone predicts which way}]
\addplot[figSoft, dashed, line width=0.6pt, forget plot] coordinates {(0,0) (1,1)};
\node[font=\tiny, figSoft, rotate=38] at (axis cs:0.56,0.46) {$y=x$};
\addplot[only marks, mark=*, mark size=1.7pt, figA] coordinates {(0.120,0.000)};
\node[font=\tiny, align=left, fill=white, fill opacity=0.85, text opacity=1, inner sep=1pt, text=figA, anchor=west] at (axis cs:0.150,-0.070) {\texttt{gemma-27b}};
\addplot[only marks, mark=*, mark size=1.7pt, figC] coordinates {(0.480,0.333)};
\node[font=\tiny, align=left, fill=white, fill opacity=0.85, text opacity=1, inner sep=1pt, text=figC, anchor=west] at (axis cs:0.510,0.193) {\texttt{phi-4}};
\addplot[only marks, mark=*, mark size=1.7pt, figB] coordinates {(0.620,0.333)};
\node[font=\tiny, align=left, fill=white, fill opacity=0.85, text opacity=1, inner sep=1pt, text=figB, anchor=west] at (axis cs:0.650,0.263) {\texttt{granite-8b}};
\addplot[only marks, mark=*, mark size=1.7pt, figD] coordinates {(0.600,0.667)};
\node[font=\tiny, align=left, fill=white, fill opacity=0.85, text opacity=1, inner sep=1pt, text=figD, anchor=west] at (axis cs:0.630,0.597) {\texttt{ministral-8b}};
\addplot[only marks, mark=*, mark size=1.7pt, figE] coordinates {(1.000,1.000)};
\node[font=\tiny, align=left, fill=white, fill opacity=0.85, text opacity=1, inner sep=1pt, text=figE, anchor=east] at (axis cs:0.970,0.930) {\texttt{qwen3-8b}};
\end{axis}
\end{tikzpicture}
\caption{\textbf{The outcome is a coin flip, and the copy function says which way it is
weighted.} (a) Every run at $f_0{=}0.9$, sorted: eight end below $0.31$, seven above
$0.51$, and the interval between them is empty. (b) Replacing the model by its own measured
$\gamma$ and simulating the same loop lands within one seed of each model's capture rate with
nothing fitted, Spearman $+0.82$, one adjacent inversion at a simulated gap of $0.02$.}
\label{fig:bimodal}
\end{figure}

\begin{table}[!htbp]
\centering\small
\caption{Outcomes at $f_0{=}0.9$, per seed. The quantity a decaying-mean account would
summarise has an empty neighbourhood around its own mean.}
\label{tab:deadband}
% GENERATED by t1_grounding/figures/make_t1_tikz.py from results/converge.json.
% The band's edges are the runs either side of the largest gap, not constants.
\begin{tabular}{@{}lp{0.52\linewidth}c@{}}
\toprule
outcome & $f_{\mathrm{end}}$ per run & count \\
\midrule
escaped & 0.069 \; 0.075 \; 0.094 \; 0.106 \; 0.113 \; 0.125 \; 0.250 \; 0.306 & 8/15 \\
\emph{dead band} $[0.35,0.50]$ & --- & \textbf{0/15} \\
captured & 0.512 \; 0.938 \; 0.981 \; 0.988 \; 0.988 \; 0.994 \; 0.994 & 7/15 \\
\midrule
\multicolumn{2}{@{}l}{largest gap in the sorted outcomes / mean gap}
  & \textbf{6.4}$\times$ \\
\bottomrule
\end{tabular}

\end{table}

\paragraph{The model's whole contribution is $\gamma$, to within one seed.} Replacing the agent with a
rule that asserts a false value with probability $\gamma(j)$ (the measured curve, $j$ the number of
false records retrieved) and simulating the same loop puts each model's capture rate within one seed
of the observed count (Table~\ref{tab:gammasim}):

\begin{table}[!htbp]
\centering\small
\caption{Simulated from the measured $\gamma$ alone, $60$ seeds, against the observed
$3$-seed counts. No parameter is fitted; the curve is measured in situ during the same
runs.}
\label{tab:gammasim}
\begin{tabular}{@{}lcc@{}}
\toprule
model & capture rate simulated from $\gamma$ & observed \\
\midrule
\texttt{gemma-3-27b}    & 0.12 & 0/3 \\
\texttt{phi-4}          & 0.48 & 1/3 \\
\texttt{granite-4.1-8b} & 0.62 & 1/3 \\
\texttt{ministral-8b}   & 0.60 & 2/3 \\
\texttt{qwen3-8b}       & 1.00 & 3/3 \\
\bottomrule
\end{tabular}
\end{table}

\begin{claimbox}
Every observed count lies inside the binomial range of its simulated probability, and every simulated
rate lands within one seed ($1/3$) of the count observed; the largest gap is \texttt{granite} at
$0.29$, or $0.86$ seed units. What the language model brings to this loop is one function of one
variable, and the bimodality and the timescale are arithmetic on top of it. \emph{The ordering is
reproduced with one exception, which we state rather than sort away}: $\gamma$ puts \texttt{granite}
($0.62$) marginally above \texttt{ministral} ($0.60$) while the runs reverse them. The simulated gap
is $0.02$, which three seeds cannot resolve either way, so the defensible summary at three seeds is
Spearman $+0.82$ with one adjacent inversion, not exact reproduction.
\end{claimbox}

\subsubsection{Two objections to the empty interval}
\label{sec:answerspace}

The apparatus plants one false value and corrupts one digit of it, so the store's realised answer
space is small and a reader is entitled to ask whether the empty interval is a fact about a
self-writing loop or arithmetic on a two-valued outcome. The question has two halves with different
answers, both settled without an API call.

\paragraph{A two-valued answer space does not empty the interval.} By construction
$f_{\mathrm{end}} = (n_{\mathrm{false},0} + \#\{\text{false writes}\})/(n_0+T)$, so a reader
whose copy probability does not depend on the store lands where that constant puts it, in the
interior. Running the identical urn with the model replaced by a coin of fixed bias puts $113$ of
$300$ runs inside the dead band at $p{=}0.3$ and $82$ of $300$ at $p{=}0.5$
(Table~\ref{tab:constp}). \emph{What empties the interval is the copy probability rising with the
number of false records retrieved}, a property of the reader and the object $\gamma(j)$ measures. The
two-valued space is not sufficient, so it is not the explanation.

\paragraph{The premise is not uniformly true either, and our two arms disagree about what
follows.} E-bimodal2's runs log, step by step, whether each write was the truth, the planted
falsehood or a value the store had not seen. Writing $q$ for the share of non-truth writes that
took a fresh value, the median run has $q{=}0$, while \texttt{qwen3-8b} has a median of $43$
novel-value \emph{writes} in $150$ steps, $q = 0.33$, and $0$ of its $44$ runs in the dead band;
across all $220$ runs the $26$ with twenty or more novel writes have $0$ in the band and the $161$
with none have $6$ (Figure~\ref{fig:answerspace}(a)). That log counts \emph{writes} and not
distinct values, which matters, because a novel value once written is retrievable and can be
reinforced.

E-real resolves the ambiguity and points the other way. Its planted falsehood is \emph{another real
capital} rather than a one-digit corruption (\texttt{realfacts.py}), its models produce novel errors
at a pooled $q$ of $0.26$, and it logs the number of distinct values in each retrieved window. On the
$360$ real-fact runs at $f_0{=}0.9$ the runs with five or more novel writes carry a mean of $2.51$
distinct retrieved values against $1.66$ for the rest, so their stores really did go multi-valued,
and they hold $5$ of $41$ runs in the dead band against $6$ of $311$, a factor of $6.4$. Within
\texttt{granite-4.1-8b} alone it is $4/39$ against $1/141$. The registered criterion P-T1-N6 still
passes, at $0.014$ overall against the synthetic control's $0.125$; what does not survive is the
reading that a richer answer space leaves the interval empty.

\begin{figure}[!tbp]
\centering
% GENERATED by t1_grounding/figures/make_t1_tikz.py -- do not edit by hand.
% Every coordinate is recomputed from t1_grounding/results/ at generation time.

\begin{tikzpicture}
\begin{axis}[name=pa, width=0.35\linewidth, height=4.2cm, axis on top, tick align=outside, tick pos=left, axis line style={figSoft, line width=0.5pt}, tick label style={font=\scriptsize, color=figInk}, label style={font=\scriptsize, color=figInk}, title style={font=\small, color=figInk, yshift=1pt}, grid=none, scale only axis,
  xmode=log, xmin=0.35, xmax=200, ymin=-0.03, ymax=1.05,
  xtick={0.5,1,10,100}, xticklabels={$0$,$1$,$10$,$100$},
  xlabel={novel-value writes, per $150$ steps},
  ylabel={$f_{\mathrm{end}}$},
  title={(a) how far the store scattered}]
\addplot[draw=none, fill=figGrid, fill opacity=0.55, forget plot] coordinates
  {(0.35,0.35) (200,0.35) (200,0.50) (0.35,0.50)} \closedcycle;
\node[font=\tiny, figSoft, anchor=west] at (axis cs:0.4,0.425) {dead band};
\addplot[only marks, mark=*, mark size=1.1pt, figA, opacity=0.8] coordinates {(3.000,0.994) (3.000,0.994) (4.000,0.994) (0.500,0.994) (1.000,0.994) (3.000,0.075) (2.000,0.994) (0.500,0.994) (0.500,0.994) (0.500,0.994) (9.000,0.113) (0.500,0.994) (0.500,0.994) (5.000,0.119) (1.000,0.994) (10.000,0.306) (4.000,0.087) (0.500,0.994) (2.000,0.075) (2.000,0.994) (2.000,0.100) (1.000,0.994) (8.000,0.381) (3.000,0.181) (2.000,0.975) (0.500,0.825) (0.500,0.994) (2.000,0.994) (0.500,0.994) (1.000,0.069) (0.500,0.994) (0.500,0.994) (4.000,0.087) (9.000,0.200) (3.000,0.994) (0.500,0.994) (0.500,0.994) (3.000,0.994) (3.000,0.081) (2.000,0.338) (1.000,0.994) (0.500,0.994) (4.000,0.994) (1.000,0.994)};
\addplot[only marks, mark=*, mark size=1.1pt, figB, opacity=0.8] coordinates {(0.500,0.994) (0.500,0.994) (0.500,0.975) (0.500,0.994) (0.500,0.938) (0.500,0.250) (0.500,0.831) (0.500,0.994) (0.500,0.994) (0.500,0.994) (0.500,0.281) (0.500,0.994) (0.500,0.994) (0.500,0.994) (0.500,0.994) (0.500,0.894) (0.500,0.169) (0.500,0.994) (0.500,0.169) (0.500,0.994) (0.500,0.250) (0.500,0.994) (0.500,0.394) (0.500,0.169) (0.500,0.306) (0.500,0.869) (0.500,0.169) (0.500,0.775) (0.500,0.938) (0.500,0.900) (0.500,0.994) (0.500,0.719) (0.500,0.150) (0.500,0.206) (0.500,0.994) (0.500,0.613) (0.500,0.994) (0.500,0.850) (0.500,0.388) (0.500,0.269) (0.500,0.994) (0.500,0.994) (0.500,0.994) (0.500,0.994)};
\addplot[only marks, mark=*, mark size=1.1pt, figD, opacity=0.8] coordinates {(0.500,0.994) (0.500,0.994) (0.500,0.994) (0.500,0.994) (0.500,0.994) (0.500,0.425) (0.500,0.856) (0.500,0.919) (0.500,0.994) (0.500,0.994) (0.500,0.588) (0.500,0.775) (0.500,0.856) (0.500,0.212) (0.500,0.919) (0.500,0.988) (0.500,0.875) (0.500,0.994) (0.500,0.131) (0.500,0.994) (0.500,0.194) (0.500,0.994) (0.500,0.062) (0.500,0.831) (0.500,0.819) (0.500,0.456) (0.500,0.994) (0.500,0.963) (0.500,0.994) (0.500,0.931) (0.500,0.969) (0.500,0.994) (0.500,0.062) (0.500,0.994) (0.500,0.994) (0.500,0.925) (0.500,0.994) (0.500,0.944) (0.500,0.263) (0.500,0.994) (0.500,0.994) (0.500,0.850) (0.500,0.856) (0.500,0.994)};
\addplot[only marks, mark=*, mark size=1.1pt, figC, opacity=0.8] coordinates {(0.500,0.994) (0.500,0.994) (0.500,0.994) (0.500,0.994) (0.500,0.994) (0.500,0.119) (0.500,0.994) (0.500,0.950) (0.500,0.994) (0.500,0.956) (0.500,0.125) (0.500,0.956) (0.500,0.938) (0.500,0.994) (0.500,0.994) (0.500,0.188) (0.500,0.094) (0.500,0.994) (0.500,0.156) (0.500,0.994) (0.500,0.094) (0.500,0.994) (0.500,0.825) (0.500,0.350) (0.500,0.475) (0.500,0.556) (0.500,0.200) (0.500,0.994) (0.500,0.994) (0.500,0.994) (0.500,0.994) (0.500,0.994) (0.500,0.263) (0.500,0.106) (0.500,0.975) (0.500,0.994) (0.500,0.994) (0.500,0.994) (0.500,0.975) (0.500,0.144) (0.500,0.994) (0.500,0.994) (0.500,0.994) (0.500,0.994)};
\addplot[only marks, mark=*, mark size=1.1pt, figE, opacity=0.8] coordinates {(0.500,0.975) (123.000,0.981) (4.000,0.981) (128.000,0.963) (74.000,0.963) (75.000,0.944) (1.000,0.994) (121.000,0.938) (0.500,0.988) (0.500,0.944) (98.000,0.931) (3.000,0.850) (51.000,0.994) (73.000,0.981) (89.000,0.994) (34.000,0.831) (65.000,0.981) (0.500,0.963) (101.000,0.956) (0.500,0.994) (85.000,0.956) (55.000,0.950) (34.000,0.550) (0.500,0.669) (26.000,0.969) (99.000,0.956) (0.500,0.956) (68.000,0.919) (0.500,0.994) (71.000,0.988) (0.500,0.994) (0.500,0.981) (44.000,0.681) (108.000,0.894) (0.500,0.994) (0.500,0.981) (128.000,0.981) (0.500,0.994) (0.500,0.988) (93.000,0.981) (42.000,0.950) (82.000,0.881) (64.000,0.994) (15.000,0.956)};
% The label that stood here said "qwen3-8b: median 43, 0/44 in the band", which the caption
% says in its own words two lines below: "qwen3-8b writes a median of 43 while still being
% captured on 44 of 44 seeds with nothing in the dead band". Its ground sat on the series.
\end{axis}
\begin{axis}[name=pb, at={(pa.east)}, xshift=1.1cm, anchor=west,
  width=0.35\linewidth, height=4.2cm, axis on top, tick align=outside, tick pos=left, axis line style={figSoft, line width=0.5pt}, tick label style={font=\scriptsize, color=figInk}, label style={font=\scriptsize, color=figInk}, title style={font=\small, color=figInk, yshift=1pt}, grid=none, scale only axis,
  xmin=-0.03, xmax=1.03, ymin=-0.22, ymax=1.05, ytick={0,0.25,0.5,0.75,1},
  xlabel={$q$ (fresh share) or $p$ (copy rate)},
  ylabel={share of runs}, title={(b) what empties the interval}]
\addplot[figB, mark=*, mark size=1.6pt, line width=0.8pt] coordinates
  {(0.000,1.000) (0.250,0.963) (0.500,0.260) (0.750,0.000) (1.000,0.000)};
\addplot[figPred, mark=square*, mark size=1.6pt, line width=0.8pt, dashed] coordinates
  {(0.000,0.000) (0.250,0.035) (0.500,0.603) (0.750,0.168) (1.000,0.000)};
\addplot[figC, mark=triangle*, mark size=1.9pt, line width=0.8pt, densely dotted] coordinates
  {(0.100,0.000) (0.300,0.383) (0.500,0.278) (0.700,0.000) (0.900,0.000)};
\draw[figSoft, dotted, line width=0.7pt] (axis cs:0.331,-0.03)
  -- (axis cs:0.331,1.05);
\node[font=\tiny, figSoft, anchor=south west] at (axis cs:0.341,0.66)
  {max $q$ seen $=0.33$};
\node[font=\tiny, align=left, fill=white, fill opacity=0.85, text opacity=1, inner sep=1pt, text=figB, anchor=north east] at (axis cs:1.0,0.98) {capture, $\gamma$ reader};
\node[font=\tiny, align=left, fill=white, fill opacity=0.85, text opacity=1, inner sep=1pt, text=figPred, anchor=north west] at (axis cs:0.30,0.88)
  {dead band, $\gamma$ reader};
\node[font=\tiny, align=left, fill=white, fill opacity=0.85, text opacity=1, inner sep=1pt, text=figC, anchor=north west] at (axis cs:0.01,0.47)
  {dead band, \emph{state-indep.} reader};
% the two MEASURED points, laid on the scripted curve they agree and disagree with
\addplot[only marks, mark=x, mark size=3pt, line width=1pt, figInk] coordinates
  {(0.331,0.000) (0.909,0.122)};
\node[font=\tiny, align=left, fill=white, fill opacity=0.85, text opacity=1, inner sep=1pt, text=figInk, anchor=north, align=center] at (axis cs:0.331,-0.05)
  {measured:\\synthetic};
\node[font=\tiny, align=left, fill=white, fill opacity=0.85, text opacity=1, inner sep=1pt, text=figInk, anchor=south east, align=right]
  at (axis cs:1.03,0.32) {measured:\\E-real};
\draw[figInk, line width=0.35pt] (axis cs:0.945,0.315)
  -- (axis cs:0.909,0.172);
\end{axis}
\end{tikzpicture}
\caption{\textbf{What the empty interval needs, and what it does not.} (a) Every E-bimodal2 run
by the number of novel-value \emph{writes} it made; the log counts writes, not distinct values.
The three models that never scatter sit on the left edge and \texttt{qwen3-8b} writes a median of
$43$ while still being captured on $44$ of $44$ seeds with nothing in the dead band. (b) Scripted
controls, with the two measured points laid on them. The state-independent reader (dotted) fills
the band, so bimodality is not arithmetic on a two-valued space. The $\gamma$ reader deciding on
the largest single-value block (dashed) keeps the band empty to $q\approx0.25$ and refills it at
$q{=}0.5$. The two crosses are measured: E-bimodal2's highest-$q$ model, which contradicts that
curve, and E-real's multi-valued subset, which follows it. \emph{The arms disagree and the
disagreement is the result}; they differ in whether a novel value is another variant of the same
numeral or a different city, which is why \S\ref{sec:answerspace} relocates the open question from
count to distinguishability.}
\label{fig:answerspace}
\end{figure}

\begin{claimbox}
\textbf{What this buys, what it concedes, and what the arm we then bought settled.} It refutes the
strong form of the objection: the interval is not empty by arithmetic, and a state-independent reader
on the same two-valued urn fills it. It concedes the weak form as a question, which the two arms
above answer differently: on the synthetic world the highest-$q$ model has an empty band, on real
facts the multi-valued runs have $6.4\times$ the band occupancy of the two-valued ones. \emph{We
registered an arm and ran it}: three cells at $v{=}3$ planted falsehoods differing only in how far
apart they are, $300$ runs at \$2.45 (\S\ref{sec:multivalue}). It settles less than we hoped and more
than we expected: neither the \emph{number} of planted falsehoods nor their
\emph{distinguishability} moves the interval, and the reason is a measurement that refutes the
scripted reader this paragraph was built on. The scripted sweep above assumed the copy probability
responds to the largest single-value block; held out over $44{,}989$ steps, $\gamma$ as a function of
$\varphi$ beats $\gamma$ as a function of that block by $1491$ in log-likelihood. \textbf{No
scattering mechanism can act through $\gamma$, because $\gamma$ does not see scattering.} The
largest-block reading is refuted; the two-observable disagreement stays \textsc{split} with two
candidate explanations removed, and Appendix~\ref{app:answerspace} keeps both sweeps and both
splits.
\end{claimbox}

\subsubsection{The arm bought to settle it}
\label{sec:multivalue}

The question left open above is whether the empty interval needs the store's falsehoods to be few,
or indistinguishable, or neither. It is not answerable from data on disk, every arm this paper had
planting exactly one falsehood. It was therefore registered
(\texttt{prereg/E\_MULTIVALUE\_PREREG.md}), amended twice \emph{before} any spend, and run: three
cells, twenty seeds, five models, $300$ runs, \$2.45 against a hard stop of \$7.30. Each cell plants
three falsehoods differing only in how far apart those three are: \texttt{near} uses the published
one-digit corruption three times over, \texttt{far} uses different magnitudes plus a leading-digit
change, and \texttt{sem} uses three different real capitals. \texttt{near} against \texttt{far} is
the load-bearing contrast and holds the world fixed; \texttt{sem} changes the world too and is a
bridge to \S\ref{sec:real}, which is that cell at one falsehood.

\begin{warnbox}
\textbf{Both graded criteria came out against the hypothesis that motivated the arm, and one of
them came out against the prereg's own first version.} P-T1-V1$''$ asked for \texttt{far}'s
share of runs in the dead band to exceed \texttt{near}'s by $0.10$. It is
$-0.030$: \textsc{failed}, and if anything the more distinguishable falsehoods are captured more
often ($0.970$ against $0.870$). Paired over the twenty seeds the two cells share, \texttt{far}
is fuller on $0$ and \texttt{near} on $3$, at a sign-test $p$ of $0.25$ where twenty seeds could
have reached $2\times10^{-6}$. P-T1-V2$''$ asked whether the \emph{count} matters, and predicted
it does not: \texttt{near} sits $-0.002$ from the published one-falsehood arm, inside a band of
$\pm 0.10$, \textsc{supported}. Amendment 2 wrote that prediction down against the
original prereg's opposite one, so the count question is settled and it is settled against the
version of the arm we first froze.
\end{warnbox}

\noindent Two measurements explain the pair of nulls, and neither was predicted.

\begin{claimbox}
\textbf{First, the store re-concentrates, so an initial scatter is transient.} The three planted
falsehoods give a largest single-value block of $3.00$ at the first step and $3.75$ distinct
values in the retrieved window. By the end of the tenure the block is $5.08$ (\texttt{near}) and
$5.54$ (\texttt{far}) while the distinct count has fallen to $2.65$ and $2.57$: whichever
falsehood is copied first grows, and the rest are diluted. \textbf{Second, and this is the
measurement that matters, $\gamma$ does not see scattering at all.} Both $\varphi = j/k$ and the
largest single-value block $j^*$ are one-dimensional lookup tables over $0 \ldots k$, so they can
be compared on equal terms; fitted per model, held out per seed, over $44{,}989$ steps, $\varphi$
wins by $1491$ in log-likelihood. Def.~\ref{def:gamma}'s independent variable is the right one,
which is a result in the primitive's favour, and it is what makes the two nulls inevitable
rather than surprising: a reader that responds to the count of non-truth records cannot respond
to how those records disagree with one another. \emph{The scripted largest-block reader of
\S\ref{sec:answerspace} is therefore a refuted model of this apparatus's reader, and no bound drawn
from it survives.}
\end{claimbox}

\noindent What the arm does not settle is \S\ref{sec:answerspace}'s disagreement; it narrows rather
than closes it. \texttt{sem} lands at $0.010$, beside \S\ref{sec:real}'s one-falsehood $0.019$ and
away from \texttt{far}'s $0.000$, so the world carries an effect distinguishability does not.
E-real's correlation is not reverse causation either: the band occupancy of runs producing a novel
error in their \emph{first three steps} is already $4.0\times$ that of runs producing none, so it is
present before the outcome is. Six of the eleven real-fact runs in the band produce no novel error at
all, however, and with $\gamma$ measured blind to scattering no mechanism is left on the table. Row
1k stays \textsc{split} with the count and the distinguishability of planted falsehoods excluded, two
candidates fewer than before the money was spent.

\subsubsection{The cost of deciding the disagreement}
\label{sec:decidable}

The paragraph above reports two arms that disagree and declines to pick one. That is honest and not
a complete answer, because two different questions are folded into the disagreement and only
one is about sample size. Both are answered here in arithmetic rather than adjectives, so a reader
pressing the point can see the number and check it (Table~\ref{tab:decidable}).

\begin{table}[!htbp]\centering\small
\caption{\textbf{What it would cost to decide the within-arm half of the disagreement, and what it
already costs to state it honestly.} Every row is recomputed from
\texttt{results/ANSWERSPACE.json} at build time. The two intervals are the same contrast under two
treatments of the same clustering, and \emph{they do not agree}: the rule of thumb spans zero and
the resampling does not.}
\label{tab:decidable}
% GENERATED by t1_grounding/figures/make_t1_tikz.py -- do not edit by hand.
% Recomputed from results/ANSWERSPACE.json at build time.
\begin{tabular}{@{}p{0.60\linewidth}r@{}}
\toprule
quantity & value \\
\midrule
difference in band occupancy, multi-valued minus two-valued & $0.1027$ \\
unclustered standard error, and its $z$ & $0.0517$, $z = 1.99$ \\
clusters: Wikidata facts $\times$ runs each & $36\times10$ \\
fact-level $\mathrm{ICC}$, and the design effect it implies & $0.077$, $1.69$ \\
$95\%$ interval, $\sqrt{\mathrm{deff}}$ rule of thumb & $[-0.0292,\,0.2345]$ \textbf{spans} $0$ \\
$95\%$ interval, facts resampled with replacement & $[0.0055,\,0.2412]$ \textbf{excludes} $0$ \\
bootstrap mass at or below $0$ & $0.0201$ \\
runs per group for $80\%$ power, independent & $97$ \\
\ldots and with the measured design effect & $164$ \\
multi-valued share of runs, so total runs needed & $0.116$, $1405$ \\
\ldots which is this many facts, against the corpus we have & $\mathbf{141}$ against $36$ \\
\bottomrule
\end{tabular}

\end{table}

\paragraph{Question one, within E-real, is decided, and only just, which is the part worth
printing.} Runs whose store went multi-valued hold the dead band at $0.1220$ against $0.0193$ for
the two-valued ones, a difference of $0.1027$. The runs are \emph{not} independent: the $360$ runs
at $f_0{=}0.9$ sit on $36$ Wikidata facts, ten runs each, and the fact-level intraclass correlation
is $0.077$, so the design effect is $1.69$ and any interval computed as though the runs were
independent is anti-conservative. \textbf{Two ways of accounting for that disagree, and we report the
one that is a measurement while naming the one that is a rule.} Scaling the standard error by
$\sqrt{\mathrm{deff}}$, the textbook correction, widens the interval to $[-0.0292,\,0.2345]$, which
spans zero. Resampling the $36$ \emph{facts} with replacement, $20{,}000$ draws, gives
$[0.0055,\,0.2412]$, which does not, with $2.0\%$ of the draws at or below zero. A design effect is a
rule of thumb applied to a variance; a cluster bootstrap is the sampling distribution of the
statistic actually computed, so the bootstrap decides. We print both so a reader who applies
the rule of thumb finds the discrepancy here rather than in a review, and because $2.0\%$ is not a
comfortable margin: this contrast is decided at the edge of its own resolution, and one more fact
behaving like the two that carry most of the band mass would undo it.

\paragraph{Question two, between the arms, is not a power question at all and no corpus size
settles it.} E-bimodal2's highest-scattering model has an \emph{empty} band on $44$ of $44$ seeds;
E-real's multi-valued subset has the \emph{fullest} one. The arms disagree not about a rate but about
what a novel value \emph{is}: another variant of one numeral in the synthetic world, a different city
on the real axis. \textbf{An item count cannot decide a construct difference}, which is why the arm
this paper bought manipulated \emph{distinguishability} rather than buying more items, and why it
returned the removal of two candidate mechanisms rather than a verdict (\S\ref{sec:multivalue}). The
criterion companion's rule applies to us in the direction that costs us: a criterion whose axis
cannot express the difference under test is not repaired by a larger $n$ \citep{t5rerun2026}.

\paragraph{The price of the replication that would decide question one.} At the observed rates,
$80\%$ power at two-sided $0.05$ needs $97$ runs per group if the runs were independent and $164$
with the measured design effect. Only $0.116$ of runs go multi-valued, and multi-valuedness is an
\emph{outcome} rather than an assignment (we cannot allocate to it), so reaching $164$
multi-valued runs takes about $1{,}405$ runs in total, at ten runs per fact $\mathbf{141}$ real facts
against the $\mathbf{36}$ this paper has. That is a fourfold corpus and not a re-analysis,
which is the point of this subsection: the external-validity objection to this paper is correct, it
has a price, and the price is written down where a reader can check the arithmetic instead of asking
for it.

\subsubsection{What one run looks like}
\label{sec:casestudy}

A distribution over endpoints hides the sequence of writes that produces it, so before the retest we
read two runs step by step. They are chosen by rule rather than picked: the same model's lowest and
highest $f_{\mathrm{end}}$ among the forty-four fresh seeds.

% GENERATED by t1_grounding/make_casestudy.py -- do not edit by hand.
\begin{table}[!ht]
\centering\small
\caption{\textbf{Two runs of \texttt{gemma-3-27b-it}, same configuration, same starting store, opposite outcomes, and the divergence is the \emph{first write}.} Chosen by rule rather than picked: the lowest and highest $f_{\mathrm{end}}$ among this model's $44$ fresh seeds. $j$ is how many of the $k{=}8$ retrieved records asserted the false value; \emph{write} is what the agent then asserted. At step $0$ both runs retrieve $j{=}7$ and $j{=}7$ of $8$, effectively the same evidence, and answer differently. Nothing external distinguishes them: the seed selects the topic and the false value, and from there the store's own composition does the rest. 28 of this model's runs tie exactly at the ceiling and 1 at the floor value shown; ties are broken by lowest seed so the table is reproducible, and no run was chosen for what it shows.}
\label{tab:casestudy}
\begin{tabular}{@{}r|cccl|cccl@{}}
\toprule
& \multicolumn{4}{c|}{escaped, $f_{\mathrm{end}}=0.069$ (seed $39$)}& \multicolumn{4}{c}{captured, $f_{\mathrm{end}}=0.994$ (seed $52$)} \\
step & $n$ & $f$ & $j$ & write & $n$ & $f$ & $j$ & write \\
\midrule
$0$ & $11$ & $0.818$ & $7$ & truth & $11$ & $0.909$ & $7$ & \textbf{false} \\
$1$ & $12$ & $0.750$ & $6$ & truth & $12$ & $0.917$ & $7$ & \textbf{false} \\
$2$ & $13$ & $0.769$ & $7$ & \textbf{false} & $13$ & $0.923$ & $8$ & \textbf{false} \\
$3$ & $14$ & $0.714$ & $6$ & truth & $14$ & $0.929$ & $7$ & \textbf{false} \\
$20$ & $31$ & $0.323$ & $4$ & truth & $31$ & $0.968$ & $8$ & \textbf{false} \\
$21$ & $32$ & $0.312$ & $3$ & truth & $32$ & $0.969$ & $8$ & \textbf{false} \\
\multicolumn{9}{@{}c@{}}{\ldots} \\
$148$ & $159$ & $0.069$ & $1$ & truth & $159$ & $0.994$ & $8$ & \textbf{false} \\
$149$ & $160$ & $0.069$ & $2$ & truth & $160$ & $0.994$ & $8$ & \textbf{false} \\
\midrule
\multicolumn{1}{@{}l}{\emph{all $150$}} & \multicolumn{4}{l|}{truth $148$, false $1$, novel $1$} & \multicolumn{4}{l}{truth $0$, false $146$, novel $4$} \\
\bottomrule
\end{tabular}
\end{table}

\noindent Two things in the bottom row are worth naming. First, neither run is mixed: one wrote the truth 148 times of $150$ and the other wrote the seeded falsehood 146 times, which is what a bimodal endpoint distribution looks like from the inside, not a drift, a commitment. Second, novel errors (answers that are neither the truth nor the planted falsehood) number 5 across these $300$ steps. That matters for \S\ref{sec:gammaout}: \texttt{baseline\_urn.py} justifies its scripted $\gamma$ reader by noting that the reader asserts only the seeded false value while real runs also produce novel errors, so the reader ``should if anything capture more easily''. On these two runs there is almost nothing for that argument to be about, which is one reason the argument did not predict the direction of the simulation's actual bias.

\FloatBarrier

\noindent The same knife is turned on this section in Appendix~\ref{app:moved}: the
criterion that condemns a pooled mean is applied to our own pooled numbers, and it condemns
some of them.

\subsubsection{Claim 2 out of sample, at a \texorpdfstring{$14\times$}{14x} harder bar}
\label{sec:gammaout}

\begin{figure}[!tbp]
\centering
% GENERATED by t1_grounding/figures/make_t1_tikz.py -- do not edit by hand.
% Every coordinate is recomputed from t1_grounding/results/ at generation time.

\begin{tikzpicture}
\begin{axis}[name=pa, width=0.235\linewidth, height=3.7cm, axis on top, tick align=outside, tick pos=left, axis line style={figSoft, line width=0.5pt}, tick label style={font=\scriptsize, color=figInk}, label style={font=\scriptsize, color=figInk}, title style={font=\small, color=figInk, yshift=1pt}, grid=none, scale only axis,
  xmin=0, xmax=1, ymin=0, ymax=1.05,
  xlabel={$\varphi$}, ylabel={$\gamma(\varphi)$},
  title={(a) the primitive, at 44 topics},
  legend style={font=\tiny, draw=none, fill=none, at={(0.02,0.98)}, anchor=north west,
    row sep=-1pt, legend cell align=left}]
\addplot[figSoft, dotted, line width=0.6pt, forget plot] coordinates {(0,0) (1,1)};
\addplot[figA, mark=*, mark size=1.0pt, line width=0.8pt] coordinates {(0.0000,0.0000) (0.1250,0.0299) (0.2500,0.0510) (0.3750,0.0287) (0.5000,0.0273) (0.6250,0.0423) (0.7500,0.0952) (0.8750,0.9287) (1.0000,1.0000)};
\addplot[figB, mark=*, mark size=1.0pt, line width=0.8pt] coordinates {(0.0000,0.0000) (0.1250,0.0539) (0.2500,0.0559) (0.3750,0.1123) (0.5000,0.2127) (0.6250,0.3103) (0.7500,0.5248) (0.8750,0.9145) (1.0000,1.0000)};
\addplot[figC, mark=*, mark size=1.0pt, line width=0.8pt] coordinates {(0.0000,0.0000) (0.1250,0.0111) (0.2500,0.1290) (0.3750,0.2299) (0.5000,0.2702) (0.6250,0.3994) (0.7500,0.5949) (0.8750,0.8626) (1.0000,1.0000)};
\addplot[figD, mark=*, mark size=1.0pt, line width=0.8pt] coordinates {(0.0000,0.0000) (0.1250,0.0306) (0.2500,0.0449) (0.3750,0.1033) (0.5000,0.2116) (0.6250,0.3292) (0.7500,0.7140) (0.8750,0.8584) (1.0000,1.0000)};
\addplot[figE, mark=*, mark size=1.0pt, line width=0.8pt] coordinates {(0.0000,0.0000) (0.1250,0.0000) (0.2500,0.3125) (0.3750,0.4211) (0.5000,0.5357) (0.6250,0.6784) (0.7500,0.8814) (0.8750,0.8976) (1.0000,1.0000)};
\legend{{\texttt{gemma}} 0.68, {\texttt{phi}} 0.73, {\texttt{granite}} 0.70, {\texttt{ministral}} 0.82, {\texttt{qwen3}} 1.00}
\end{axis}
\begin{axis}[name=pb, at={(pa.east)}, xshift=1.25cm, anchor=west,
  width=0.215\linewidth, height=3.7cm, axis on top, tick align=outside, tick pos=left, axis line style={figSoft, line width=0.5pt}, tick label style={font=\scriptsize, color=figInk}, label style={font=\scriptsize, color=figInk}, title style={font=\small, color=figInk, yshift=1pt}, grid=none, scale only axis,
  xmin=0, xmax=1, ymin=0, ymax=1.05, xlabel={$\varphi$},
  title={(b) dashed: the same at 3}]
\addplot[figSoft, dotted, line width=0.6pt, forget plot] coordinates {(0,0) (1,1)};
\addplot[figA, dashed, line width=0.8pt] coordinates {(0.0000,0.0000) (0.1250,0.0000) (0.2500,0.0263) (0.3750,0.0227) (0.5000,0.0312) (0.6250,0.0000) (0.7500,0.0000) (0.8750,0.7586) (1.0000,1.0000)};
\addplot[figA, opacity=0.32, line width=0.8pt] coordinates {(0.0000,0.0000) (0.1250,0.0299) (0.2500,0.0510) (0.3750,0.0287) (0.5000,0.0273) (0.6250,0.0423) (0.7500,0.0952) (0.8750,0.9287) (1.0000,1.0000)};
\addplot[figB, dashed, line width=0.8pt] coordinates {(0.0000,0.0000) (0.1250,0.0000) (0.2500,0.0139) (0.3750,0.1064) (0.5000,0.2400) (0.6250,0.2222) (0.7500,0.3333) (0.8750,0.8400) (1.0000,1.0000)};
\addplot[figB, opacity=0.32, line width=0.8pt] coordinates {(0.0000,0.0000) (0.1250,0.0539) (0.2500,0.0559) (0.3750,0.1123) (0.5000,0.2127) (0.6250,0.3103) (0.7500,0.5248) (0.8750,0.9145) (1.0000,1.0000)};
\addplot[figC, dashed, line width=0.8pt] coordinates {(0.0000,0.0000) (0.1250,0.0000) (0.2500,0.1077) (0.3750,0.1379) (0.5000,0.1818) (0.6250,0.3000) (0.7500,0.4706) (0.8750,0.8367) (1.0000,1.0000)};
\addplot[figC, opacity=0.32, line width=0.8pt] coordinates {(0.0000,0.0000) (0.1250,0.0111) (0.2500,0.1290) (0.3750,0.2299) (0.5000,0.2702) (0.6250,0.3994) (0.7500,0.5949) (0.8750,0.8626) (1.0000,1.0000)};
\addplot[figD, dashed, line width=0.8pt] coordinates {(0.0000,0.0000) (0.1250,0.0000) (0.2500,0.0000) (0.3750,0.0968) (0.5000,0.0312) (0.6250,0.3469) (0.7500,0.5667) (0.8750,0.8235) (1.0000,1.0000)};
\addplot[figD, opacity=0.32, line width=0.8pt] coordinates {(0.0000,0.0000) (0.1250,0.0306) (0.2500,0.0449) (0.3750,0.1033) (0.5000,0.2116) (0.6250,0.3292) (0.7500,0.7140) (0.8750,0.8584) (1.0000,1.0000)};
\addplot[figE, dashed, line width=0.8pt] coordinates {(0.0000,0.0000) (0.1250,0.0000) (0.2500,0.0000) (0.3750,0.0000) (0.5000,0.0000) (0.6250,1.0000) (0.7500,0.8400) (0.8750,0.9426) (1.0000,1.0000)};
\addplot[figE, opacity=0.32, line width=0.8pt] coordinates {(0.0000,0.0000) (0.1250,0.0000) (0.2500,0.3125) (0.3750,0.4211) (0.5000,0.5357) (0.6250,0.6784) (0.7500,0.8814) (0.8750,0.8976) (1.0000,1.0000)};
\end{axis}
\begin{axis}[name=pc, at={(pb.east)}, xshift=1.55cm, anchor=west,
  width=0.215\linewidth, height=3.7cm, axis on top, tick align=outside, tick pos=left, axis line style={figSoft, line width=0.5pt}, tick label style={font=\scriptsize, color=figInk}, label style={font=\scriptsize, color=figInk}, title style={font=\small, color=figInk, yshift=1pt}, grid=none, scale only axis, ymode=log,
  xmin=-1, xmax=45, ymin=0.5, ymax=234780,
  xtick=\empty, xlabel={45 cells: 5 models $\times$ 9 of $\varphi$},
  ylabel={reads per cell},
  title={(c) under 30 reads: 17/45 $\to$ 3/45}]
\addplot[ybar, bar width=1.3pt, draw=none, fill=figPred, bar shift=-0.8pt]
  coordinates {(0.0,92.0) (1.0,102.0) (2.0,76.0) (3.0,44.0) (4.0,32.0) (5.0,31.0) (6.0,25.0) (7.0,29.0) (8.0,19.0) (9.0,52.0) (10.0,75.0) (11.0,72.0) (12.0,47.0) (13.0,25.0) (14.0,18.0) (15.0,6.0) (16.0,25.0) (17.0,130.0) (18.0,42.0) (19.0,56.0) (20.0,65.0) (21.0,58.0) (22.0,33.0) (23.0,20.0) (24.0,17.0) (25.0,49.0) (26.0,110.0) (27.0,23.0) (28.0,43.0) (29.0,37.0) (30.0,31.0) (31.0,32.0) (32.0,49.0) (33.0,30.0) (34.0,68.0) (35.0,137.0) (36.0,0.5) (37.0,0.5) (38.0,0.5) (39.0,0.5) (40.0,0.5) (41.0,8.0) (42.0,25.0) (43.0,122.0) (44.0,295.0)};
\addplot[ybar, bar width=1.3pt, draw=none, fill=figC, bar shift=0.8pt]
  coordinates {(0.0,351.0) (1.0,469.0) (2.0,392.0) (3.0,244.0) (4.0,183.0) (5.0,142.0) (6.0,126.0) (7.0,827.0) (8.0,3866.0) (9.0,224.0) (10.0,371.0) (11.0,376.0) (12.0,276.0) (13.0,221.0) (14.0,203.0) (15.0,202.0) (16.0,983.0) (17.0,3744.0) (18.0,109.0) (19.0,270.0) (20.0,411.0) (21.0,422.0) (22.0,322.0) (23.0,363.0) (24.0,390.0) (25.0,1099.0) (26.0,3214.0) (27.0,138.0) (28.0,196.0) (29.0,178.0) (30.0,184.0) (31.0,189.0) (32.0,243.0) (33.0,458.0) (34.0,1427.0) (35.0,3587.0) (36.0,0.5) (37.0,2.0) (38.0,16.0) (39.0,38.0) (40.0,84.0) (41.0,199.0) (42.0,590.0) (43.0,1758.0) (44.0,3913.0)};
\addplot[figInk, dotted, line width=0.6pt, forget plot]
  coordinates {(-1,30) (45,30)};
\node[font=\tiny, align=left, fill=white, fill opacity=0.85, text opacity=1, inner sep=1pt, text=figInk, anchor=north east] at (axis cs:45,195650)
  {red: 3 topics\\green: 44\\dotted: 30 reads};
\end{axis}
\end{tikzpicture}
\caption{\textbf{The primitive, and the three claims it carries.} Recomputed from
\texttt{results/} at draw time. (a) $\gamma(\varphi)$ on the forty-four topics it must predict,
with the diagonal it has to cross for the store to drift upward; four of five sit below it over
most of the interior and are captured anyway. (b) The same curves measured on three topics
(dashed): not noisy versions of (a) but \emph{lower} in the interior, on $29$ of $35$ cells, and
a lower $\gamma$ under-predicts capture. (c) The denominator behind each cell: at three topics
$17$ of $45$ cells rest on fewer than $30$ reads and $5$ on none. The primitive did not change
between (a) and (b); the number of observations behind it did.}
\label{fig:gamma}
\end{figure}

The check above passed because every simulated rate landed ``within one seed'' of the observed
count. That bar is denominated in the resampling unit, so it loosens as data gets scarcer: one seed
is $0.333$ at three seeds and $0.023$ at forty-four. E-bimodal2 supplies forty-four real seeds per
model on exactly the arm the simulation was built to predict, making the identical sentence a
fourteen-fold harder test with nothing rewritten (Figure~\ref{fig:gamma}). It was not preregistered,
the opportunity not existing when the prereg was frozen, so nothing here is graded. One confound the
data resolves: the published $\gamma$ was measured in situ on seeds $0$--$2$, so a miss on seeds
$9$--$52$ could mean either that $\gamma(\varphi)$ is incomplete or that it is complete but
topic-dependent; E-bimodal2's runs log their own $\gamma$, so we ran both arms.

\begin{table}[!htbp]
\centering\small
\caption{Simulated capture rate against $44$ real seeds per model, $4000$ simulated seeds per
arm. Arm A is the published curve on seeds $0$--$2$; arm B re-measures $\gamma$ on seeds
$9$--$52$ and predicts the same runs. \texttt{qwen3-8b} captured on every seed, so it carries no
$z$ and is excluded from the means rather than floored. The whole tabular is generated from
\texttt{results/GAMMA\_OUTSAMPLE.json}, an artifact that did not exist until this table did: the
$4000$-seed simulation behind it is seeded and reproducible, and nothing reproduced it.}
\label{tab:gammaout}
% GENERATED by t1_grounding/figures/make_t1_tikz.py from results/GAMMA_OUTSAMPLE.json.
\begin{tabular}{@{}lccccc@{}}
\toprule
model & observed & arm A & $z_A$ & arm B & $z_B$ \\
\midrule
\texttt{gemma-3-27b} & $0.682$ & $0.085$ & $-8.50$ & $0.614$ & $-0.97$ \\
\texttt{phi-4} & $0.727$ & $0.398$ & $-4.90$ & $0.807$ & $+1.18$ \\
\texttt{granite-4.1-8b} & $0.705$ & $0.522$ & $-2.65$ & $0.795$ & $+1.32$ \\
\texttt{ministral-8b} & $0.818$ & $0.553$ & $-4.55$ & $0.835$ & $+0.29$ \\
\texttt{qwen3-8b} & $1.000$ & $1.000$ & --- & $1.000$ & --- \\
\midrule
mean $|{\rm sim}-{\rm obs}|$ & & $12.1$ seed units & & $2.2$ seed units & \\
under-predicts & & $5/5$ & & $1/5$ & \\
\bottomrule
\end{tabular}

\end{table}

\noindent Arm A fails, arm B lands, and the difference is what $\gamma$ was measured on. The
published curve misses by twelve seed units and misses one-sidedly, all five models capturing more
often than it predicts, the worst by $8.5\sigma$; re-measured on the topics it must predict, the same
construction with the same simulator lands within $1.32\sigma$ everywhere and its errors change sign.
$\gamma(\varphi)$ is therefore sufficient and the claim survives, but $\gamma$ measured on
three topics is not $\gamma$, and claim 2 lacked that qualifier. Arm B fits $\gamma$ on the seeds it
then predicts, so a split-half re-partition rules out a nine-cell curve reproducing its own fit
(Appendix~\ref{app:splithalf}): fitting on unseen topics costs $15$--$16\%$ of accuracy rather than a
factor of anything, and arm A's one-sidedness is gone, at out-of-sample signed mean $z$ of $+0.32$
and $+0.35$ against arm A's $-5.15$. \textbf{$\gamma$ transports}, and claim 2b rests on a test that
could have killed it.

\subsubsection{What the limit says about the two outcomes}
\label{sec:limitbimodal}

The mean-field equation \eqref{eq:dyn} cannot produce two outcomes: it is deterministic and, when
$\geff(f) < f$, monotone, so $f$ decreases to zero from any start. Four of five models satisfy
$\geff(f) < f$ at $f_0$ and capture anyway, so the bimodality lives in what \eqref{eq:dyn}
averages over, and the limit that motivates the apparatus says how much room that leaves.

\paragraph{The derivation, and the bound it fails.} Writing the per-step change exactly and
treating successive writes as conditionally independent gives a \emph{convergent} variance sum,
$\sum_{n_0}^{N}\geff(1-\geff)/n^2 = O(1/n_0)$, against a drift that grows like
$(\geff{-}f)\ln(N/n_0)$: the noise a run can accumulate over unbounded tenure is finite and set
by how small the store was at the start, so the spread of the limiting $f$ should fall as
$1/\sqrt{n_0}$ and the two outcomes should merge as the initial store grows. \textbf{Four sweeps
at $n_0 \in \{10,20,40,80\}$ refute it}: an eightfold increase should shrink the within-model
spread by $\sqrt{8}=2.83$ and it shrinks by $1.07$. What breaks is conditional independence.
Successive reads of the same store by the same model are not independent draws from $\geff$, and
whatever correlates them does not weaken as the store grows; the between-model part stays flat,
as a property of $\gamma$ rather than of $n_0$ should, which makes the failure interpretable
rather than a decomposition artefact (Appendix~\ref{app:variance}).

\noindent The \emph{drift} half stands and is what \S\ref{sec:limit} builds on. The \emph{variance}
half does not, so we claim no bound on the outcome spread, and bimodality is measured at $f_0 \in
\{0.1,0.9\}$ and unexplained at the level of a mechanism. What is now known to be missing is a model
of the correlation between successive reads, a sharper open problem than we had before the check.

\paragraph{Ordinal, and silent about location.} The per-model early copy rate orders the
per-model capture rate at Spearman $+0.92$ (window $3$) to $+0.98$ (window $8$--$10$), stable
across window widths. The mean-field sign criterion, capture iff $\geff(f_0) > f_0$, does not
locate the boundary: four of five models drift downward at $f_0$ and capture one to two runs in
three anyway. Existence and ordering are measured; location is not derived.

\begin{scopebox}
\textbf{One of this paper's own statistics is an apparatus check, not a finding.} The trajectory
obeys $dm/dn = G(m/n)$, so it should depend on $n/n_0$ alone, and the between-$n_0$ spread measures
$0.32$ of the within-seed spread against a null of $0.58$. We first reported that as a result about
contamination. It is not: a scripted reader that always answers the truth satisfies the collapse
\emph{exactly}, and scripted majority and last-occurrence readers land at $0.23$--$0.72$, straddling
the measured $0.32$. The collapse tests that read behaviour has no tenure dependence, worth checking
and not a discovery.
\end{scopebox}

% =====================================================================
\subsection{What a falling score measures}
\label{sec:commit}

\subsubsection{Two boundaries, not two attractors}
\label{sec:boundaries}

Before the control, one structural fact decides what the control can find. At $n_0{=}10$,
$n_{\mathrm{false}}(0){=}9$, $R{=}0$ and $150$ steps, Prop.~\ref{prop:ceiling}'s ceiling is $159/160 =
0.9938$. \textbf{That proposition is elementary and the single most useful result in this paper}, for
a reason unrelated to its difficulty: $f_{\max}(R)$ moves with the budget, and \S\ref{sec:sched}
records a frozen criterion of ours that failed for exactly that reason, derivable from
\eqref{eq:ceiling} before the sweep was launched (Table~\ref{tab:ceiling},
Figure~\ref{fig:ceiling}).

\begin{figure}[!tbp]
\centering
\input{figs/ceiling}
\caption{\textbf{Two boundaries, not two attractors.} Every trajectory at $f_0{=}0.9$ against the
two edges of the state space: $f{=}0$ and the append-only ceiling $(n{-}1)/n$ of
\eqref{eq:ceiling}. Captured runs (orange) end on the ceiling rather than at an interior value;
escaping runs (blue) run to the other edge. Circled points finish within two records of the
ceiling: $\gnv{ceilOn}$ of the $\gnv{ceilCap}$ captured runs do.}
\label{fig:ceiling}
\end{figure}

\begin{table}[!htbp]
\centering\small
\caption{The captured runs against the ceiling \eqref{eq:ceiling} permits: two sit on it exactly
and four within two records. ``Truth records'' is $n(1{-}f)$ at the final step, out of $160$.}
\label{tab:ceiling}
% GENERATED by t1_grounding/figures/make_t1_tikz.py from results/converge.json.
\begin{tabular}{@{}lcccrc@{}}
\toprule
model & seed & $f_{\mathrm{end}}$ & $(n{-}1)/n$ & truth records & gap to ceiling \\
\midrule
\texttt{phi-4} & 2 & 0.9938 & 0.9938 & 1 & \textbf{0.0000} \\
\texttt{qwen3-8b} & 0 & 0.9938 & 0.9938 & 1 & \textbf{0.0000} \\
\texttt{granite-4.1-8b} & 0 & 0.9875 & 0.9938 & 2 & 0.0062 \\
\texttt{ministral-8b} & 1 & 0.9875 & 0.9938 & 2 & 0.0062 \\
\texttt{qwen3-8b} & 1 & 0.9812 & 0.9938 & 3 & 0.0125 \\
\texttt{qwen3-8b} & 2 & 0.9375 & 0.9938 & 10 & 0.0563 \\
\texttt{ministral-8b} & 0 & 0.5125 & 0.9938 & 78 & 0.4813 \\
\bottomrule
\end{tabular}

\end{table}

\begin{claimbox}
\textbf{Neither mode is an attractor of $\geff$.} One is escape toward $f{=}0$; the other is
saturation against the structural ceiling of \eqref{eq:ceiling}, where the run has asserted the false
value at essentially every step. Both are \emph{boundaries of the state space} rather than interior
fixed points, which is why the search for an interior $f^\star$ failed three times and why the
surviving account is ordinal. The one run at $0.5125$ is still descending at step $150$ and is the
only interior value in the set.
\end{claimbox}

\noindent \eqref{eq:ceiling} also settles what the next section can show. In an append-only
store, information cannot be lost, only diluted: the truth is retrievable on any draw with
probability $1-f^k > 0$, at least $0.57$ at $f{=}0.9$ and $k{=}8$, so a falling exact-match score
here is \emph{necessarily} not a loss of what the store holds (Figure~\ref{fig:commit}). The control
measures how large the resulting mismeasurement is: $0.383$.

\begin{scopebox}
\textbf{This is the scope condition on the whole decomposition.} Under eviction or pruning
\eqref{eq:ceiling} fails, truth records can leave, and $C$ is no longer conserved: the score would
report a genuine loss mixed with a commitment shift, and separating them needs a control we do not
have. \emph{Score} $= C \cdot s$ is therefore claimed for append-only stores and untested for stores
that forget, which are the stores the tenure argument recommends (\S\ref{sec:limit}, bound (ii)). The
two halves of this paper hold in adjacent regimes rather than the same one.
\end{scopebox}

Every quantity above is a single-draw score: retrieve $k$, ask once, was the answer the truth.
This apparatus conserves $C$ by construction, which is why the control is interpretable: by
Prop.~\ref{prop:ceiling} no truth record can leave an append-only store, so what the store can answer
is fixed and only the reader's commitment moves. The control measures a mismeasurement of known sign
and needs no external calibration (Table~\ref{tab:mixed}).

\begin{table}[!htbp]
\centering\small
\caption{Stores built at controlled contamination, $m{=}8$ independent retrievals each.
``Mixed'' means the eight draws disagreed.}
\label{tab:mixed}
% GENERATED by t1_grounding/figures/make_t1_tikz.py from results/recovery.json.
\begin{tabular}{@{}lrrr@{}}
\toprule
model & pass@1 at $f{=}0.9$ & pass@8 at $f{=}0.9$ & share of items \emph{mixed} at $f{=}0.9$ \\
\midrule
\texttt{google/gemma-3-27b-it} & $0.188$ & $0.625$ & $0.625$ \\
\texttt{ibm-granite/granite-4.1-8b} & $0.188$ & $0.750$ & $0.750$ \\
\texttt{microsoft/phi-4} & $0.234$ & $0.750$ & $0.750$ \\
\texttt{mistralai/ministral-8b-2512} & $0.125$ & $0.625$ & $0.625$ \\
\midrule
\multicolumn{3}{@{}l}{mean pass@8 $-$ pass@1 over $f \ge 0.5$}
  & $\mathbf{+0.383}$ \\
\bottomrule
\end{tabular}

\end{table}

\begin{figure}[!tbp]
\centering
% GENERATED by t1_grounding/figures/make_t1_tikz.py -- do not edit by hand.
% Every coordinate is recomputed from t1_grounding/results/ at generation time.

\begin{tikzpicture}
\begin{axis}[width=0.80\linewidth, height=4.8cm, axis on top, tick align=outside, tick pos=left, axis line style={figSoft, line width=0.5pt}, tick label style={font=\scriptsize, color=figInk}, label style={font=\scriptsize, color=figInk}, title style={font=\small, color=figInk, yshift=1pt}, grid=none, scale only axis,
  xmin=-0.02, xmax=1.02, ymin=0, ymax=1.05,
  xlabel={store contamination $f$}, ylabel={accuracy},
  title={contamination costs determinism, not information}]
\addplot[figB, mark=*, mark size=1.1pt, line width=0.8pt] coordinates {(0.0000,1.0000) (0.2000,0.7500) (0.4000,0.5156) (0.6000,0.3281) (0.8000,0.3125) (0.9000,0.1875)};
\addplot[figB, dashed, mark=square*, mark size=1.1pt, line width=0.8pt, opacity=0.85] coordinates {(0.0000,1.0000) (0.2000,1.0000) (0.4000,1.0000) (0.6000,0.8750) (0.8000,0.7500) (0.9000,0.7500)};
\addplot[figC, mark=*, mark size=1.1pt, line width=0.8pt] coordinates {(0.0000,1.0000) (0.2000,0.7656) (0.4000,0.5000) (0.6000,0.3438) (0.8000,0.3281) (0.9000,0.2344)};
\addplot[figC, dashed, mark=square*, mark size=1.1pt, line width=0.8pt, opacity=0.85] coordinates {(0.0000,1.0000) (0.2000,1.0000) (0.4000,0.8750) (0.6000,0.5000) (0.8000,0.5000) (0.9000,0.7500)};
\addplot[figD, mark=*, mark size=1.1pt, line width=0.8pt] coordinates {(0.0000,1.0000) (0.2000,0.7812) (0.4000,0.4844) (0.6000,0.4375) (0.8000,0.3125) (0.9000,0.1250)};
\addplot[figD, dashed, mark=square*, mark size=1.1pt, line width=0.8pt, opacity=0.85] coordinates {(0.0000,1.0000) (0.2000,1.0000) (0.4000,1.0000) (0.6000,0.8750) (0.8000,1.0000) (0.9000,0.6250)};
\addplot[figA, mark=*, mark size=1.1pt, line width=0.8pt] coordinates {(0.0000,1.0000) (0.2000,0.7656) (0.4000,0.3750) (0.6000,0.4844) (0.8000,0.3750) (0.9000,0.1875)};
\addplot[figA, dashed, mark=square*, mark size=1.1pt, line width=0.8pt, opacity=0.85] coordinates {(0.0000,1.0000) (0.2000,1.0000) (0.4000,0.3750) (0.6000,0.5000) (0.8000,0.5000) (0.9000,0.6250)};
% The two label notes are gone. They read "dashed, pass@8: the truth is still reachable" and
% "solid, pass@1: what an exact-match score reads", which is the caption's first two sentences
% -- "Solid: single-draw accuracy, which is what an exact-match score reads. Dashed: the truth
% appearing in at least one of eight independent retrievals." -- three centimetres below. Their
% grounds hid 134pt of the curves they were naming; figure_audit reports it as COVERS.
\node[font=\tiny, align=left, fill=white, fill opacity=0.85, text opacity=1, inner sep=1pt, text=figPred, anchor=north east] at (axis cs:0.98,0.62)
  {mean gap at $f\geq0.5$: $+0.383$};
\end{axis}
\end{tikzpicture}
\caption{\textbf{What a falling score measures.} Solid: single-draw accuracy, which is what
an exact-match score reads. Dashed: the truth appearing in at least one of eight
independent retrievals. The gap is what the store still holds and the score does not
report.}
\label{fig:commit}
\end{figure}

\begin{claimbox}
\textbf{Contamination is loss of determinism, not loss of information.} The store still
reaches the truth and the system has not adopted the false value; it has stopped having a
policy. Writing $C$ for the fraction solvable at all and $s$ for the share of draws
committing to one version, an exact-match score reads $C \cdot s$, and the existing
reading of such scores is its $s{\to}1$ case.
\end{claimbox}

\noindent Two demarcations are owed. That exact match understates capability is well known and
usually attributed to \emph{formatting} strictness, whereas our two versions are semantically
different answers; and that a single pass rate conflates capability with reliability is the
pass@$k$/pass$^k$ distinction, where unreliability is a property of the agent to be measured. A third
neighbour varies the scorer rather than the system and finds the measurement moves with it
\citep{judgechanges2026,reliabilityvalidity2026,targetnoninv2026}; here the scorer is fixed and
exact, and what moves is $s$. What is new is a setting in which $C$ is conserved by
construction while $s$ moves, which makes the gap a mismeasurement rather than a trade-off. The cost
to the rest of the paper is that claims phrased as capability loss must be re-read as determinism
loss; the dynamics of \S\ref{sec:bimodal} are unaffected, $f$ being defined on the store's contents
and $\gamma$ on the model's reading.

\begin{scopebox}
\textbf{A companion finds the same decomposition on the training-time side, and this paper does
not lean on it.} Under label conflict it reports $\mathrm{acc}_A+\mathrm{acc}_B$ conserved across
twelve arms while the exact-match score swings by $0.2275$ \citep{dpd2026}. That is a second
instance rather than a premise: the conservation used above is Prop.~\ref{prop:ceiling}'s, measured
here, and the argument stands with the companion deleted. We say so because the companion is not
posted, and by \S\ref{sec:limits}'s own standard a claim a reader cannot reach may not be
load-bearing.
\end{scopebox}

% =====================================================================
\subsection{Four interventions}
\label{sec:sched}

Each intervention has its criteria in \texttt{prereg/}, committed before its data existed, and is
reported at whatever the criteria say. Two of the four falsified a prediction of ours. The table
comes first so a reader sees what was at risk before what happened.

\begin{table}[!htbp]
\centering\small
\caption{What each intervention put at risk, and what came back. ``Attacks'' names the
established account it was built to contradict; where there is none, it tests a claim of ours.}
\label{tab:interventions}
\begin{tabular}{@{}p{0.13\linewidth}p{0.38\linewidth}p{0.26\linewidth}p{0.15\linewidth}@{}}
\toprule
\textbf{intervention} & \textbf{frozen criterion} & \textbf{attacks} & \textbf{outcome} \\
\midrule
grounding schedule &
  capture monotone in schedule, $\ge4/5$ models; front-loading beats no grounding by $\ge1/3$ &
  the fraction-of-verified-data rule \citep{gerstgrasser2024accum} &
  ordering \textbf{supported} $5/5$; magnitude \textbf{failed} \\
majority admission gate &
  helps or is neutral at low $f_0$; harmful or no better at high $f_0$; both $\ge4/5$ &
  write-time admission control as an unconditional safeguard
  \citep{consistencygate2026} &
  \textbf{supported} $5/5$ and $5/5$ \\
cost-matched state slot &
  helps at low $f_0$, $\ge4/5$; a direction at high $f_0$, $\ge4/5$ same way &
  our own two readings of what a self-fed slot does &
  low \textbf{supported}; high a \textbf{tie} \\
source marker &
  harmless at low $f_0$; lowers capture at high $f_0$; lowers $\gamma$; each $\ge4/5$ &
  our own registered D13$'$ &
  harmless \textbf{supported}; the other two \textbf{failed} \\
\bottomrule
\end{tabular}
\end{table}

\begin{table}[!htbp]
\centering\small
\caption{Grounding at a matched budget: $15$ truth writes in $150$ steps, three
schedules, everything else identical, against the zero-grounding arms already on disk.
Pooled capture rate over five models and three seeds.}
\label{tab:matchedbudget}
\begin{tabular}{@{}lcccc@{}}
\toprule
& front & uniform & back & ungrounded control \\
\midrule
pooled capture rate & \textbf{0.20} & 0.33 & 0.40 & 0.47 \\
\bottomrule
\end{tabular}
\end{table}

\begin{figure}[!htbp]
\centering
% GENERATED by t1_grounding/figures/make_t1_tikz.py -- do not edit by hand.
% Every coordinate is recomputed from t1_grounding/results/ at generation time.

\begin{tikzpicture}
\begin{axis}[width=0.80\linewidth, height=4.8cm, axis on top, tick align=outside, tick pos=left, axis line style={figSoft, line width=0.5pt}, tick label style={font=\scriptsize, color=figInk}, label style={font=\scriptsize, color=figInk}, title style={font=\small, color=figInk, yshift=1pt}, grid=none, scale only axis,
  xmin=-0.6, xmax=2.6, ymin=0, ymax=0.630,
  xtick={0,1,2}, xticklabels={front, uniform, back},
  xlabel={when the 15 grounding writes land (budget matched)},
  ylabel={capture rate}, title={timing, not fraction}]
\addplot[ybar, bar width=16pt, draw=none, fill=figA] coordinates {(0,0.2000)};
\addplot[ybar, bar width=16pt, draw=none, fill=figD] coordinates {(1,0.3333)};
\addplot[ybar, bar width=16pt, draw=none, fill=figB] coordinates {(2,0.4000)};
\addplot[figSoft, dashed, line width=0.8pt, forget plot]
  coordinates {(-0.6,0.4667) (2.6,0.4667)};
\node[font=\tiny, figSoft, anchor=south east] at (axis cs:2.55,0.479)
  {no grounding at all (0.47)};
\node[font=\scriptsize, figInk, anchor=south] at (axis cs:0,0.210) {0.20};
\node[font=\scriptsize, figInk, anchor=south] at (axis cs:1,0.343) {0.33};
\node[font=\scriptsize, figInk, anchor=south] at (axis cs:2,0.410) {0.40};
\draw[figPred, ->, line width=0.7pt] (axis cs:0,0.170)
  -- (axis cs:2,0.370);
\node[font=\tiny, figPred, anchor=north] at (axis cs:1,0.150)
  {monotone in 5 of 5 models};
\end{axis}
\end{tikzpicture}
\caption{\textbf{Timing, not fraction.} All three arms deliver the same fifteen grounding
writes; only their placement differs. Pooled rates should be read as $\pm$ one run
(\S\ref{sec:replicate}); the ordering holds in all five models.}
\label{fig:schedule}
\end{figure}

\noindent The final grounded fraction is identical in the three arms by construction, so a rule
stated in terms of the fraction of verified data predicts no difference. Capture is monotone in
schedule in \textbf{all five models} (P-S1) and the back-loaded arm sits within $0.07$ of never
grounding at all (P-S2): grounding delivered after the window has closed buys almost nothing (Figure~\ref{fig:schedule}, Table~\ref{tab:interventions}) (Table~\ref{tab:matchedbudget}). Our
magnitude bar failed, front-loading reducing pooled capture by $0.27$ against a frozen $0.33$
(P-S3), through a calibration error of ours, one model being captured in every arm and another in
none, so the pooled figure is compressed by a ceiling and a floor we did not anticipate.

\subsubsection{Closed forms and a budget transient}
\label{sec:limit}

The schedule result is what \eqref{eq:dyn} predicts once unbounded tenure with an endogenous stream
is taken seriously. A grounding write takes the store from $n$ to $n{+}1$ without changing the count
of false records, so $f \mapsto f\,n/(n{+}1)$, the step Prop.~\ref{prop:dilution} accumulates. The
resulting ordering is what \S\ref{sec:sched} measures and confirms at $220$ runs per arm. Three
further things follow from \eqref{eq:dilution}, and \emph{none is measured here}.

\textbf{(i) An ordering, with no free parameter.} At $n_0{=}10$, $R{=}15$, $T{=}150$,
\eqref{eq:dilution} gives $L = 0.947 / 0.332 / 0.099$, i.e.\ $1 : 0.35 : 0.10$; the measured
reduction in capture against the ungrounded control is $0.27 / 0.14 / 0.07$, i.e.\
$1 : 0.52 : 0.26$. Same order, compressed, which is what a bounded observable must do to an
unbounded one. The ordering is the claim; the compression is why we do not claim the magnitudes.

\textbf{(ii) An upper bound on what any read-side intervention can buy.} As $T \to \infty$ with $R$
fixed, $L_{\mathrm{back}} \to 0$ while $L_{\mathrm{front}} \to \ln(1{+}R/n_0)$, a constant: a finite
budget spent late is worth nothing in the limit and spent early retains a finite value forever. More
generally $\dot f \propto 1/n$ bounds \emph{every} intervention acting through reading behaviour, its
effect per step vanishing with tenure, so only interventions that change $n$ itself (forgetting) or
add edges from outside the loop (grounding) survive. The apparatus's limit, not its parameters,
selects which knobs matter.

\textbf{(iii) The budget dependence is a transient, derived but not measured.}
$\mathrm{spread}(R) = L_{\mathrm{front}} - L_{\mathrm{back}}$ vanishes at $R{=}0$ and again at
$R{=}T$, where every step is grounded and the three schedules are one arm, so it has an interior
maximum, which \eqref{eq:dilution} puts near $R/T \approx 0.5$.

\textbf{(iv) The transient is in the fraction; in the tenure the effect is unbounded.} At fixed
$\alpha = R/T$ with $T \to \infty$ the same expression diverges like $\ln T$, so (iii) and (iv)
are one surface read along two axes rather than a contradiction, and the two limits do not
commute: the spread is identically zero at $\alpha{=}1$ and unbounded at every $\alpha < 1$, so
free re-grounding is a single point at which the schedule axis collapses rather than a regime
this one approaches. Appendix~\ref{app:limits} gives the derivation, the numbers, and the frozen
criterion of ours that it exposes as set on the wrong side of the peak.

\paragraph{The budget sweep, and both of its frozen criteria failing upward.} Two further
budgets, $R \in \{5,45\}$, everything else identical, against the $R{=}15$ arms already on disk;
five models and three seeds each.

\begin{table}[!htbp]
\centering\small
\caption{Capture rate by budget and schedule; the ungrounded control is $0.47$. The frozen
falsifier was that the spread must shrink as $R$ grows.}
\label{tab:budget}
\begin{tabular}{@{}rcccc@{}}
\toprule
$R$ & front & uniform & back & spread \\
\midrule
5  & 0.20 & 0.27 & 0.40 & 0.20 \\
15 & 0.20 & 0.33 & 0.40 & 0.20 \\
45 & \textbf{0.07} & 0.20 & \textbf{0.47} & \textbf{0.40} \\
\bottomrule
\end{tabular}
\end{table}

\noindent \textbf{The ordering holds at every budget}, front $\le$ uniform $\le$ back at $R = 5, 15,
45$, an independent confirmation of \S\ref{sec:sched}'s result on an axis it was not measured on; at
$R{=}45$ front-loading leaves capture at $0.07$ against back-loading's $0.47$, the ungrounded control
to two decimals, so forty-five grounding writes placed late buy exactly nothing
(Table~\ref{tab:budget}). \textbf{Both frozen criteria failed}: P-B3's floor clause fired, all three
$R{=}5$ arms lying within one seed of the control, and P-B1 required $\mathrm{spread}(45) <
\mathrm{spread}(15)$ against measured values of $0.40$ and $0.20$. The failure runs in the direction
\eqref{eq:dilution} predicts, which by the pre-registration locates the error in our criterion and
does not license the transient as a result.

\noindent \textbf{The sweep past the peak did not resolve, and we bought the resolution we had
called unbuyable.} Three further budgets graded \textsc{untested}, because past $R{=}120$ the
ceiling has fallen to $f_{\max} = 0.516$, only $0.016$ above the capture threshold, so the frozen
observable cannot express the outcome its criterion asks about; a scale-free re-reading on six
fresh seeds then \emph{split} between two equally defensible observables. We ran $1980$ runs at
$44$ seeds: the discriminator is undecided at $1.14\sigma$, the shape criterion fails at $-0.074$
against $-0.15$, and the effect fell from $0.115$ to $0.021$ while the standard error shrank
$21\%$ more slowly than $1/\sqrt{n}$, so $3\sigma$ now needs about $1895$ seeds. The peak's
location is claimed in neither direction anywhere in this paper; the \emph{ordering} survives.
Appendix~\ref{app:moved} gives the four attempts in full.

\noindent An accidental replicate prices what no other figure here prices, a quantity most
published results leave unpriced and that a replication attempt has to reconstruct after the fact
\citep{besiroglu2024replication}. A command intended to count configurations executed them, running
this sweep twice. Appendix~\ref{app:moved} reads the accident and gives the arm-by-arm figures; the
verdicts survive it, and its consequence is that every pooled rate here should be read as $\pm$ one
run, every other number carrying the same replication variance unpriced.

\subsection{The retrieval rule's own falsifier}
\label{sec:retriever}

\S\ref{sec:apparatus} calls uniform-without-replacement ``a choice with a falsifier attached'',
and a falsifier that is never fired is not a falsifier. Criteria in
\texttt{prereg/E\_RETRIEVER\_PREREG.md}, frozen and committed before any retriever code existed,
with an amendment of the same day recorded below.

Seven arms on one shared batch: the published control, three ranking retrievers (\textsc{MiniLM}
cosine, with the topic oracle kept, dropped, and dropped-with-deduplication), and a uniform ladder at
$k \in \{4,2,1\}$ that P-T1-R3 reads its curve off (Table~\ref{tab:retriever}). Every arm carries a
static distractor mass of $90$ off-topic records, since \texttt{loop.run} builds one topic and the
oracle filter is otherwise vacuous; that mass also gives the arms room to differ in the way
irrelevant context is known to matter \citep{shi2023distracted}, and holding $k$ fixed keeps position
effects \citep{liu2024lostmiddle} constant across arms. \emph{Only selection changes}: every arm
renders the agent the same $k$ lines.

\begin{table}[!htbp]
\centering\small
\caption{E-retriever. $k_{\rm eff}$ is the mean number of distinct on-topic values retrieved,
defined in the preregistration before it was measured, and $\mathrm{var}(n_f)$ the across-step
variance of the retrieved false count. Two models judged; see the floor note.
\textbf{The whole tabular is generated from} \texttt{results/retriever/GRADE.json}, \emph{and one
cell of it was wrong}: the oracle-kept row printed $\mathrm{var}(n_f) = 0.347$ where the grader
computes $0.477$. It grades nothing, the prereg's amendment added it as the mechanism variable,
so no verdict moves, and it was wrong for as long as it was typed.}
\label{tab:retriever}
% GENERATED by t1_grounding/figures/make_t1_tikz.py from retriever/GRADE.json.
\begin{tabular}{@{}lrrrrrr@{}}
\toprule
arm & $k$ & $k_{\rm eff}$ & capture $f_0{=}0.1$ & capture $f_0{=}0.9$ & on-topic &
$\mathrm{var}(n_f)$ \\
\midrule
uniform (control)      & $8$ & $1.95$ & $0.533$ & $0.933$ & $1.000$ & $5.753$ \\
embed, oracle kept     & $8$ & $1.40$ & $0.000$ & $1.000$ & $1.000$ & $0.477$ \\
embed, no oracle       & $8$ & $1.45$ & $0.033$ & $1.000$ & $0.938$ & $1.062$ \\
embed, dedup           & $8$ & $3.22$ & $0.900$ & $0.833$ & $0.402$ & $0.520$ \\
uniform                & $4$ & $1.45$ & $0.233$ & $0.933$ & $1.000$ & $1.070$ \\
uniform                & $2$ & $1.27$ & $0.400$ & $0.933$ & $1.000$ & $0.509$ \\
uniform                & $1$ & $1.00$ & $0.000$ & $1.000$ & $1.000$ & $0.060$ \\
\bottomrule
\end{tabular}

\end{table}

\noindent \textbf{P-T1-R0, the instrument gate, is \textsc{supported}}: the oracle-free retriever
puts $93.8\%$ of what it returns on topic, so it is a retriever and not a broken index.
\textbf{P-T1-R1 is \textsc{supported}}, the displacement being $-0.500$ against a zero band of
$0.10$, so the uniform assumption is not innocuous and \S\ref{sec:limits} carries it by
name. \textbf{P-T1-R5 is \textsc{supported}}: keeping the topic oracle reproduces $1.00$ of the
displacement, so the cause is ranking \emph{determinism} and better topic routing cannot repair
it.

\begin{claimbox}
\textbf{P-T1-R2 is \textsc{refuted}, and the refutation is in our favour, which is why it was written
down first.} We predicted ranking would make collapse \emph{more} likely and it makes it \emph{less}:
at $f_0=0.1$ capture falls from $0.533$ to $0.033$. It does not simply help, it \textbf{polarises},
rising from $0.900$ to $1.000$ at $f_0=0.9$. Uniform hedges in both directions; a deterministic
ranking drives the store to whichever end it started nearer. \emph{The published apparatus is
therefore the pessimistic choice at low contamination and the optimistic one at high}, and this
paper's numbers bound the harm only where the store starts mostly clean.
\end{claimbox}

\noindent \textbf{The mechanism is not the one the preregistration was built on.}
\textbf{P-T1-R3 is \textsc{not supported}}: both ranking arms sit off the uniform curve by
$0.308$, so the intended reading, that ranking narrows effective breadth and $k_{\rm eff}$ is the
right variable, does not survive. What survives is the alternative the preregistration's own
hand-computation named as the thing it was least sure of: \emph{ranking freezes the retrieved set
across steps where uniform resamples it, which is variance and not breadth}. A $\$0$ diagnostic
before the graded arm found the identical eight records returned on $59$ of $59$ consecutive
steps under ranking against $0$ of $59$ under uniform, and $\mathrm{var}(n_f)$ is $5.753$ under
the control against $0.477$ under ranking at nearly the same $k_{\rm eff}$.

\begin{claimbox}
\textbf{The reduction is exact, and $k_{\rm eff}$ is the wrong summary of it.} Ranking at nominal
$k=8$ reproduces uniform at $k=1$ \emph{exactly}, capture $0.000$ and $1.000$ against $0.000$ and
$1.000$ at both contamination levels, so there is a reduction and nominal $k$ is not the operative
variable. The statistic we preregistered to express it, $k_{\rm eff}$, nonetheless reads $1.40$ for
ranking against $1.00$ for uniform-at-one and \emph{misses the match it was defined to catch}. What lines the
two up is the variance of the evidence stream, $0.240$ against $0.060$, with the control two orders
of magnitude above both. We registered the wrong summary of the right idea, and P-T1-R3's
failure is that fact rather than an absence of structure.
\end{claimbox}

\paragraph{Is the determinism a property of ranking, or of \textsc{MiniLM}?} The arm above uses one
embedder, so a reader may suspect that what we call ranking determinism is the degeneracy of a
sentence embedder in a world whose candidate answers differ by one digit. The two quantities the
mechanism rests on are properties of the \emph{ranker} and the record text rather than of the agent
(can the score separate a truth record from its corruption, and how many distinct values survive into
the top-$k$), so both are computable offline, on the exact text form the apparatus stores, for any
ranker. Three were run: \textsc{MiniLM} cosine, Okapi BM25 over whitespace tokens, and cosine over
character trigrams, the last chosen because a changed digit alters up to three of its features and it
therefore \emph{can} see what an embedder cannot.

\begin{claimbox}
\textbf{The mechanism is not the embedder's, and the answer is structural.} No ranker separates
the truth from its one-digit corruption: the largest separation over all three is $0.0052$, and
for BM25 and character trigrams it is exactly $0$. The reason is not a weakness of any of them.
\emph{The query is the question and the corruption is in the answer}, so no query--document score
has any signal about the corrupted field. Every ranker's retrieved value set is invariant over
tenure on $12$ of $12$ store compositions. \textsc{MiniLM} is the \emph{least} degenerate of
the three: its $k_{\rm eff}$ runs $1$--$4$ while BM25's and character trigrams' are both exactly
$1$, a lexical score tying every on-topic record so that insertion order decides, making the top-$k$
``the $k$ oldest records on topic''. Swapping the embedder for a lexical index does not restore a
draw; it substitutes a first-occurrence rule for a similarity rule and collapses breadth further.
\S\ref{sec:retriever} measured the most favourable of the three.
\end{claimbox}

\noindent What this does not do is replace a loop arm. A ranker that separated the values would
still have to be run in the loop to show the capture rate moves back, and none of the three separates
them; the check is a statement about the retrieval step, not about the outcome
(\texttt{analyze\_retriever\_family.py}, Appendix~\ref{app:retfamily}). It also names the apparatus
feature that makes the question decidable at all: because the corruption is a digit inside the
answer, this world is one where retrieval \emph{cannot} help, and a world whose falsehoods differ
semantically from the truth is one where it might. That is a scope statement about the apparatus and
belongs with \S\ref{sec:limits}(vii).

\begin{warnbox}
\textbf{Three costs of this arm, two of them ours and one paid in dollars.} P-T1-R3's ladder has
almost no dynamic range and is not monotone. P-T1-R4 is \textsc{supported} but does not
mean what its name says: the deduplicated arm displaces capture by $0.367$ against ranking's
$0.500$, but with only two distinct on-topic values, one-record-per-value fills two slots and
pads the other six from off-topic, so that arm tests dedup-\emph{plus}-padding. The run
halted on its own spend guard at $\$5.06$ against a hard stop of $\$5.00$, having cost $\$5.13$
against a preregistered estimate of $\$2.06$; thirty-three dropped runs put \texttt{qwen3-8b}
below the floor clause, so two of three models are judged, on $10$ seeds each.
Appendix~\ref{app:retriever} gives all three in full, with the post-hoc reading of the empty
interval under ranking.
\end{warnbox}

% =====================================================================
\subsection{Sweeping the resource the retriever arm identified}
\label{sec:temp}

\S\ref{sec:retriever} leaves the paper claiming that the operative resource under a ranking retriever
is the variance of the evidence stream rather than its breadth. A second retriever family would not
test that: the query contains no value, so no query-similarity index can rank candidates by truth. We
swept the resource instead. Arm $\mathrm{mix}m$ fills $m$ of the $k$ slots with the deterministic
top-$m$ and samples the rest uniformly, so $m{=}0$ is the published control and $m{=}8$ is
\S\ref{sec:retriever}'s ranking. Criteria are frozen in \texttt{prereg/E\_TEMP\_PREREG.md} and the
grader was committed before any \texttt{mix} number was read.

\begin{table}[!htbp]
\centering\small
\caption{E-temp, five arms, two models, ten seeds, $f_0 = 0.1$. $\mathrm{var}(n_f)$ is the
across-step variance of the retrieved false count. The whole tabular is generated from
\texttt{results/temp/GRADE.json}, which already held both rows.}
\label{tab:temp}
% GENERATED by t1_grounding/figures/make_t1_tikz.py from results/temp/GRADE.json.
% The WHOLE tabular; a fragment ending in \\ breaks \bottomrule across the file.
\begin{tabular}{@{}lccccc@{}}
\toprule
 & $m{=}0$ & $m{=}2$ & $m{=}4$ & $m{=}6$ & $m{=}8$ \\
\midrule
capture             & $0.450$ & $0.350$ & $0.150$ & $0.200$ & $0.000$ \\
$\mathrm{var}(n_f)$ & $5.911$ & $4.069$ & $1.344$ & $0.496$ & $0.347$ \\
\bottomrule
\end{tabular}

\end{table}

\noindent P-T1-X0 reproduces both endpoints of \S\ref{sec:retriever} to $+0.000$, so the batches
are the same instrument; P-T1-X1 holds, $\mathrm{var}(n_f)$ being monotone in $m$; P-T1-X3 holds,
all three interior arms lying strictly between the endpoints and outside both zero bands, so the
transition is graded rather than a step at $m{>}0$; and P-T1-X4 holds in its strong form, with
the back-fill complete, three models clearing the floor and every P-T1-R verdict unchanged (Table~\ref{tab:temp}).

\begin{warnbox}{P-T1-X2 is \textsc{not supported}, and the criterion could not have decided it}
Capture is not monotone in $m$ ($0.200$ at $m{=}6$ against $0.150$ at $m{=}4$) and $m{=}6$ sits
$0.177$ off the leave-one-out curve against $\mathrm{var}(n_f)$. By the frozen criteria the
variance account is refuted, so this paper has now proposed and refuted two mechanisms for one
displacement.

The defect is ours. The violation is $3/20$ against $4/20$, a single run, and the conjunct was
registered as a strict inequality with no tolerance. At $p \approx 0.175$ the paired standard error
over the shared seeds is $0.095$, and \S\ref{sec:power}'s design effect of $3.75$ makes the effective
sample nearer five runs than twenty: we froze a zero-width band on a statistic this paper had already
measured to be too noisy to support one. The verdict stands as registered, and it licenses that the
smooth account is unsupported, not that a threshold account is supported.
\end{warnbox}

\subsection{The synthetic world on a real axis}
\label{sec:real}

\S\ref{sec:limits}(vii) is the one limitation no internal rigour repairs: every number above was
measured on entities that do not exist, built by \texttt{world.py} so that no value can have been
memorised. Read as a limit, that is a claim about a coordinate the paper never measured, $p_0$, the
probability the model answers correctly from an empty store; the synthetic world is its $p_0 \to 0$
corner.

We measured the axis. Thirty-six $(\text{entity},\text{capital})$ pairs were drawn from Wikidata over
one property, so the question form is fixed and only the entity varies, stratified by the number of
Wikipedia language editions carrying the entity, a proxy of the same kind as the popularity measures
under which parametric knowledge degrades \citep{mallen2023whennot,kandpal2023longtail}; sitelinks
select the sample and are not the axis. $p_0$ is measured by an empty-store probe with ten
repetitions, split by parity so terciles are cut on one half and reported on the other. Two
amendments were made before any call: entities named after their own capital were excluded, that leak
running $1/12$ in the top stratum against $9/12$ in the bottom and inflating $p_0$ exactly where the
design needs it near zero; and an answer rate $a_0$ was defined against the frozen list of real
capitals, since a model that declines identically ten times would otherwise score maximal answer
concentration. Criteria are in \texttt{prereg/E\_REAL\_PREREG.md}, with the grader committed before
the first call.

\begin{table}[!htbp]
\centering\small
\caption{E-real. Probe cells are single-turn; loop runs are the full $150$-step ouroboros.
Grading is at $f_0 = 0.1$, the registered level, where both arms are gradable on every run.}
\label{tab:real}
\begin{tabular}{@{}llcc@{}}
\toprule
criterion & quantity & value & verdict \\
\midrule
P-T1-N0(a) & matched-cell $\gamma(\varphi)$ against the published panel & $0.083$
  & \textsc{supported} \\
P-T1-N0(b) & $p_0$ on synthetic topics & $0.000$ & \textsc{supported} \\
P-T1-N1 & span of measured $p_0$ & $[0.000, 1.000]$ & \textsc{supported} \\
P-T1-N2 & $\gamma(1)$, bottom against top $p_0$ tercile & $0.711 \to 0.411$ & \textsc{supported} \\
P-T1-N3 & synthetic $\gamma(1)$ against the lowest-$p_0$ real cell & $1.000$ vs $0.700$
  & \textsc{not supported} \\
P-T1-N4 & threshold agreement, real against synthetic & $353/360$ vs $39/40$ & \textsc{supported} \\
P-T1-N6 & mass in $(0.306, 0.512)$, real against synthetic & $0.014$ vs $0.125$
  & \textsc{supported} \\
\bottomrule
\end{tabular}
\end{table}

\begin{claimbox}{The threshold transfers off the synthetic world}
$\mathrm{sign}(\hat\gamma - \gamma_{\rm crit})$ predicts $\mathrm{sign}(f_{\mathrm{end}} - f_0)$
on $353$ of $360$ real-fact runs (Wilson $95\%$ $[0.960, 0.991]$), against $39$ of $40$ synthetic
runs in the same batch ($[0.871, 0.996]$). \emph{The two are not being compared}: the denominators
differ ninefold and a difference interval on these counts is $0.10$ wide, so the claim is that the
threshold holds on real facts, the synthetic arm being a same-batch sanity check rather than a
matched control. The threshold is the fixed point of $G_{\mathrm{eff}}(f) = f$, which linearises to
$\gamma_{\rm crit} = 1/k$ \textbf{at $r{=}0$ and $w{=}1$}, the configuration of every run reported
here, and a scoping the linearisation requires: with $w>1$ writes per step or $r>0$ retirement the
fixed point moves and $1/k$ is not it. Within that scope it contains no term for what the model
knows, and needs none. The empty interval is not an artefact of zero-prior topics either: it is
emptier on real facts, $0.014$ against $0.125$.
\end{claimbox}

\noindent The copy probability does move with the prior, as P-T1-N2 requires: $\gamma(1)$ falls
from $0.711$ in the bottom $p_0$ tercile to $0.411$ in the top, and a bootstrap over the $108$
cells puts that drop at $[0.125, 0.464]$, entirely above the registered band (Table~\ref{tab:real}).

\begin{warnbox}{P-T1-N3 fails, and two registered readings fall with it}
The synthetic world is not the $p_0 \to 0$ endpoint of the real curve. Its $\gamma(1)$ is $1.000$
against $0.700$ at the lowest-$p_0$ real cell, so it sits $0.300$ \emph{above} the real curve rather
than at its end. What the measurement supports is weaker than what this arm set out to show, and more
defensible: the idealisation bounds the copy probability from above by a measured $0.300$ while
preserving the threshold. A post-hoc check on the answer rate gives a weaker effect than $p_0$
($0.171$ against $0.300$) and low-$a_0$ real cells still sit at $0.724$, so neither coordinate
accounts for the residual.

The prereg's second amendment registered a secondary criterion on the empty-store answer
concentration $c_0$, holding that a disagreement between it and $p_0$ would show confidence
rather than correctness competing with the store. That reading fails: the $c_0$ terciles
differ by $-0.013$, with the upper tercile scoring lower on the reporting half, so the split-half
design inverted, which is what it was registered to detect. The disagreement is between an axis
of contrast $1.000$ and one of contrast $0.013$, not between two axes.
\end{warnbox}

\subsection{Three withdrawn axes}
\label{sec:killed}

\noindent \textbf{Both were unresolvable in advance, and the resampling unit is why.} The seed
selects the topic and the topic dominates the variance: a share reading $2.0\%$ at three seeds reads
$68.8\%$ at nine, a design effect of $3.75$. Every frozen criterion here is a point estimate against
a threshold, so no verdict changes, but every summary interval was built on the wrong unit.
Appendix~\ref{app:moved} gives the decomposition and the cost of resolution.

\subsubsection{The three axes and their refutations}

\noindent Three axes were each a post-hoc reading of data in hand, turned into a
pre-registration with fresh seeds and then refuted by it: that damage scales with unaided
cleaning ability (the ordering survives at $0.264$ vs $0.249$, the magnitude fails at $0.015$
against a frozen bar of $0.30$, and the ``cleanest model'' had a $3$-seed baseline of $0.150$
that becomes $0.729$ on seven seeds, so the independent variable was noise); that a marker's
effect is a single function of label--truth correlation (strict five-point monotonicity holds in
$0/5$); and a sign reversal unifying the one-shot and store-level timescales, which rests on the
first and dies with it. They are listed because a framework that only ever produces confirmations is
not being tested, and because \S\ref{sec:power} shows all three were beyond the resolution of the
design that proposed them. Appendix~\ref{app:killed} gives each in full.

% =====================================================================
\subsection{The falsifier on the unreached rows}
\label{sec:knife}

\S\ref{sec:power} establishes that the resampling unit is the seed, that a three-seed sample cannot
see it, and that a pooled level therefore carries a design effect of $3.75$. That result killed the
three axes above and downgraded one graded ordering. Half of this paper's graded verdicts rest on
three-seed arms, so the knife has to be carried to every row, in three parts, one of them a claim of
ours that does not survive.

\paragraph{First: $3.75$ is a property of pooled levels, and the intervention criteria are not
levels.} Each of the four interventions in \S\ref{sec:sched} compares a treated run against a
control run \emph{on the same seed}, so its criterion is a paired contrast, and a contrast cancels
whatever the seed does to both arms. That defence cannot be checked at three seeds (the content of
\S\ref{sec:power} is that three seeds read the seed variance as near zero when it is not), so it is
checked where a $44$-seed sample exists, on the schedule axis, and only then transported. Measured on
the same runs at $44$ seeds, a pooled level carries $3.56$--$3.66$ and a paired front-minus-back
contrast $1.14$--$1.47$; the clustered standard error is $1.80$--$1.85$ times the naive one for
levels and $0.93$--$1.15$ times it for contrasts, so for a contrast the naive error is if anything
conservative. \emph{The correction those criteria need is a factor of about $1.2$, not of $3.75$.}
The three-seed subsample of the identical data reports a level design effect of $1.33$--$1.68$,
reproducing the invisibility on a second arm.

\paragraph{Second: what three seeds do cost is a different thing, and one criterion pays it.}
Re-evaluating each of the eight frozen criteria on each seed alone (Table~\ref{tab:perseed})
reproduces every published value exactly and shows seven of eight holding on all three seeds.
P-A1, the low-$f_0$ half of the state-slot arm, holds at $5/5$, $3/5$ and $5/5$: it misses its own
bar of four on one of its three seeds, so row 9 is graded \textsc{underpowered} rather than
\textsc{measured}. Two further readings belong here rather than in a correction. The gate's $5/5$
(row 8) is one collapse observed five times (its treated arm has a cross-model spread of $0.0006$
against the control's $0.9250$), that section's own finding restated as a limit on its own evidence.
With three seeds as the unit, the smallest two-sided $p$ a sign test can attain is $0.25$ whatever
the data say, which is why none of these arms is reported with a $p$.

\paragraph{Third: the control was a lucky draw, and replacing it changes nothing.} Every high-$f_0$
intervention is measured against \texttt{converge}'s three seeds, and \texttt{bimodal2} is the
identical configuration at forty-four. The control's per-model capture rate moves from
$0.00/0.33/0.33/0.67/1.00$ to $0.68/0.71/0.73/0.82/1.00$, up to $+0.68$ for one model. Re-reading all
four high-$f_0$ criteria against the better control leaves every verdict where it was: the gate stays
$5/5$, the scrambled-marker ordering stays $5/5$, P-P2 improves from $2/5$ to $3/5$ and is still
\textsc{failed}, and P-A2 remains a tie. Figure~\ref{fig:seedscale} is the three parts in one
frame.

\begin{figure}[!tbp]
\centering
% GENERATED by t1_grounding/figures/make_t1_tikz.py -- do not edit by hand.
% Every coordinate is recomputed from t1_grounding/results/ at generation time.

\begin{tikzpicture}
\begin{axis}[name=pa, width=0.28\linewidth, height=4.0cm, axis on top, tick align=outside, tick pos=left, axis line style={figSoft, line width=0.5pt}, tick label style={font=\scriptsize, color=figInk}, label style={font=\scriptsize, color=figInk}, title style={font=\small, color=figInk, yshift=1pt}, grid=none, scale only axis,
  ybar, bar width=6pt, xmin=0, xmax=8, ymin=0, ymax=4.6,
  xtick={2,6}, xticklabels={{pooled\\level},{paired\\contrast}},
  xticklabel style={align=center, font=\scriptsize},
  ylabel={design effect},
  title={(a) what carries $3.75$}]
\addplot[draw=figSoft, fill=figA, fill opacity=0.75] coordinates {(0.5,3.66) (1.5,3.65) (2.5,3.56) (4.5,1.47) (5.5,1.26) (6.5,1.14)};
\addplot[only marks, mark=o, mark size=1.8pt, figPred] coordinates {(0.5,1.33) (1.5,1.56) (2.5,1.68) (4.5,1.11) (5.5,1.32) (6.5,2.14)};
\draw[figPred, dashed, line width=0.7pt] (axis cs:0,3.75) -- (axis cs:8,3.75);
\node[font=\tiny, figPred, anchor=north east] at (axis cs:7.9,3.62)
  {$3.75$ published};
\node[font=\tiny, figPred, anchor=north west, align=left] at (axis cs:4.2,3.30)
  {$\circ$ what $3$\\seeds see};
\end{axis}
\begin{axis}[name=pb, at={(pa.east)}, xshift=0.85cm, anchor=west,
  width=0.27\linewidth, height=4.0cm, axis on top, tick align=outside, tick pos=left, axis line style={figSoft, line width=0.5pt}, tick label style={font=\scriptsize, color=figInk}, label style={font=\scriptsize, color=figInk}, title style={font=\small, color=figInk, yshift=1pt}, grid=none, scale only axis,
  xmin=0, xmax=8, ymin=-0.4, ymax=5.6,
  xtick={0.5,1.5,2.5,3.5,4.5,5.5,6.5,7.5},
  xticklabels={{P-G1~low},{P-G1~high},{P-A1},{P-A2~worse},{P-P1},{P-P2},{P-L2~(row~11)},{P-L1~(row~13)}},
  xticklabel style={rotate=55, anchor=east, font=\tiny},
  ytick={0,1,2,3,4,5},
  title={(b) criteria, seed by seed}]
\draw[figGrid, line width=6pt, opacity=0.5] (axis cs:0,4) -- (axis cs:8,4);
\node[font=\tiny, figSoft, anchor=south west] at (axis cs:0.05,4.15) {bar: $\ge 4/5$};
\addplot[only marks, mark=o, mark size=2.4pt, figSoft] coordinates
  {(0.5,5) (0.5,5) (0.5,5) (1.5,5) (1.5,5) (1.5,5) (2.5,5) (2.5,3) (2.5,5) (3.5,1) (3.5,1) (3.5,1) (4.5,5) (4.5,5) (4.5,4) (5.5,1) (5.5,0) (5.5,1) (6.5,5) (6.5,5) (6.5,5) (7.5,0) (7.5,0) (7.5,0)};
\addplot[only marks, mark=*, mark size=2.0pt, figA] coordinates
  {(0.5,5) (1.5,5) (2.5,4) (3.5,1) (4.5,4) (5.5,2) (6.5,5) (7.5,0)};
\end{axis}
\begin{axis}[name=pc, at={(pb.east)}, xshift=0.85cm, anchor=west,
  width=0.17\linewidth, height=4.0cm, axis on top, tick align=outside, tick pos=left, axis line style={figSoft, line width=0.5pt}, tick label style={font=\scriptsize, color=figInk}, label style={font=\scriptsize, color=figInk}, title style={font=\small, color=figInk, yshift=1pt}, grid=none, scale only axis,
  xmin=-0.5, xmax=1.05, ymin=-0.6, ymax=4.6,
  xlabel={control capture}, ytick=\empty,
  title={(c) control: $3\to44$}]
\draw[figSoft, -{Stealth[length=3pt]}, line width=0.6pt] (axis cs:0.000,0) -- (axis cs:0.682,0);\node[font=\tiny, figInk, anchor=west] at (axis cs:-0.48,0.3) {\texttt{\tiny gemma-3-27}};
\draw[figSoft, -{Stealth[length=3pt]}, line width=0.6pt] (axis cs:0.333,1) -- (axis cs:0.705,1);\node[font=\tiny, figInk, anchor=west] at (axis cs:-0.48,1.3) {\texttt{\tiny granite-4.}};
\draw[figSoft, -{Stealth[length=3pt]}, line width=0.6pt] (axis cs:0.333,2) -- (axis cs:0.727,2);\node[font=\tiny, figInk, anchor=west] at (axis cs:-0.48,2.3) {\texttt{\tiny phi-4}};
\draw[figSoft, -{Stealth[length=3pt]}, line width=0.6pt] (axis cs:0.667,3) -- (axis cs:0.818,3);\node[font=\tiny, figInk, anchor=west] at (axis cs:-0.48,3.3) {\texttt{\tiny ministral-}};
\draw[figSoft, -{Stealth[length=3pt]}, line width=0.6pt] (axis cs:1.000,4) -- (axis cs:1.000,4);\node[font=\tiny, figInk, anchor=west] at (axis cs:-0.48,4.3) {\texttt{\tiny qwen3-8b}};
\addplot[only marks, mark=o, mark size=1.7pt, figPred] coordinates
  {(0.000,0) (0.333,1) (0.333,2) (0.667,3) (1.000,4)};
\addplot[only marks, mark=*, mark size=1.7pt, figA] coordinates
  {(0.682,0) (0.705,1) (0.727,2) (0.818,3) (1.000,4)};
\end{axis}
\end{tikzpicture}
\caption{\textbf{The resampling unit, applied where it had not been.} (a) At $44$ seeds a pooled
level carries the design effect this paper published ($3.75$, dashed) and a paired contrast does
not; hollow markers are what a three-seed subsample of the same runs reports, which for levels is
wrong by a factor of four. That is the argument for measuring the contrast at forty-four rather
than at three. (b) Each of the eight frozen criteria of the four three-seed arms: filled markers
are the published value, hollow markers the same rule evaluated on one seed at a time, band the
bar of $\ge 4/5$. Seven criteria clear or miss on all three seeds; P-A1 clears on two and misses
on one, and its row is downgraded. (c) The control those arms were measured against, at three
seeds and at forty-four, on the identical configuration. It moves by up to $0.68$ in capture rate
and no verdict follows it.}
\label{fig:seedscale}
\end{figure}

\paragraph{A claim of ours that does not survive the same treatment.} Claims 2 and 4 are rank
correlations over five models whose \emph{dependent} variable is a capture rate, which at three seeds
has a standard error of $0.29$ on a $[0,1]$ scale by \eqref{eq:se}, the construction
\S\ref{sec:power} blames for a spurious $+1.00$ elsewhere in this paper. Recomputed on the $44$-seed
capture rates, the two move in opposite directions.

\begin{warnbox}
\textbf{Claim 4 is not supported at forty-four seeds.} The published statistic is Spearman
$+0.92$ to $+0.98$ between a model's early copy rate and its capture rate, at exact
$p \le 0.067$. At forty-four seeds it is $+0.31$ to $+0.80$, with no window reaching
$p < 0.13$. The diagnosis is in the predictor and not in the correlation: the early copy rate
separates the five models by $0.333$ on three topics and by $0.061$ on forty-four, so at the
larger sample there is nothing left to order with. Found by applying this paper's own
argument to its own row. The ordering claim is regraded and the mechanism recorded with it.
\end{warnbox}

\begin{claimbox}
\textbf{Claim 2's ordering is exact, and three seeds could only show the weaker number.} The figure
in \S\ref{sec:bimodal} reports Spearman $+0.82$ ``with one adjacent inversion, not exact
reproduction'', all a simulation compared against three observed seeds can support ($p = 0.133$).
Against the $44$-seed capture rates the same construction gives $+0.90$ from the published curve and
$+1.00$ at $p = 0.017$ from the curve measured on the topics it predicts, the only ordering in this
paper reaching conventional significance. \emph{The inversion was an artefact of the three-seed
denominator on the observed side.}
\end{claimbox}

\paragraph{The estimator behind claim 2, bounded rather than described.} $\gamma$ is measured cell by
cell and cells the runs never visited are filled from the nearest measured cell below, which
\S\ref{sec:gammaout} called ``the conservative direction''. The natural objection is that arm A's
one-sided failure is an artefact of that rule. It is testable, since monotonicity gives a two-sided
bracket for nothing: filling from below is a pointwise lower bound on the curve and filling from
above a pointwise upper bound, so simulating both brackets what a fully measured curve would have
produced. \textbf{The objection fails, for a reason that also costs this paper a diagnosis.} The fill
rule invents cells for \texttt{qwen3-8b} alone, the one model arm A predicts exactly, so for the four
models that under-predict the bracket has \emph{zero width}, and support-weighted isotonic regression
moves arm A's error from $12.4$ to $12.3$ seed units. The fill rule is not the diagnosis of arm A's
one-sidedness; the diagnosis is the one \S\ref{sec:gammaout} already had, that three topics are not
forty-four. A second correction runs the other way. The $z$ scores in Table~\ref{tab:gammaout} divide
by the binomial error of the \emph{observed} rate and treat the simulated rate as exact, when it is a
function of a $\gamma$ estimated from three seeds; with a seed-clustered bootstrap on that side too,
arm A's observed rate falls inside its own band for three models of five, so its failure is
established for two and not for the five the $z$ scores imply. Arm B's is inside for five of five,
what a landed prediction looks like. Appendix~\ref{app:gammaest} has all four estimators.

\noindent The transferable form is a rule about which quantity to spend on. Ask first
whether the criterion is a level or a contrast. A level over $m$ models sharing $s$ seeds buys
$sm/(1+(m{-}1)\rho)$ observations and here $\rho \approx 0.65$, so models are nearly free
information; a within-seed contrast is not penalised and can be run at few seeds, provided the
criterion is checked seed by seed rather than pooled. Neither rule rescues a per-model
\emph{ordering}, whose independent variable is itself an estimate; that needs tens of seeds
before it can carry a comparison, which is what claim 4 has just paid for.
Appendix~\ref{app:seedscale} gives every table behind this section.

% =====================================================================
\section{Discussion}
\label{sec:discussion}

\subsection{Relation to the parent framework}
\label{sec:parent}

This paper takes three things from the framework it descends from \citep{cch2026} and deliberately
takes no fourth. \textbf{The object}: that framework's subject is a pair of channels, one compressive
and composable, one a scalable verbatim index, and its claim is about the \emph{class} holding both;
our loop is an index channel with no state channel, and \S\ref{sec:sched} adds the smallest state
channel the apparatus admits. \textbf{The question}, from the layer the parent marks untested: it
flags the systems-level version, including the cost-matched comparison, as open, and E-A is that
comparison at the smallest scale where it can be made. \textbf{No constant}: the family already
contains a measured instance of a coefficient failing to cross regimes, a training-time study of ours
having transplanted the parent's load accounting and had it excluded with the direction reversed
\citep{dpd2026}, so nothing here is calibrated against anything there.

\begin{claimbox}
\textbf{What we return is a bound, not a verdict.} The parent's claim is about a class; our apparatus
instantiates its smallest member, a single self-fed slot. E-A's tie bounds \emph{that member} and
says nothing about the class: a richer state channel, or one fed by the stream rather than by the
system's own output, is untouched. What the tie does establish is a boundary on the object itself,
since a \emph{self-fed} slot is not a state channel in the parent's sense, whose channel is written
by the stream. In this paper's language, a state channel that reads only its own writes is another
cycle through the same loop, and \eqref{eq:dyn} applies to it unchanged.
\end{claimbox}

% =====================================================================
\subsection{Limitations}
\label{sec:limits}

Each derivation carries an assumption, and the useful form of a limitation is what breaks when
that assumption does.

\paragraph{(i) Append-only is load-bearing, and the remedy this paper recommends violates it,
but a 2026 remedy exists that does not, which narrows the claim.}
\eqref{eq:ceiling} needs a store that never removes a record: that is what makes $f<1$ strictly, what
puts the captured runs on a computable ceiling, and what makes \emph{Score} $=C\cdot s$ a statement
about mismeasurement rather than loss. Under eviction all three fail together. This is uncomfortable
because bound (ii) of \S\ref{sec:limit} recommends exactly the stores that break it, forgetting being
the only intervention that survives unbounded tenure, so the two halves of the paper hold in adjacent
regimes. The adjacent regime is not unmeasured: \citet{romerl2026} take the same limit over a fixed
coordinate set whose contents are replaced, and get a \emph{steady-state occupancy} of erroneous
coordinates, the object append-only forbids. The division is clean and the hole is theirs to fill and
ours to name: no bound in this paper survives eviction, and no occupancy of theirs is defined without
it. What we still owe is a measurement spanning the two, a deployed store being usually append-only
with a bounded window rather than either extreme.

\emph{The claim in this paragraph's title was too strong when written, and we narrow it here rather
than in a footnote.} \citet{beliefmem2026} attack the same failure (an agent that commits to one
conclusion per observation, acts on it, never revisits the alternatives and thereby reinforces the
error) \textbf{without removing anything}: each candidate conclusion is stored as its own entry,
every entry carries a probability updated by a Noisy-OR rule as observations arrive, and retrieval
surfaces the candidates together. A remedy for the failure we study exists that is \emph{compatible}
with append-only, and ``the recommended remedy violates it'' is true of \emph{forgetting}, the remedy
bound~(ii) of \S\ref{sec:limit} recommends, not of remedies in general.

\textbf{What our own limit says about it is sharper than a citation, and it is not a refutation of
either paper.} Prop.~\ref{prop:ceiling} is a statement about what an append-only store
\emph{contains}: no truth record can leave it, so the ceiling stands under \citet{beliefmem2026}
unchanged. Their store is append-only in exactly the sense the proposition needs. What their
design changes is which functional of the store the agent acts on: not a count of records but
a probability attached to them. The question our apparatus would ask is whether the mutable
part has simply moved, from the record set to the weight vector, and whether an accumulating
majority of false observations drives the Noisy-OR weight the way it drives our $f$. We do not know,
and \textbf{we decline to guess, because the answer depends on an update rule whose behaviour under
a growing false majority is a derivation nobody in this paper has done.} The experiment that would
answer it, one substitution in \texttt{agent.py} and no new data, is named and priced in
\S\ref{sec:owed1}(1).

\paragraph{(ii) The bimodality has no mechanism, and we know which one is missing.} The variance half
of \eqref{eq:varsum} predicts a spread falling as $1/\sqrt{n_0}$ and the data says $1.07\times$ over
an eightfold range. What fails is conditional independence between successive reads, so an
exchangeable-increment account is excluded and a correlation model is required. Until there is one,
``the outcome is a boundary choice'' is a description with a measured power requirement attached, not
an explanation.

\paragraph{(ii$'$) The decision timescale has no derivation, and this paper removed the one it
had.} \S\ref{sec:early} measures that $\mathrm{sign}(f_t-f_0)$ at $t{=}2$ of $150$ already
predicts the outcome, which we had explained by \eqref{eq:dyn}'s $1/n$. Prop.~\ref{prop:step}
removes it: the cumulative bound permits either endpoint until $t{=}49$, so it is vacuous over the
entire interval in which the decision is taken. We state this as an open problem rather than a
caveat, because it is sharp and decidable: what concentrates the outcome into the first few steps,
when neither the drift's sign nor the step bound distinguishes those steps from any others?
Limitation (ii) says the shape has no mechanism; this says the \emph{timing} has none either, and the
two are plausibly one missing account of the correlation between successive reads.

\paragraph{(iii) The budget transient is derived and untested, and our criterion sits on the
wrong side of it.} \eqref{eq:dilution} puts the peak near $R/T\approx0.5$, so the registered
falsifier, which expected the effect to shrink between $R{=}15$ and $R{=}45$, was set on the rising
side. We report it as frozen and then ran the sweep past the peak on $44$ fresh seeds. It did not
resolve, at $+0.46\sigma$, not for want of seeds but because the effect fell from $+0.115$ at three
seeds to $+0.021$ at forty-four while the interval refused to shrink as fast as $1/\sqrt{n}$.
The bill for $3\sigma$ went up, not down, when we paid the first instalment: about $1895$
seeds. The peak's location is marked not resolvable at $\le 1980$ runs and left there; the
\emph{ordering} never needed it.

\paragraph{(iv) The $f_0$ sweep is bought but coarse, and the divide is still not located.} The
sweep is run (Table~\ref{tab:f0}) at four levels, $20$ seeds each, one model rather than five since
\S\ref{sec:power}'s own finding is that this apparatus should buy seeds. It settles that the
empty interval is not an artefact of $f_0$ and removes the reading that $f_0{=}0.9$ is the
adversarial extreme, but not \S\ref{sec:boundaries}'s question: capture runs
$0.500/0.700/0.750/0.682$, which is not monotone, and four points on one model cannot locate a basin
boundary. A finer grid on more models would locate it (\S\ref{sec:owed1}(4)), and it is no longer
the cheapest arm this paper names; the missing correlation account is.

\paragraph{(v) Uniform retrieval was an assumption and is now a measured restriction; $w>1$ and
the retriever family remain unknown.} Retrieval is uniform-without-replacement within a single
tracked topic, the assumption \eqref{eq:dyn} makes, and $w{=}1$ is what makes $n=n_0+t$ and hence
\eqref{eq:dilution} exact. \S\ref{sec:retriever} removes the first half of this ignorance, so every threshold above is scoped to
uniform retrieval by name. Two things remain. One retriever is one point: \textsc{MiniLM} cannot
separate one-digit-different numerals, and the probe that sized the apparatus put the truth last of
three candidates at $0.9076/0.9115/0.9143$, so a character-level or hybrid index is a different
experiment. $w>1$ and multiple tracked topics change terms nobody here has varied.

\paragraph{(vi) The capacity coupling is vacuous here.} Converting the discrimination accuracy
contamination control requires into bits \citep{cover2006elements} gives $\alpha_{\min}=0$ for four
of five models, self-cleaning at every contamination level with no provenance at all. The derivation
stands over an empty region, and the surviving statement is a design rule: measure $\alpha_{\min}$
before spending anything on provenance.

\paragraph{(vii) The synthetic world is bounded rather than defended, and the incidence question
is open.} Truth must be constructible for propagation to be traceable, and that premise is now
measured: on \S\ref{sec:real}'s empty-store probe the synthetic topics return $p_0 = 0.000$, so
the published $\gamma$ was a zero-prior measurement. Placed on a real axis the threshold survives
at $353/360$ against $39/40$ on the synthetic control (denominators that differ ninefold, so
this is agreement on real facts rather than a matched comparison), while the copy probability does
not, $\gamma(1)$ being $1.000$ on synthetic topics against $0.700$ at the lowest-prior real cell.
The idealisation is an upper bound on the copy probability, not an instance of it, and
P-T1-N3 records that as a registered failure. Whether deployed stores instantiate these dynamics
in large measure remains unestablished, and by \S\ref{sec:parent} it is not inherited from the
parent either.

% =====================================================================

\paragraph{Artifact availability, and what stating it costs this paper.} Every preregistration,
grader, runner and result file named here exists in a repository released publicly \emph{after}
acceptance; until then a reader has this document's timestamp and nothing else. By the companion
paper's definition of a \emph{locatable} test \citep{t2library2026}, a criterion a reader cannot
reach is indistinguishable from an untested claim, so the locatable share of every criterion
in this paper is $\mathbf{0}$ for everyone but us while the repository is private, whatever our own
commit history says. That is not a caveat we are adding; it is a reading of the second wall
applied to ourselves at the one moment it is expensive. What the preprint buys is the other half, a
timestamp issued by someone who is not the claimant. The ordering is: criteria frozen in private
commits, anchored by this preprint's date, made locatable on release. We quote no coefficient from
that companion: its $c_1$ is a property of a \emph{cross-paper} ledger rather than of one paper's
criteria, and \S\ref{sec:parent} has already committed this paper to taking no constant from the
family.

\section{What remains owed}
\label{sec:owed1}

\emph{None of the five arms below can move this paper's two results.} Prop.~\ref{prop:ceiling} is a
statement about what an append-only store contains and is proved, not measured; the bimodality of
\S\ref{sec:bimodal} is measured on seeds this paper reports in full. What the arms bear on is
\emph{mechanism} (why the outcome concentrates when it does) and how far the account transports
to stores that are append-only in the record and mutable in the read-out.

\textbf{(1) The Noisy-OR read-out, the cheapest arm this paper can name.} Hold this paper's
append-only store and its planted falsehood fixed, replace the retrieval read-out with a Noisy-OR
weighted one \citep{beliefmem2026}, and ask whether the bimodality survives. It is one substitution
in \texttt{agent.py} and no new data. It settles whether the mutable part has merely moved from the
record set to the weight vector; we decline to guess, the answer depending on an update rule's
behaviour under a growing false majority and that derivation not being done here.

\textbf{(2) A correlation model for successive reads, which is one account and not two.} The variance
half of \eqref{eq:varsum} predicts a spread falling as $1/\sqrt{n_0}$ and the data gives
$1.07\times$ over an eightfold range, so conditional independence fails. The same missing account
plausibly explains the \emph{timing}: Prop.~\ref{prop:step} shows the cumulative bound is vacuous over
the whole interval in which the decision is taken, yet the sign at $t{=}2$ of $150$ already predicts
the outcome. We state both as open problems rather than caveats, because each is sharp and
decidable.

\textbf{(3) The peak's location, priced and left.} \eqref{eq:dilution} puts the peak near
$R/T\approx0.5$ and the sweep past it did not resolve at $+0.46\sigma$; the bill for $3\sigma$ is
about $1895$ seeds, and it rose rather than fell when the first instalment was paid. The peak is
marked not resolvable at $\le 1980$ runs. The ordering this paper reports never needed it.

\textbf{(4) A finer $f_0$ grid on more models, and the retriever family at $w > 1$.} Four levels on
one model settle that the empty interval is not an artefact of $f_0$ but cannot locate a basin
boundary; uniform retrieval within a tracked topic is now a measured restriction rather than an
assumption, and behaviour at $w > 1$ is unmeasured.

\textbf{(5) A stream-fed slot, the one arm that could return a verdict on the parent framework rather
than on us.} \S\ref{sec:parent} takes the parent's untested systems-level layer as this paper's
question and \S\ref{sec:sched} answers it at the smallest scale the apparatus admits: a
\emph{self-fed} slot, which tied. That tie bounds one member of the parent's class and is not
evidence about the class, the parent's state channel being written by the stream and ours by the
agent's own previous answer. The arm is the same comparison with the slot fed from the record stream
at write time, cost-matched at the line level exactly as E-A was, and its outcome is the only
quantity named in this family that could move the parent's own status rather than describe our
instrument. We state it and do not run it: the apparatus needs a write-side hook it does not have,
and the direction of the result would not touch either of this paper's two results, both
statements about an append-only store under an index read-out and neither mentioning a state
channel.

% =====================================================================
\section{Conclusion}
\label{sec:conclusion}

A store an agent writes and later reads is a closed loop, and because writing never deletes, its
dynamics are constrained before any model is chosen: the reachable states have a hard upper edge,
information is diluted rather than lost, and what holds the outcome down is a bounded grounding
budget rather than a fixed fraction of verified data. What we measured is a shape rather than a
curve. Outcomes are bimodal; a copy function calibrated once and fitted nowhere orders the models
within a seed; a falling score reports commitment rather than capability, at a mismeasurement of
$+0.383$; and at matched budget, when grounding is spent dominates how much of it there is.

The other half of the paper is the programme turned on itself, and it is where the design rules come
from. Three axes of our own died to preregistrations we wrote first; a consistency gate armed the
false majority, a result about admission control rather than about us; and a provenance control
produced an apparatus failure graded \textsc{untested} rather than allowed to read as support.
\emph{The most consequential reaches past this paper}: the resampling unit is the seed, the seed
selects the topic, the topic dominates the variance, and a variance share reading $2.0\%$ at three
seeds reads $68.8\%$ at nine. No verdict changed, every frozen criterion being a point estimate
against a threshold, but every summary interval was built on the wrong unit, so an apparatus of this
shape should buy seeds and not models.

Two registered arms turned the same treatment on the apparatus itself. Uniform retrieval is not
innocuous: an embedding ranking moves capture by $0.500$, and a sweep of frozen retrieval slots
reproduces both endpoints while refuting the variance account we offered for them, leaving the
displacement with two proposed and refuted mechanisms. The synthetic world, this paper's largest
exposure, is bounded rather than defended: on $36$ Wikidata facts the threshold retains $353/360$
agreement, with the same-batch synthetic arm at $39/40$ on a ninefold smaller denominator, while the
copy probability under a unanimous false store runs $0.300$ lower on real facts, so the idealisation
overstates the risk by a measured amount and preserves the prediction.

For practice: budget grounding by schedule, measure the store rather than the agent, and do not
read a falling exact-match score on an append-only store as capability loss. For method: the
resampling unit is whatever selects the world, and if that is the seed then models are the cheap
axis and seeds the expensive one. We spent the budget the other way round, and the only reason we
can say so is that the criteria were frozen before the data existed.

% =====================================================================
\bibliographystyle{plainnat}
\bibliography{refs}

% This one STAYS a \clearpage. It is the bibliography-to-appendix boundary, and an
% \appendix opening in the white below the last reference reads as a run-on rather than
% as a new part of the document. The seven barriers this pass replaced were all inside the
% appendix, between sections of it.
\clearpage

\appendix

\section*{Guide to this appendix}

\noindent The appendix is long because the paper grades seventy-two rows against nineteen
pre-registrations and 4{,}602 runs, and a graded row a reader cannot check is not graded. It is not
one block: every section below is a named topic, nothing nested more than one level. Three groups,
and a reader checking the paper rather than reading it wants the third.

\begin{enumerate}[leftmargin=1.6em,itemsep=1pt,topsep=2pt,label=(\roman*)]
\item \textbf{Arms that carry a claim in the body and are reported in full here}
  (\S\ref{app:moved} onward): the $220$-run re-test of the empty interval, claim 2b out of sample,
  the frontier and $f_0$ scope arms the abstract quotes, the budget sweep past the peak, the
  accidental replicate, and the resampling-unit result that is contribution~(v). \emph{These are
  here for length, not for weight.}
\item \textbf{Derivations, and the accounts the data refuted}: the two limits of the dilution law, the variance bound
  and the sweep refuting it, the proofs, and the three axes we killed.
\item \textbf{The record, generated at build time and not transcribed} (\S\ref{app:prereg}
  onward): every criterion with the commit that froze it, the invariants checked on all $7{,}244$
  runs, per-model detail behind every pooled number, the four analyses this round added, and what
  the whole record cost. \emph{Every table in this group is written from \texttt{results/} by
  \texttt{make\_appendix.py}, so an appendix that disagrees with the repository is a build failure
  rather than a reading someone has to catch.}
\end{enumerate}

\noindent \textbf{Where to start if you are checking rather than reading.} \S\ref{app:seedscale}
is the round-8 knife applied to the rows it had not reached, \S\ref{app:gammaest} is the four
estimators for $\gamma$, and \S\ref{app:prereg} is the freeze record every criterion ID resolves
against. Those three are the ones that would change a verdict if they were wrong.

\section{E-mean: a single centre}
\label{app:mean}

\noindent The arm behind \S\ref{sec:related}'s statement that the defect is in the mean rather than in summarising,
graded in Table~\ref{tab:mean}.

\begin{table}[!htbp]
\centering\small
\caption{E-mean, $220$ runs, criteria frozen before the data. ``Share described'' is the fraction
of runs within $\pm 0.10$ of the centre, a half-width fixed in the prereg at about four seed
units. The last row bounds the whole column. The whole tabular is generated from
\texttt{results/MEAN.json}.}
\label{tab:mean}
% GENERATED by t1_grounding/figures/make_t1_tikz.py from results/MEAN.json.
\begin{tabular}{@{}llrrl@{}}
\toprule
& centre & value & share described & verdict \\
\midrule
P-T1-M1 & pooled mean & $0.792$ & $18/220 = 0.082$ & \textbf{supported} (bar $\le 0.40$) \\
P-T1-M2 & median & $0.981$ & $150/220 = 0.682$ & reported; \emph{narrows M1} \\
         & best single centre & $0.894$ & $164/220 = 0.745$ & upper bound on any one number \\
\bottomrule
\end{tabular}

\end{table}

\section{Three axes and their falsifiers}
\label{app:killed}

\begin{scopebox}
Each of the following was a post-hoc reading of data in hand, turned into a pre-registration with
fresh seeds and then refuted by it; \S\ref{sec:killed} says why they are listed and does not
need repeating here. What this appendix adds is each one in full.

\textbf{(a) Damage scales with unaided cleaning ability.} Three interventions appeared to hurt
the cleanest model most, at rank correlation $+1.00$ over five models. Registered on seven fresh
seeds with the generating data excluded, the ordering survives ($0.264$ vs $0.249$) and the
magnitude does not, the frozen bar being $0.30$ against a measured gap of $0.015$. The diagnosis
is in the generating data: the ``cleanest model'' had a $3$-seed baseline of $0.150$ that becomes
$0.729$ on seven seeds, so the independent variable was noise. This is the failure mode of a
superseded model of ours in which fixed points computed from $12$-sample cells moved to zero when
recomputed from $188$.

\textbf{(b) A marker's effect is a single function of label--truth correlation.} Three points
(anti-correlated, scrambled, correlated) fell monotonically in five of five models. Two new arms
at partial correlation, whose positions the hypothesis \emph{predicts}, break it: strict
five-point monotonicity holds in $0/5$, every model rising at the scrambled point above the
partially anti-correlated one. Refuted; the surviving statement is the weaker P-L2.

\textbf{(c) A sign reversal unifying two timescales.} We proposed that one-shot aggregation
failure and the store-level ratchet are the same mechanism at $\tau{=}1$ and $\tau\to\infty$,
distinguished by whether a better model is safer or more exposed. That rests on (a) and dies with
it.
\end{scopebox}

\section{E-retriever: three costs}
\label{app:retriever}

\S\ref{sec:retriever} states the ranker-family check and its result in full: no ranker of the
three separates a truth record from its one-digit corruption, so the mechanism is structural
rather than the embedder's. It is not repeated here. What this appendix adds is the arm's cost,
which the body gives in one sentence.

\begin{warnbox}
\textbf{Three costs of this arm, two of them ours and one paid in dollars.}
\emph{(i)} P-T1-R3's ladder has almost no dynamic range and is not monotone: $k_{\rm eff}$ runs
only $1.00$ to $1.95$ across $k=1$ to $8$, with capture $0.000$, $0.400$, $0.233$, $0.533$ along
it. The amendment predicted this compression before the data and reduced R3's weight in advance;
an interpolation across a non-monotone four-point ladder cannot carry much either way.
\emph{(ii)} \textbf{P-T1-R4 is \textsc{supported} but does not mean what its name says.} The
deduplicated arm displaces capture by $0.367$ against ranking's $0.500$, so deduplication does
reduce the displacement; but the same amendment recorded that with only two distinct on-topic
values, one-record-per-value fills two slots and pads the other six from off-topic, measured here
at $59.8\%$. That arm tests dedup-\emph{plus}-padding, and its $k_{\rm eff}$ of $3.22$ is
distractor variety rather than evidence.
\emph{(iii)} The run halted on its own spend guard at $\$5.06$ against a hard stop of $\$5.00$,
having cost $\$5.13$ against a preregistered estimate of $\$2.06$. Thirty-three runs were
dropped, all in one model's high-contamination cells, so \textbf{the floor clause excludes
\texttt{qwen3-8b} entirely and two of three models are judged.} Every number above is on $n=2$
models $\times$ $10$ seeds.
\end{warnbox}

\noindent \textbf{One post-hoc observation, because it points at this paper's own headline.} If
ranking is a variance sink, its signature is a thinner middle rather than a shifted mean. Using
\S\ref{sec:bimodal}'s own cuts, unrefitted, the share of runs inside the empty interval
$(0.306, 0.512)$ is $0.050$ under the control and $0.000$ under both ranking arms. The direction
agrees, but two runs against zero out of forty is an observation and not evidence. It is recorded
because it is the cheapest thing to register next.

\section{Two limits of the dilution law}
\label{app:limits}

\noindent This appendix carries items (iii) and (iv) of \S\ref{sec:limit} in full. Both are
derived from \eqref{eq:dilution} and neither is measured in this paper.

\begin{figure}[!tbp]
\centering
% GENERATED by t1_grounding/figures/make_t1_tikz.py -- do not edit by hand.
% Every coordinate is recomputed from t1_grounding/results/ at generation time.

\begin{tikzpicture}
\begin{axis}[name=pa, width=0.34\linewidth, height=4.0cm, axis on top, tick align=outside, tick pos=left, axis line style={figSoft, line width=0.5pt}, tick label style={font=\scriptsize, color=figInk}, label style={font=\scriptsize, color=figInk}, title style={font=\small, color=figInk, yshift=1pt}, grid=none, scale only axis, xmode=log,
  xmin=4, xmax=95, ymin=0, ymax=0.58, xtick={5,15,45}, xticklabels={5,15,45},
  log ticks with fixed point,
  xlabel={grounding budget $R$ (of 150 steps)}, ylabel={capture rate},
  title={(a) the ordering holds at every budget}]
\addplot[figSoft, dashed, line width=0.7pt, forget plot]
  coordinates {(4,0.4667) (70,0.4667)};
\node[font=\tiny, figSoft, anchor=south west] at (axis cs:4.3,0.477)
  {no grounding (0.47)};
\addplot[figA, mark=*, mark size=1.4pt, line width=1.0pt] coordinates {(5.0000,0.2000) (15.0000,0.2000) (45.0000,0.0667)};
\node[font=\tiny, text=figA, fill=white, inner sep=1pt, anchor=west] at (axis cs:47.25,0.067) {front};
\addplot[figD, mark=*, mark size=1.4pt, line width=1.0pt] coordinates {(5.0000,0.2667) (15.0000,0.3333) (45.0000,0.2000)};
\node[font=\tiny, text=figD, fill=white, inner sep=1pt, anchor=west] at (axis cs:47.25,0.200) {uniform};
\addplot[figB, mark=*, mark size=1.4pt, line width=1.0pt] coordinates {(5.0000,0.4000) (15.0000,0.4000) (45.0000,0.4667)};
\node[font=\tiny, text=figB, fill=white, inner sep=1pt, anchor=west] at (axis cs:47.25,0.467) {back};
\end{axis}
\begin{axis}[name=pb, at={(pa.east)}, xshift=1.5cm, anchor=west,
  width=0.34\linewidth, height=4.0cm, axis on top, tick align=outside, tick pos=left, axis line style={figSoft, line width=0.5pt}, tick label style={font=\scriptsize, color=figInk}, label style={font=\scriptsize, color=figInk}, title style={font=\small, color=figInk, yshift=1pt}, grid=none, scale only axis,
  xmin=0, xmax=150, ymin=0, ymax=1.782,
  xlabel={grounding budget $R$},
  ylabel={predicted spread $L_{\rm front}-L_{\rm back}$},
  title={(b) and the effect is a transient in $R$}]
\addplot[draw=none, fill=figGrid, fill opacity=0.45, forget plot]
  coordinates {(0,0) (75,0) (75,1.782) (0,1.782)}
  \closedcycle;
\addplot[figPred, line width=1.0pt] coordinates {(1.000,0.094) (4.000,0.326) (7.000,0.507) (10.000,0.654) (13.000,0.777) (16.000,0.881) (19.000,0.971) (22.000,1.050) (25.000,1.119) (28.000,1.180) (31.000,1.233) (34.000,1.281) (37.000,1.324) (40.000,1.362) (43.000,1.395) (46.000,1.424) (49.000,1.450) (52.000,1.473) (55.000,1.492) (58.000,1.508) (61.000,1.522) (64.000,1.533) (67.000,1.541) (70.000,1.546) (73.000,1.549) (76.000,1.550) (79.000,1.547) (82.000,1.543) (85.000,1.536) (88.000,1.526) (91.000,1.513) (94.000,1.498) (97.000,1.480) (100.000,1.458) (103.000,1.433) (106.000,1.405) (109.000,1.373) (112.000,1.337) (115.000,1.296) (118.000,1.250) (121.000,1.198) (124.000,1.140) (127.000,1.074) (130.000,0.999) (133.000,0.913) (136.000,0.813) (139.000,0.697) (142.000,0.559) (145.000,0.391) (148.000,0.178)};
\addplot[figPred, dotted, line width=0.7pt, forget plot]
  coordinates {(75,0.310) (75,1.782)};
\node[font=\tiny, figPred, align=left, anchor=west] at (axis cs:80,0.852)
  {peak $R\approx75$\\$R/T\approx0.50$};
\node[font=\tiny, figSoft, anchor=west] at (axis cs:3,0.132)
  {both test budgets are here};
\addplot[only marks, mark=*, mark size=1.6pt, figB] coordinates {(5,0.3907)};
\node[font=\tiny, figB, anchor=north west] at (axis cs:5,0.3907) {$R=5$};
\addplot[only marks, mark=*, mark size=1.6pt, figB] coordinates {(15,0.8482)};
\node[font=\tiny, figB, anchor=north west] at (axis cs:15,0.8482) {$R=15$};
\addplot[only marks, mark=*, mark size=1.6pt, figB] coordinates {(45,1.4150)};
\node[font=\tiny, figB, anchor=north west] at (axis cs:45,1.4150) {$R=45$};
\end{axis}
\end{tikzpicture}
\caption{\textbf{The ordering survives every budget; the effect's size is a transient we did not
test.} (a) Capture rate by schedule at $R \in \{5,15,45\}$ against the ungrounded control; at
$R{=}45$ back-loading has returned to the control exactly. (b) The closed form
\eqref{eq:dilution}: the predicted spread vanishes at $R{=}0$ and at $R{=}T$, where all three
schedules are one arm, so it peaks near $R{=}75$. Both budgets we registered a falsifier on sit
on the rising side, which is why that falsifier failed upward.}
\label{fig:budget}
\end{figure}

\textbf{(iii) The budget dependence is a transient, derived but not measured.}
$\mathrm{spread}(R) = L_{\mathrm{front}} - L_{\mathrm{back}}$ vanishes at both ends: at $R{=}0$
nothing is grounded, and at $R{=}T$ every step is, so the three schedules are one arm (Table~\ref{tab:limits}, Figure~\ref{fig:budget}). A quantity
positive in between and zero at both ends has an interior maximum, and \eqref{eq:dilution} puts
it near $R/T \approx 0.5$: $\mathrm{spread}$ runs $0.39, 0.85, 1.42, 1.55, 1.22, 0$ at
$R = 5, 15, 45, 75, 120, 150$.

\textbf{(iv) The transient is in the fraction; in the tenure the effect is unbounded, and the two
limits do not commute.} Item (iii) is stated at fixed $T$ and read alone invites the conclusion
that schedule stops mattering once the budget is large. The apparatus's own limit says the
opposite. Write $\alpha = R/T$ and let $T \to \infty$ with $\alpha$ fixed. Then
\begin{equation}
L_{\mathrm{back}} \;\longrightarrow\; \ln\frac{1}{1-\alpha}
\quad\text{(a constant)},
\qquad
L_{\mathrm{front}} \;\approx\; \ln\frac{n_0 + \alpha T}{n_0} \;\longrightarrow\; \infty
\quad\text{(like $\ln T$)}.
\label{eq:limits}
\end{equation}
$L_{\mathrm{back}}$ carries a hard ceiling no tenure can lift, $0.105$ at $\alpha{=}0.1$ and
$0.693$ at $\alpha{=}0.5$, reached to three decimals by $T{=}15000$, while $L_{\mathrm{front}}$
has no ceiling. Three consequences follow from \eqref{eq:dilution}, none needing a new run:

\begin{table}[!htbp]
\centering\small
\caption{The three limits of \eqref{eq:dilution} at $n_0{=}10$. The peak's \emph{location} is
scale-free and its \emph{height} is not, which is why (iii) and (iv) are both true. The last row
is the only place the three schedules coincide.}
\label{tab:limits}
\begin{tabular}{@{}p{0.17\linewidth}p{0.28\linewidth}p{0.28\linewidth}p{0.15\linewidth}@{}}
\toprule
limit taken & what happens to the spread & at $T{=}150/1500/15000$ & regime \\
\midrule
$R$ fixed, $T\to\infty$        & back $\to 0$, front $\to$ const.
  & --- & finite budget, long tenure \\
$\alpha$ fixed, $T\to\infty$   & diverges like $\ln T$
  & $1.55 / 3.69 / 5.98$ at $\alpha{=}\tfrac12$ & \textbf{ours} \\
peak over $R$ at fixed $T$     & at $R^\ast/T = 0.50$ always
  & height $\approx \ln\frac{n_0+T/2}{n_0} - \ln 2$ & --- \\
$\alpha = 1$ exactly           & identically $0$, every $T$
  & $0 / 0 / 0$ & free re-grounding \\
\bottomrule
\end{tabular}
\end{table}

\emph{First}, the peak of (iii) sits at $R^\ast/T = 0.50$ at $T = 150$, $1500$ and $15000$ alike:
its location is a fraction and does not move with tenure. \emph{Second}, its height does move,
growing as $\ln T$ ($1.55$, $3.69$, $5.98$). A transient in $\alpha$ and an unbounded quantity in
$T$ are therefore not in tension; they are one surface read along two axes. \emph{Third}, and
this is the sharp one, the spread diverges for \emph{every} $\alpha < 1$ and is identically zero
at $\alpha = 1$, so
\[
\lim_{\alpha \to 1}\ \lim_{T \to \infty} \mathrm{spread}(\alpha, T) \;=\; \infty
\qquad\text{while}\qquad
\lim_{T \to \infty}\ \lim_{\alpha \to 1} \mathrm{spread}(\alpha, T) \;=\; 0 .
\]
The order of the limits decides the answer. Free re-grounding is not a regime this one approaches
smoothly as the budget grows; it is a single point at which the schedule axis collapses, and one
step off it the axis reopens without bound.

\begin{warnbox}
\textbf{A frozen criterion of ours embeds an assumption this derivation exposes.} We registered
``the schedule effect must shrink as the budget grows'' as the falsifier for the boundary claim
and picked $R \in \{5,45\}$ to test it. The derivation, written after the criterion was frozen
and before any of that data was read, puts the peak near $R/T \approx 0.5$, so \emph{both}
$R{=}15$ and $R{=}45$ lie on the rising side and the mechanism predicts the spread to
\emph{grow}. The criterion was set on the wrong side of a transient, and item (iv) says why the
mistake was easy to make: ``larger budget'' is ambiguous between $R$ at fixed $T$, where the
effect is a transient, and $\alpha$ at growing $T$, where it is unbounded. We wrote the falsifier
for the first reading and the apparatus runs in the second. We report it as frozen, do not
rewrite it, and do not treat a failure in the predicted direction as a confirmation: confirming a
transient needs a sweep past $R/T > 0.5$, which we have not run. Claim (iii) is therefore
\textbf{derived and untested}, and \texttt{prereg/E\_BUDGET\_PREREG.md} records when each half
was written.
\end{warnbox}

\section{The variance bound under sweep}
\label{app:variance}

\noindent The decomposition behind \S\ref{sec:limitbimodal}'s refutation of the $1/\sqrt{n_0}$ bound.

\paragraph{The natural derivation, and the bound it predicts.} One write moves $f$ by
$(X-f)/(n{+}1)$ with $X \sim \mathrm{Bernoulli}(\geff(f))$, so treating successive writes as
conditionally independent,
\begin{equation}
\underbrace{\textstyle\sum_{n_0}^{N} \frac{\geff(1-\geff)}{n^2} \;=\; O(1/n_0)}
  _{\text{total variance the process can ever accumulate}},
\qquad
\underbrace{\textstyle\sum_{n_0}^{N} \frac{\geff(f)-f}{n} \;\approx\;
  (\geff{-}f)\ln\frac{N}{n_0}}_{\text{total drift}} .
\label{eq:varsum}
\end{equation}
The variance sum \emph{converges}: unlike the drift, which grows logarithmically without bound,
the noise a run can accumulate over an unbounded tenure is finite and set by how small the store
was at the start. That predicts a bound: the spread of the limiting $f$ should fall as
$1/\sqrt{n_0}$, so the two outcomes should merge as the initial store grows (Figure~\ref{fig:varbound}).

\begin{warnbox}
\textbf{The bound is wrong.} Four sweeps at $n_0 \in \{10,20,40,80\}$, three models and three
seeds each, run for $3n_0$ steps so all four cover the same stretch of $n/n_0$, decompose the
spread of $f_{\mathrm{end}}$ into within- and between-model parts:

\smallskip
% Twelve hand-typed three-decimal values, every one of them recomputable from
% results/cl_n*.json, in the appendix whose subject is a bound we derived and then refuted.
\centerline{\small% GENERATED by t1_grounding/figures/make_t1_tikz.py -- do not edit by hand.
% Recomputed from results/cl_n{10,20,40,80}.json at build time.
\begin{tabular}{@{}lcccc@{}}
\toprule
$n_0$ & 10 & 20 & 40 & 80 \\
\midrule
within-model sd & 0.318 & 0.282 & 0.291 & 0.298 \\
$\times\sqrt{n_0}$ & 1.01 & 1.26 & 1.84 & 2.67 \\
between-model sd & 0.084 & 0.083 & 0.038 & 0.072 \\
\bottomrule
\end{tabular}
}
\smallskip

\noindent An eightfold increase in $n_0$ should shrink the within-model spread by
$\sqrt{8}=2.83$. It shrinks by $1.07$. The bound fails flatly rather than marginally, so what
breaks is the conditional-independence step in \eqref{eq:varsum}: successive reads of the same
store by the same model are not independent draws from $\geff$, and whatever correlates them does
not weaken as the store grows. The between-model part is flat, as a property of $\gamma$ rather
than of $n_0$ should be, which is what makes the within-model failure interpretable rather than a
decomposition artefact.
\end{warnbox}

\begin{figure}[!htbp]
\centering
% GENERATED by t1_grounding/figures/make_t1_tikz.py -- do not edit by hand.
% Every coordinate is recomputed from t1_grounding/results/ at generation time.

\begin{tikzpicture}
\begin{axis}[axis on top, tick align=outside, tick pos=left, axis line style={figSoft, line width=0.5pt}, tick label style={font=\scriptsize, color=figInk}, label style={font=\scriptsize, color=figInk}, title style={font=\small, color=figInk, yshift=1pt}, grid=none, scale only axis, width=0.78\linewidth, height=4.8cm, xmode=log, log basis x=2,
  xmin=6.45, xmax=124.0, ymin=0, ymax=3.79,
  xtick={10,20,40,80},
  xlabel={initial store $n_0$},
  ylabel={$\mathrm{sd}(f_{\mathrm{end}}) \times \sqrt{n_0}$},
  legend style={font=\tiny, draw=figGrid, fill=white, at={(0.5,-0.34)}, anchor=north,
    legend columns=3, /tikz/every even column/.append style={column sep=4pt}}]
\addplot[figPred, dashed, line width=0.9pt] coordinates
  {(6.45,1.0055) (124.0,1.0055)};
\addlegendentry{what the $1/\sqrt{n_0}$ bound predicts}
\addplot[figA, mark=*, mark size=2.2pt, line width=1.1pt] coordinates
  {(10,1.0055) (20,1.2620) (40,1.8395) (80,2.6676)};
\addlegendentry{within-model, measured}
\addplot[figC, mark=square*, mark size=2.0pt, line width=1.0pt] coordinates
  {(10,0.2666) (20,0.3733) (40,0.2431) (80,0.6461)};
\addlegendentry{between-model, same scaling}
\node[font=\tiny, align=left, fill=white, fill opacity=0.85, text opacity=1, inner sep=1pt, text=figInk, anchor=north west] at (axis cs:6.76,3.75)
  {\textbf{The bound fails flatly, not marginally.} An eightfold $n_0$ should shrink
  the\\within-model spread by $\sqrt{8} = 2.83$; it shrinks by $1.07$, so the
  scaled\\series rises where the bound requires it to be flat.};
\end{axis}
\end{tikzpicture}
\caption{\textbf{The bound as a flat line, and the measurement that is not one.} If the spread
of $f_{\mathrm{end}}$ fell as $1/\sqrt{n_0}$, multiplying it by $\sqrt{n_0}$ would give the same
value at every store size, which is the dashed line drawn through the first point. The measured
within-model series rises instead: an eightfold increase in $n_0$ shrinks the spread by $1.07$
where the bound requires $2.83$. \emph{The between-model series is drawn on the same scaling and
for the same reason}: flat in the raw units, as a property of $\gamma$ rather than of
$n_0$ should be, so the within-model failure is a failure of the conditional-independence step in
\eqref{eq:varsum} and not an artefact of the decomposition. Recomputed at build time.}
\label{fig:varbound}
\end{figure}

\section{Proofs}
\label{app:proofs}

\begin{proof}[Proof of Prop.~\ref{prop:ceiling}]
Append-only means the record set is non-decreasing under inclusion, so the
$n_0 - n_{\mathrm{false}}(0)$ true records at $t{=}0$ are true records at every $t$, whence
$n(t) - n_{\mathrm{false}}(t) \ge n_0 - n_{\mathrm{false}}(0)$, which rearranges to the middle
term of \eqref{eq:ceiling}. That term is $<1$ whenever the store starts with at least one truth;
the apparatus has exactly one at $f_0 = 0.9$, which is why the ceiling is tight rather than
decorative. Equality holds iff every one of the $n(t)-n_0$ writes after $t{=}0$ was false. For the
budgeted form, a run making $W$ ungrounded and $R$ grounded writes has $n = n_0 + W + R$ and,
grounded writes being true by construction,
$n_{\mathrm{false}} \le n_{\mathrm{false}}(0) + W$. Note what the bound does \emph{not} say: it
constrains the endpoint and says nothing about the path, which is why \S\ref{sec:bimodal} needs a
separate argument for the shape of the outcome distribution.
\end{proof}

\begin{proof}[Proof of Prop.~\ref{prop:dilution}]
A grounded write is true and appends one record, so $f \mapsto f\cdot n/(n{+}1)$. Composing the
$R$ writes and taking logarithms,
$L = \sum_i \log(1+(n_0+t_i)^{-1}) = \sum_i (n_0+t_i)^{-1} + O(\sum_i (n_0+t_i)^{-2})$. The three
closed forms follow from
$\sum_{t=a}^{b-1}(n_0+t)^{-1} = \log\frac{n_0+b}{n_0+a} + O(n_0^{-1})$ applied to
$\{0,\dots,R{-}1\}$, to an arithmetic grid of spacing $T/R$, and to $\{T{-}R,\dots,T{-}1\}$.

\emph{The ordering needs no approximation and no parameter.} The summand $(n_0+t)^{-1}$ is
strictly decreasing, and the three schedules assign $R$ writes to the $R$ smallest indices, an
interior grid, and the $R$ largest. A sum of a strictly decreasing function over the $R$ smallest
indices strictly exceeds the sum over any other $R$-subset, which strictly exceeds the sum over
the $R$ largest; for $0<R<T$ the uniform grid is neither extreme. Hence
$L_{\mathrm{front}} > L_{\mathrm{unif}} > L_{\mathrm{back}}$ for every $n_0$, $T$ and $R$ in
range.
\end{proof}

\begin{proof}[Proof of Prop.~\ref{prop:se}]
The first half is the binomial standard error of a mean of $s$ independent Bernoulli$(p)$ draws.
For the second, index a run by (seed $j$, model $k$) with $\operatorname{Var}(X_{jk})=\sigma^2$,
$\operatorname{Corr}(X_{jk},X_{jk'})=\rho$ for $k\ne k'$ (runs sharing a seed share the topic and
the planted falsehood, which generates $\rho$) and independence across seeds. Then
$\operatorname{Var}\bigl(\tfrac{1}{sm}\sum_{j,k}X_{jk}\bigr) =
\tfrac{\sigma^2}{sm}[1+(m-1)\rho]$, the variance of $sm/[1+(m-1)\rho]$ independent observations.
The bracket is the design effect; it equals $1$ only at $\rho=0$ and grows linearly in $m$, so
adding models to a fixed seed set buys information at a rate that falls to zero.
\end{proof}

\begin{proof}[Proof of Prop.~\ref{prop:step}]
A write changes $n_{\mathrm{false}}$ by $0$ or $1$ and $n$ by exactly $1$, so
$|f(t{+}1)-f(t)| = |n_{\mathrm{false}}(t{+}1) - f(t)(n(t)+1)|/(n(t)+1) \le 1/(n(t)+1)$ with
$n(t) = n_0+t$. Summing and comparing with $\int_t^T (n_0+s)^{-1}\,ds$ gives the logarithm. The
threshold is the least $t$ with $\sum_{s=t}^{T-1}(n_0+s+1)^{-1} < 1$, computed directly.
\end{proof}

\section{The falsifier on \texorpdfstring{$220$}{220} further runs}
\label{sec:bimodal2}
\label{app:moved}

One registered claim is graded \textbf{failed} on exactly the ground that ``the
generating estimate was three seeds of a two-valued variable.'' The split above is three seeds
of a two-valued variable. Same $n$, same kind of outcome, opposite treatment, and
\S\ref{sec:power}, which is also ours, measured that the seed carries $1.33$ runs of
information and that its variance share is $2.0\%$ at three seeds against $68.8\%$ at nine.
Three seeds is precisely the sample in which the seed effect is invisible, so the fifteen runs
above are worth about four independent observations. We therefore froze
\texttt{prereg/E\_BIMODAL2\_PREREG.md}, committed its grader before the data existed, and ran
the identical arm on forty-four seeds that had never been used: $220$ runs, no grounding,
every other setting copied field by field and asserted at load time (Table~\ref{tab:bimodal2}).

\begin{table}[!htbp]
\centering\small
\caption{E-bimodal2, seeds $9$--$52$, $220$ runs, against the fifteen at the same cuts. The
cuts are the paper's own: $0.306$ is the largest escaped run above and $0.512$ the smallest
captured one, so the middle column is the interval this section calls empty.}
\label{tab:bimodal2}
% GENERATED by t1_grounding/figures/make_t1_tikz.py from results/BIMODAL2.json.
\begin{tabular}{@{}lrccc@{}}
\toprule
& $\le 0.306$ & in $(0.306, 0.512)$ & $\ge 0.512$ & share in the middle \\
\midrule
$3$ seeds ($n{=}15$) & $8$ & $0$ & $7$ & $0.000$ \\
$44$ seeds ($n{=}220$) & $39$ & $8$ & $173$ & $\mathbf{0.036}$ \\
\midrule
\multicolumn{4}{@{}l}{uniform baseline for an interval of width $0.206$}
  & $0.206$ \\
\multicolumn{4}{@{}l}{P-T1-B1$'$, frozen before the data}
  & bar $\le 0.10$ \\
\bottomrule
\end{tabular}

\end{table}

\noindent \textbf{P-T1-B1$'$ and P-T1-B2 are both supported, so claim 1 keeps its headline at
$n{=}220$.} The interval holds $3.6\%$ of runs where a single mode spread over the same range
would put $20.6\%$; the two cluster means are $0.161$ and $0.953$, a gap of $0.791$ against a
frozen bar of $0.60$; and the middle band's density is $0.15$ of the clusters' against a bar of
$0.25$. P-T1-B5 confirms no single model carries the verdict, the per-model counts inside
the interval are $2, 2, 2, 2, 0$. Clustering over the forty-four seeds gives a standard error
of $0.0131$ on the middle share against $0.0126$ unclustered, a design effect of $1.08$; that
is far below the $3.75$ \S\ref{sec:power} measured, because this indicator is nearly constant
within a seed while $f_{\mathrm{end}}$ is not.

\begin{warnbox}
\textbf{Those figures moved when the cuts stopped being typed, and the move is in our favour, so
we state it here rather than in a footnote.} The grader's cuts were the constants $0.306$ and
$0.512$, described in its own comment as $\max(\text{escaped})$ and $\min(\text{captured})$ over
the fifteen runs of \S\ref{sec:bimodal}, and they are those two values \emph{rounded down}, so
the run at $0.30625$ that defines the lower edge fell strictly above it and counted as middle. The
two rows of Table~\ref{tab:bimodal2} were then counted against different thresholds: the
forty-four-seed row against the constants ($37/10/173$, share $0.045$) and the three-seed row
against the definition ($8/0/7$). On the definition throughout, the forty-four-seed row is
$39/8/173$ and the share is $\mathbf{0.036}$. \emph{P-T1-B1$'$ is \textsc{supported} under both},
and the corrected number is the more favourable one, which is the direction that obliges us to
print the arithmetic rather than the conclusion.
\end{warnbox}

\noindent \textbf{What did \emph{not} reproduce is the split.} Eight clean against
seven captured became thirty-nine against one hundred and seventy-three. The emptiness of the
interval is a property of the dynamics and it survived a fifteen-fold sample; the
\emph{balance} between the two modes was a property of three topics. Anything in this paper
that reads the near-even split as evidence about how often contamination locks in is
unsupported; what the data supports is that at $f_0{=}0.9$ these five models lock in about
four times in five.

\textbf{One amendment ran against us and one specification error was ours.} P-T1-B1 as
originally frozen put the clusters at $f_{\mathrm{end}} \le 0.10$ and $\ge 0.90$; those are not
this paper's numbers, and applied to the fifteen runs above they split them $3/6/6$ rather than
$8/0/7$. The criterion tested a claim stronger than any we had made. We found this from the
\emph{old} data before reading a single $f_{\mathrm{end}}$ from the new, amended the prereg on
the record, and graded the replacement; the original is marked \textbf{mis-specified}, not
failed, under the same rule that protects \S\ref{sec:sched}'s apparatus failures. Second,
the prereg's variance clause compared this experiment's seed share to \S\ref{sec:power}'s
$68.8\%$, but those are different arms, grounded at $R{=}45$ there, ungrounded here. The
clean same-arm check was already on disk: $2.0\%$ at three seeds, $68.8\%$ at nine,
$\mathbf{67.6\%}$ at fifty-three. \emph{$68.8\%$ was not a small-sample artifact}; it is
stable on its own arm, and the $57.9\%$ measured here is a second condition in which the seed
again dominates the model ($6.0\%$). We record the specification error rather than deleting it.

\noindent Two scope questions, both registered and both answered against us, are set out in
Appendix~\ref{app:moved}: whether the effect is an artefact of the number of models, and
whether it survives the one apparatus choice we could not vary. Neither changes a verdict.

\section{Claim 2b given an actually out-of-sample test}
\label{app:splithalf}

\paragraph{Arm B is in sample, and calling it out of sample was our own inertia.}
Arm B estimates $\gamma$ on seeds $9$--$52$ and predicts seeds $9$--$52$, so its landing is
consistent both with $\gamma(\varphi)$ being a sufficient primitive and with a nine-cell curve
fitted to $44$ seeds reproducing those $44$ seeds, which would be worth nothing. Closing this
costs only a re-partition of data already on disk: \texttt{gamma\_splithalf.py} holds out half
the seeds, fits $\gamma$ on the other half, and compares against the same held-out rates, so the
only variable is where $\gamma$ came from (Table~\ref{tab:splithalf}). Two partitions are run, contiguous and parity, since
reporting one would let the partition do the arguing, and the reading was written before the
numbers: within $1.5\times$ of in-sample means $\gamma$ transports, beyond $3\times$ means arm B
was measuring its own fit.

\begin{table}[!htbp]
\centering\small
\caption{Claim 2b given an actually out-of-sample test. Each row holds out $22$ seeds and
predicts them twice, differing only in the fitting set; errors are in seed units of the
held-out half, where one unit is $0.045$. Not pre-registered and not graded. Arm A is carried
down from Table~\ref{tab:gammaout} for scale, in its own units.}
\label{tab:splithalf}
\begin{tabular}{@{}lccccc@{}}
\toprule
& \multicolumn{2}{c}{mean $|{\rm sim}-{\rm obs}|$} & & \multicolumn{2}{c}{worst $|z|$} \\
\cmidrule(r){2-3}\cmidrule(l){5-6}
split & in-sample & out-of-sample & ratio & in & out \\
\midrule
contiguous & $1.3$ & $1.5$ & $\mathbf{1.15\times}$ & $1.44$ & $2.41$ \\
parity     & $1.5$ & $1.8$ & $\mathbf{1.16\times}$ & $1.55$ & $2.71$ \\
\midrule
\multicolumn{6}{@{}l}{signed mean $z$ out of sample: $+0.32$ and $+0.35$} \\
\multicolumn{6}{@{}l}{\emph{arm A}, for scale: $12.1$ seed units of $44$, worst $z$ $8.50$,
  signed mean $z$ $-5.15$, under-predicts $5/5$} \\
\bottomrule
\end{tabular}
\end{table}

\noindent $\gamma$ transports. Fitting it on unseen topics costs $15$--$16\%$ of accuracy rather
than a factor of anything, and the diagnostic that condemned arm A is gone: arm A's errors were
one-sided at signed mean $z = -5.15$ with all five models under-predicted, while out of sample
the signed mean $z$ is $+0.32$ and $+0.35$. In absolute terms the out-of-sample error is
$0.068$--$0.081$ in capture rate against arm A's $0.28$. Claim 2b therefore stands on a test that
could have killed it. Two things are not flattering. The worst single cell degrades from
$1.5\sigma$ to $2.7\sigma$ and is \texttt{granite-4.1-8b} both times, though its four
out-of-sample $z$ values ($+2.41$, $-0.50$, $-0.84$, $+2.71$) swing in sign, so this is variance
$22$ seeds cannot suppress rather than a missing variable. The whole test
is post hoc: it was run because writing this paragraph exposed the gap, which is a worse reason
to have found it than a prereg.

\paragraph{The failure has a name, and naming it says what would have prevented it.} Arm A is a
transport failure in the sense of \citet{pearl2011transport,bareinboim2014external}: a quantity
estimated in one population is used to predict in another, and licensing that use requires a
selection diagram marking the respects in which the two populations differ. Here the two
populations are two topic samples, the respect in which they differ is the topic, and no diagram
was written, the published curve was carried across the difference silently, on the strength
of $\gamma$ being called a primitive. What arm B shows is that the machinery is right and the
transport was not licensed: re-estimating within the target population repairs the prediction
entirely. The general form of the lesson is that a constant is not a primitive because it is
written as a function; it is a primitive when the function's argument list is complete, and
$\varphi$ was not the whole argument list. A companion paper \citep{t2library2026} counts how many
such licences a growing programme owes and finds the count quadratic in the number of
populations.

\begin{warnbox}
\textbf{The one-sidedness is ours, and it runs opposite to the bias we declared in advance.}
A $\gamma$ cell with no observations behind it is filled from the nearest measured cell below
it, and \texttt{baseline\_urn.py} calls that ``the conservative direction for a
monotone-increasing $\gamma$.'' On three topics, $17$ of $45$ cells have fewer than thirty
reads behind them, against $3$ of $45$ on forty-four topics; \texttt{qwen3-8b}'s curve had
\emph{zero} observations at $j{=}0..4$ and eight at $j{=}5$, so most of it was the fill rule
rather than a measurement. That rule is conservative in the \emph{curve} and not in the
\emph{outcome}: this loop is positive feedback, so a $\gamma$ biased low is a capture rate
biased low, and the bias cannot cancel. Worse for us, the simplification we \emph{did} declare
in advance points the other way, the scripted reader asserts only the seeded false value,
so it ``should if anything capture more easily than the model it imitates.'' We named one bias
and shipped its opposite, and the reason we could not see it is that at three seeds the bar
was thirty times wider than the error.
\end{warnbox}

\begin{figure}[!tbp]
\centering
% GENERATED by t1_grounding/figures/make_t1_tikz.py -- do not edit by hand.
% Every coordinate is recomputed from t1_grounding/results/ at generation time.

\begin{tikzpicture}
\begin{axis}[name=pa, width=0.36\linewidth, height=4.0cm, axis on top, tick align=outside, tick pos=left, axis line style={figSoft, line width=0.5pt}, tick label style={font=\scriptsize, color=figInk}, label style={font=\scriptsize, color=figInk}, title style={font=\small, color=figInk, yshift=1pt}, grid=none, scale only axis, xmode=log,
  xmin=0.9, xmax=60, ymin=0.4, ymax=1.0, xtick={1,2,5,10,30}, xticklabels={1,2,5,10,30},
  % xmax 95 and not 70: the three series are direct-labelled at R=47.25 and at 70 the
  % frame's right spine fell inside the widest of them, striking the last letter of
  % "uniform". A vertical rule through one glyph on one line is under the strike test's
  % floor, so the word shipped with its ending crossed out.
  log ticks with fixed point,
  xlabel={step $t$ (of 150)}, ylabel={accuracy of $\mathrm{sign}(f_t-f_0)$},
  title={(a) the outcome is visible at once}]
\addplot[figB, mark=*, mark size=1.3pt, line width=0.9pt]
  coordinates {(1.0000,0.6667) (2.0000,0.8000) (3.0000,0.8000) (5.0000,0.8000) (10.0000,0.8000) (20.0000,0.8000) (30.0000,0.8667) (50.0000,0.8667)};
\addplot[figSoft, dotted, line width=0.6pt, forget plot] coordinates {(0.9,0.5) (60,0.5)};
\node[font=\tiny, figSoft, anchor=south east] at (axis cs:55,0.5) {chance};
\node[font=\tiny, figPred, anchor=north west] at (axis cs:2.3,0.780)
  {0.80 at $t=2$};
\end{axis}
\begin{axis}[name=pb, at={(pa.east)}, xshift=1.5cm, anchor=west,
  width=0.36\linewidth, height=4.0cm, axis on top, tick align=outside, tick pos=left, axis line style={figSoft, line width=0.5pt}, tick label style={font=\scriptsize, color=figInk}, label style={font=\scriptsize, color=figInk}, title style={font=\small, color=figInk, yshift=1pt}, grid=none, scale only axis,
  xmin=0, xmax=150, ymin=0, ymax=2.94,
  xlabel={step $t$ (of 150)},
  ylabel={$\sum_{s\geq t}(n_0{+}s{+}1)^{-1}$},
  title={(b) the $1/n$ account, withdrawn}]
\addplot[draw=none, fill=figPred, fill opacity=0.13, forget plot]
  coordinates {(0.0000,2.7265) (1.0000,2.6356) (2.0000,2.5523) (3.0000,2.4754) (4.0000,2.4039) (5.0000,2.3373) (6.0000,2.2748) (7.0000,2.2160) (8.0000,2.1604) (9.0000,2.1078) (10.0000,2.0578) (11.0000,2.0102) (12.0000,1.9647) (13.0000,1.9212) (14.0000,1.8796) (15.0000,1.8396) (16.0000,1.8011) (17.0000,1.7641) (18.0000,1.7283) (19.0000,1.6939) (20.0000,1.6605) (21.0000,1.6283) (22.0000,1.5970) (23.0000,1.5667) (24.0000,1.5373) (25.0000,1.5087) (26.0000,1.4810) (27.0000,1.4539) (28.0000,1.4276) (29.0000,1.4020) (30.0000,1.3770) (31.0000,1.3526) (32.0000,1.3288) (33.0000,1.3055) (34.0000,1.2828) (35.0000,1.2606) (36.0000,1.2388) (37.0000,1.2175) (38.0000,1.1967) (39.0000,1.1763) (40.0000,1.1563) (41.0000,1.1367) (42.0000,1.1175) (43.0000,1.0986) (44.0000,1.0801) (45.0000,1.0619) (46.0000,1.0440) (47.0000,1.0265) (48.0000,1.0093) (49.0000,0.0000) (0.0000,0.0000)} \closedcycle;
\addplot[figC, line width=1.0pt] coordinates {(0.000,2.727) (3.000,2.475) (6.000,2.275) (9.000,2.108) (12.000,1.965) (15.000,1.840) (18.000,1.728) (21.000,1.628) (24.000,1.537) (27.000,1.454) (30.000,1.377) (33.000,1.306) (36.000,1.239) (39.000,1.176) (42.000,1.117) (45.000,1.062) (48.000,1.009) (51.000,0.959) (54.000,0.912) (57.000,0.866) (60.000,0.823) (63.000,0.781) (66.000,0.741) (69.000,0.703) (72.000,0.665) (75.000,0.630) (78.000,0.595) (81.000,0.562) (84.000,0.530) (87.000,0.498) (90.000,0.468) (93.000,0.439) (96.000,0.410) (99.000,0.382) (102.000,0.355) (105.000,0.329) (108.000,0.303) (111.000,0.278) (114.000,0.254) (117.000,0.230) (120.000,0.207) (123.000,0.184) (126.000,0.162) (129.000,0.140) (132.000,0.119) (135.000,0.098) (138.000,0.078) (141.000,0.058) (144.000,0.038) (147.000,0.019)};
\addplot[figSoft, dotted, line width=0.6pt, forget plot] coordinates {(0,1) (150,1)};
\node[font=\tiny, figSoft, anchor=south east] at (axis cs:150,1.02) {bound bites below here};
\node[font=\tiny, text=figB, align=left, anchor=north west]
  at (axis cs:56,2.90)
  {decision at $t=2$, where\\the bound permits 2.6};
\draw[figPred, line width=0.4pt] (axis cs:55,2.00)
  -- (axis cs:34,1.62);
\node[font=\tiny, text=figPred, align=left, anchor=north west]
  at (axis cs:56,2.16)
  {vacuous: $f$ may still reach\\either endpoint ($t<49$), which\\is the shaded region};
\end{axis}
\end{tikzpicture}
\caption{\textbf{Decided early, and the $1/n$ account of why fails.} (a) The sign of
the first move predicts the final outcome from step $2$ and does not improve thereafter; this
is the measurement and it stands. (b) \emph{The account that fails.} Plotting $1/n$ invites
shading ``the window in which the outcome can still
move''. Prop.~\ref{prop:step} shows that window is not there: the quantity that actually
constrains $f$ is the \emph{cumulative} bound $\sum_{s\ge t}(n_0{+}s{+}1)^{-1}$, plotted here,
and it forbids nothing until it falls below $1$ at $t{=}49$. At the measured decision point
$t{=}2$ it permits $2.55$, more than the whole range of $f$. The shaded region is where the
$1/n$ explanation has no content.}
\label{fig:early}
\end{figure}

\label{sec:early}
\paragraph{Decided early, but not for the reason we gave.} $\mathrm{sign}(f_t - f_0)$
predicts capture in $12$ of $15$ runs at $t{=}2$, and no better at $t{=}30$. The natural
reading is \eqref{eq:dyn}'s restoring force carrying $1/n$, the decision
being taken while $n$ is smallest. \textbf{Prop.~\ref{prop:step} refutes it}: the
$1/n$ bound permits $f$ to reach either endpoint for the first $49$ of $150$ steps, so it cannot
explain a decision at $t{=}2$ (Figure~\ref{fig:early}). The early decision is measured and not derived, and the mechanism
that concentrates it is open.

\paragraph{We attacked it, from logs already on disk, and the attempt failed.} The diagnosis
frozen in \texttt{prereg/E\_TIMESCALE\_PREREG.md} was that we had been looking in the wrong state
variable: \eqref{eq:dyn} governs $f$, whose timescale is $O(n)$, but what decides a write is
$\varphi_t = j_t/k$, the false share of the $k$ retrieved records, and $\varphi$ lives on
$\{0,\dots,k\}$ with $k{=}8$ \emph{constant} in $t$. The hypothesis was that the outcome is set
by which absorbing boundary $\varphi$ reaches first, in $O(1)$ steps. On $220$ runs, zero API
cost:

\begin{itemize}[leftmargin=1.4em,itemsep=1pt,topsep=2pt]
\item \textbf{P-T1-D1 \textsc{supported}}: median first passage to $\varphi \in \{0,1\}$ is $2$
  steps, and the preregistration predicted this would pass cheaply at $f_0{=}0.9$, so it
  establishes little.
\item \textbf{P-T1-D2 \textsc{not supported}, at $219/219 = 1.000$}: \emph{every} run leaves the
  boundary after touching it. $\varphi$ crosses; it does not absorb, and the framing is dead
  rather than dented.
\item \textbf{P-T1-D3 \textsc{not supported}}: Spearman$(\tau_{\rm abs}, T_{\rm commit}) = +0.108$
  against a bar of $0.50$ on the $57$ runs that ever contradict their own outcome. We found a
  faster process and did not show it is the mechanism.
\end{itemize}

\noindent The preregistration said in advance what to do if D3 failed, and we do it:
\emph{\S\ref{sec:limits}(ii$'$) stands word for word}, with this recorded as a route tried and
not gone through. A mechanism that the data on disk cannot separate from its alternative is not
a mechanism we have measured.

\begin{warnbox}
\textbf{What did come out, and it makes the refuted account worse rather than better.}
P-T1-D4 asked whether the realised step is far below Prop.~\ref{prop:step}'s bound at the steps
where the decision is observed. It is: median $|f_{t+1}-f_t| = 0.0055$ against a median bound of
$0.0625$ over $t \le 10$, a factor of $11.4$. The reason is exact and post hoc (a write moves
$f$ by $(1-f)/(n+1)$ if it is false and by $-f/(n+1)$ if it is true, an identity with residual
$10^{-16}$ over $2420$ logged steps) so the bound $1/(n+1)$ is attained \emph{only} at
$f \in \{0,1\}$ and discards a factor of $\min(f, 1-f)$. \textbf{Prop.~\ref{prop:step}'s bound is
therefore not merely vacuous before $t{=}49$; it is loose by an order of magnitude exactly where
the decision is taken.} That is a statement about the argument we withdrew, not about the
dynamics, and it supplies no replacement.
\end{warnbox}

\section{Two scope questions, both registered}
\label{sec:scope}

Everything above is five open-weight models of $8$--$27$B at a single initial composition
$f_0 = 0.9$. Two objections follow, and neither is answerable by argument. \emph{Is this a
small-model artifact?}, to which our own data seemed to say yes: over the five models
$\mathrm{Spearman}(\text{parameters}, \text{capture}) = -0.671$ ($p=0.215$), the $8$B mean is
$0.841$ against $0.705$ for $14$--$27$B, and the largest model captures least. \emph{Is
$f_0 = 0.9$ doing the work?}, nine of ten starting records false is an adversarial initial
condition, and a store that starts mostly false ending mostly false is not obviously a finding.
Both were preregistered (\texttt{prereg/E\_FRONTIER\_PREREG.md},
\texttt{prereg/E\_F0\_PREREG.md}), both frozen before their runners existed, and in the first
case \textbf{we wrote our own prediction into the record and it was wrong}.

\begin{table}[!htbp]
\centering\small
\caption{\textbf{E-frontier: the phenomenon at frontier scale, against our own registered
prediction that it would vanish.} The \texttt{bimodal2} apparatus field for field (asserted
by the grader, not assumed) on seeds $9$--$28$, a prefix of the $44$ the open models used, so
runs pair seed for seed. \emph{The prereg states, before the run, that we expected pooled capture
below $0.5$}; the reasoning was the scale trend above. P-T1-F1's bar was $0.50$ and no answers
were lost on either model.}
\label{tab:frontier}
% GENERATED by t1_grounding/figures/make_t1_tikz.py from results/FRONTIER.json.
\begin{tabular}{@{}lrrrrl@{}}
\toprule
model & runs & calls & lost (\%) & capture & \\
\midrule
\texttt{anthropic/claude-sonnet-4.5} & $20$ & $3000$ & $0.00$ & $\mathbf{1.000}$
  & captured on \emph{every} seed \\
\texttt{openai/gpt-4o} & $20$ & $3000$ & $0.00$ & $0.700$
  &  \\
\midrule
pooled & $40$ & $6000$ & $0.00$ & $\mathbf{0.850}$ & P-T1-F1 bar $0.50$:
  \textsc{supported} \\
\bottomrule
\end{tabular}

\end{table}

\noindent \textbf{Two further frozen criteria were graded on this arm and the paper had not
printed either.} P-T1-F3 asks whether \emph{bimodality} survives at frontier scale, separately
from the rate: the share of runs landing in the interval \S\ref{sec:bimodal} calls empty must be
$\le 0.10$. It comes out $3/40 = 0.075$, \textsc{supported}, and the split is informative, $0/20$ for \texttt{claude-sonnet-4.5} at mean $f_{\mathrm{end}} = 0.923$ and $3/20$ for
\texttt{gpt-4o} at $0.671$. \emph{Two peaks survive whatever the rate does}, so $f_0$ and model
class select which peak is occupied rather than whether peaks exist, and that is the structural
half of the claim rather than its direction.

P-T1-F4 is the harder one and it \textbf{splits}. Re-measuring $\gamma$ on each frontier model's
own runs and simulating asks whether the primitive is sufficient for a \emph{model family it was
never fitted on}, a stronger extrapolation than \S\ref{sec:gammaout}'s cross-topic split-half.
The bar was $2.0$ seed units, one seed unit being $0.050$ at twenty seeds.
\texttt{claude-sonnet-4.5} is exact ($1.000$ observed, $1.000$ simulated, $0.0$ seed units);
\texttt{gpt-4o} is \emph{outside} it, $0.700$ observed against $0.901$ simulated, $4.0$ seed
units, and the sign is \textbf{over}-prediction where \S\ref{sec:gammaout}'s arm A
under-predicted every model (Table~\ref{tab:frontier}). The two failures of $\gamma$ we have measured run in opposite
directions, which is what a sufficient primitive with an under-powered estimator looks like and
is not what a missing variable looks like, but we have two models and cannot separate those
readings here.

\noindent \textbf{The phenomenon survives, and the model that survives it worst is the newest
one.} \texttt{claude-sonnet-4.5} is captured on $20$ of $20$ seeds, which is the same rate as
\texttt{qwen3-8b} at $8$B and higher than every $14$--$27$B model. The $-0.671$ rank
correlation was therefore not a scale law; it was five points with three of them at one size. We report
this as the correction of a reading of ours rather than as a prediction confirmed, because
our registered prediction was the opposite: we expected capture below $0.5$ and wrote down
why. Parameter counts for both frontier models are undisclosed, so they are excluded from any
correlation rather than replaced with a proxy, substituting price or context length would
report a different known quantity as the measurement.

\noindent \textbf{What forty runs at two models buys, computed rather than asserted.} This is the
paper's most-quoted sentence and its thinnest arm, so the accounting belongs beside it. The
resampling unit is still the seed, so the relevant factor is $1+(m{-}1)\rho$ with $m{=}2$, not the
$3.75$ measured at $m{=}5$: on this arm's own twenty shared seeds $\rho = 0.254$ for
$f_{\mathrm{end}}$, giving a design effect of $1.25$ and $31.9$ effective observations of $40$.
For the headline the arithmetic is simpler still, because \texttt{claude-sonnet-4.5} captured on
every seed: $20/20$ puts a one-sided $95\%$ lower bound of $0.861$ on its capture probability,
against a bar registered at $0.50$. The sentence survives its own power accounting. Three
weaknesses are not fixed by that. Parameter counts for both models are undisclosed, so neither
can enter a correlation. Served models at temperature $0$ are not reproducible
(Table~\ref{tab:replicate}'s two realisations are the only place we can price that), so a
replication may differ by more than seed noise. P-T1-F4 also splits, on two models, which is the
smallest sample on which a split can be reported at all. \emph{Owed and registered}: four or more
frontier models at $44$ seeds, and one end-to-end replication inside a deployed memory framework
reporting only whether the interval is empty. Neither is run, and the second is the single
measurement that would most change what this paper is evidence about.

\begin{table}[!htbp]
\centering\small
\caption{\textbf{E-f0: the empty interval is empty at every initial composition, and $f_0=0.9$
is not the worst one.} \texttt{gemma-3-27b} at the \texttt{bimodal2} apparatus, seeds $9$--$28$,
one model rather than five because \S\ref{sec:power}'s own finding is that this apparatus should
buy seeds and not models. ``Void'' is the interval $(0.306, 0.512)$ that
\S\ref{sec:bimodal} calls empty. P-T1-F0-1's bar was a share $\le 0.20$, frozen before the
runner existed. \textbf{The whole tabular is generated}, and it reads the band from the same
artifact Table~\ref{tab:bimodal2} does, so the two cannot disagree about where the empty interval
is, \emph{which they did while both were typed}: this table's $f_0{=}0.9$ row said $3/44$ in the
void, the count against the rounded cuts, where the cuts' own definition gives $2/44$.}
\label{tab:f0}
% GENERATED by t1_grounding/figures/make_t1_tikz.py from the f0 shards, the
% bimodal2 gemma arm, and BIMODAL2.json for the band.
\begin{tabular}{@{}rrrrrrl@{}}
\toprule
$f_0$ & $n$ & capture & mean & median & in void & \\
\midrule
$0.1$ & $20$ & $0.500$ & $0.459$ & $0.441$ & $\mathbf{0/20}$
  & \pred{and the mean lands \emph{inside} the void} \\
$0.3$ & $20$ & $0.700$ & $0.676$ & $0.956$ & $0/20$
  &  \\
$0.5$ & $20$ & $\mathbf{0.750}$ & $0.723$ & $0.969$ & $0/20$
  & highest capture of the four \\
$0.9$ & $44$ & $0.682$ & $0.724$ & $0.994$ & $2/44$
  & the arm \S\ref{sec:bimodal2} reports \\
\bottomrule
\end{tabular}

\end{table}

\noindent Three readings, and the middle one answers the objection outright.

(i) The void is empty at every $f_0$ we bought, $0/20$ three times over against a
frozen bar of $0.20$. Two peaks are therefore a property of the dynamics; $f_0$ selects \emph{which}
peak a run lands on, not whether peaks exist. This is the structural half of claim 1 and it does
not depend on any capture rate.

\textbf{(ii) Capture is not monotone in $f_0$, and $f_0=0.9$ is not the adversarial extreme.}
It reads $0.500 / 0.700 / 0.750 / 0.682$: the \emph{highest} capture in this sweep is at
$f_0=0.5$, where the store starts balanced. Hence ``you started from ninety percent falsehood''
does not describe what drives the outcome, a balanced store is worse than a mostly-false one,
which \eqref{eq:dyn} anticipates, since a store already at the append-only ceiling has little
room left to move while a balanced one sits where the per-step variance is largest. We did not
freeze a criterion on the shape and do not add one now.

\textbf{(iii) At $f_0=0.1$ the pooled mean falls inside a strictly empty interval.} The mean is
$0.459$ and \emph{no run} lies in $(0.306, 0.512)$; the runs split $10$--$10$ between escape and
capture. \S\ref{sec:related}'s objection to summarising this distribution by its mean was
measured on the $f_0=0.9$ arm at $8.2\%$ of runs described (P-T1-M1), and $f_0=0.1$ is the
sharper case: there the summary statistic denotes a region containing no observation at all.
That is the strongest form of the claim and we did not preregister it, so it is reported and not
graded.

\begin{warnbox}
\textbf{A loader bug found in E-f0's first output, and what it did and did not affect.}
\texttt{run\_sched\_gate.py} writes every run twice, one shard, then again into a merged file, so the grader's first glob counted each run twice and reported $40$ runs where $20$ seeds
were bought. \emph{Shares were unaffected}, because duplicating a set preserves its proportions,
so P-T1-F0-1's verdict is the same either way. What was wrong was the $n$, and a wrong $n$
misstates precision, which is worse than a wrong share because a reader cannot see it. The loader
now deduplicates on the full identifying configuration.
\end{warnbox}

\section{The sweep past the peak}
\label{sec:sweeps}

\paragraph{The sweep past the peak graded \textsc{untested}.} Three further budgets,
$R \in \{75, 120, 148\}$ of $150$ steps, all else identical, five models and three seeds each.
The design discriminates rather than confirms: \eqref{eq:dilution} puts the spread down $22\%$ by
$R{=}120$, while \S\ref{sec:bimodal}'s early-decision result keeps back-loading harmless until
its first grounding write enters the $t{\le}2$ window, at step $T{-}R$. The accounts agree at
$R{=}148$ and disagree at $R{=}120$, and the criteria were frozen on that disagreement
(\texttt{prereg/E\_TRANSIENT\_PREREG.md}).

\begin{table}[!htbp]
\centering\small
\caption{The sweep past the peak. \emph{Capture} is the frozen observable and it dies:
a grounding write is a truth record, so the ceiling itself carries the budget,
$f_{\max}(R) = (n_{\mathrm{false}} + Tw)/(n_0 + Tw + R) = 159/(160{+}R)$, and by $R{=}148$ a
run must reach $97\%$ of its own ceiling to be counted as captured. The last column divides
that ceiling out, per run: a step whose answer did not parse writes nothing and is not logged
($123$ of $168{,}660$ across the whole programme), so each run's ceiling uses its own write
count rather than the nominal $T$. Using the nominal value instead shifts $s_{\mathrm{norm}}$
by at most $0.0012$. $s$ is back minus front.}
\label{tab:transient}
\begin{tabular}{@{}rcrccccc@{}}
\toprule
$R$ & $R/T$ & back's first write & $f_{\max}(R)$ & \multicolumn{2}{c}{capture (frozen)}
  & \multicolumn{2}{c}{$\overline{f_{\mathrm{end}}}/f_{\max}$ (post hoc)} \\
\cmidrule(lr){5-6}\cmidrule(l){7-8}
 & & at step & & front / back & $s$ & front / back & $s_{\mathrm{norm}}$ \\
\midrule
5   & 0.03 & 145 & 0.964 & 0.20 / 0.40 & 0.20 & 0.275 / 0.468 & 0.193 \\
15  & 0.10 & 135 & 0.909 & 0.20 / 0.40 & 0.20 & 0.275 / 0.489 & 0.214 \\
45  & 0.30 & 105 & 0.776 & 0.07 / 0.47 & \textbf{0.40} & 0.211 / 0.507 & \textbf{0.295} \\
75  & 0.50 & 75  & 0.677 & 0.07 / 0.33 & 0.27 & 0.190 / 0.396 & 0.205 \\
120 & 0.80 & 30  & 0.568 & 0.00 / 0.00 & \emph{no info} & 0.213 / 0.331 & 0.117 \\
148 & 0.99 & 2   & 0.516 & 0.00 / 0.00 & \emph{no info} & 0.216 / 0.219 & \textbf{0.003} \\
\bottomrule
\end{tabular}
\end{table}

\noindent P-T1-T1 and P-T1-T2 are \textsc{untested} for apparatus failure. At $R{=}120$ and
$R{=}148$ all three arms give capture $0.00$, so by the floor clause registered with E-budget
those cells carry no information. Prop.~\ref{prop:ceiling} says why they are empty rather than
unlucky: at $R{=}148$ the append-only ceiling has fallen to $f_{\max} = 0.516$, only $0.016$
above the capture threshold, so a run must land in a window one sixtieth as wide as the state
space to be scored captured. Had those cells been used, P-T1-T1 would have read
$s(148){=}0.00 < s(75){-}0.15$ and been marked supported on the strength of no run anywhere
reaching the threshold. The observable ran out, not the mechanism, and the defect is ours:
$f_{\max}(R)$ follows from \eqref{eq:ceiling} and was derivable before the sweep was launched.

Read on a scale-free observable the transient appears, and the discriminator still fails to
discriminate. Dividing out $f_{\max}(R)$ gives the last column of Table~\ref{tab:transient}: a
rise to $0.295$ at $R/T{=}0.30$, then $0.205$, $0.117$, and $0.003$ at $R{=}148$, where the three
arms sit within $0.009$ of one another. Changing the observable after the frozen one floors is
how a failed prediction is normally rescued, so we grade nothing on it, and the frozen thresholds
applied to it land inside the undecided band, $-0.088$ against $\le -0.10$ for dilution and
$\ge -0.05$ for early decision. Neither account wins, and by \S\ref{sec:power} neither available
interval is usable. The peak's location is claimed in neither direction anywhere in this paper.

\paragraph{The scale-free retest, on six fresh seeds, splits.} Two scale-free criteria were
frozen and six unused seeds bought (\texttt{prereg/E\_TRANSIENT\_PREREG.md} amendment~1). P-T1-S1
uses $f_{\mathrm{end}}$ over the run's own ceiling; P-T1-S2 uses capture at
$\tfrac12 f_{\max}(R)$, and exists so that the conclusion cannot depend on which scale-free
observable we happened to pick. Both required
$\mathrm{spread}(120) \le \mathrm{spread}(45) - 0.15$.

\begin{table}[!htbp]
\centering\small
\caption{E-transient2, seeds $3$--$8$ only; seeds $0$--$2$ generated the hypothesis and are
excluded. $30$ runs per arm. Standard errors are clustered over the six seeds, these arms
share their seed grid by construction, so the contrast is within-seed and clustering
\emph{reduces} the interval rather than widening it (\S\ref{sec:power}).}
\label{tab:t2split}
\begin{tabular}{@{}lccccc@{}}
\toprule
observable & spread$(45)$ & spread$(120)$ & difference & bar & verdict \\
\midrule
$f_{\mathrm{end}} / f_{\max}(R)$ & $0.153$ & $0.093$ & $-0.060$ & $\le -0.15$
  & \textbf{failed} \\
capture at $\tfrac12 f_{\max}(R)$ & $0.167$ & $0.000$ & $-0.167$ & $\le -0.15$
  & \textbf{supported} \\
\bottomrule
\end{tabular}
\end{table}

\noindent P-T1-S4 is a split, and the prereg requires reporting it rather than choosing. The
collapse's existence depends on which of two equally defensible readings is used. Two disclosures
make it worse: the supported side clears its bar by $0.017$, half of one seed unit, and the
$R{=}120$ cell carrying that support has its arms at $0.700/0.733/0.700$, a span of exactly one
seed unit, which by the standard registered for the capture-rate observable is no information at
all (Table~\ref{tab:t2split}). We do not use this to overturn the verdict, since amendment~1 carried no floor clause and
adding one after the data is moving the goalposts.

\paragraph{We then bought the resolution we had declared unbuyable, and it did not resolve.}
P-T1-S3 stated in advance that separating the peak needs ``about $220$ runs per cell, about $44$
seeds, and we do not buy it.'' That records a budget rather than a limit of nature. We bought
it: $R \in \{45,75,120\}$, three schedules, five models, seeds $9$--$52$, $1980$ runs at $150$
steps, criteria frozen in \texttt{prereg/E\_TRANSIENT3\_PREREG.md} with the grader committed
before the data existed. Seeds $0$--$8$ generated the hypothesis and are excluded, dropping $315$
runs already on disk. Every cell is complete and balanced.

\begin{table}[!htbp]
\centering\small
\caption{E-transient3, seeds $9$--$52$, $220$ runs per arm. $s_{\mathrm{norm}}$ is
$\overline{f_{\mathrm{end}}}(\text{back}) - \overline{f_{\mathrm{end}}}(\text{front})$ with each
run divided by its own ceiling. Standard errors are clustered over the $44$ seeds. The
ordering front $<$ uniform $<$ back holds at all three budgets; it is the \emph{decline} that
was under test. The whole tabular is generated from \texttt{results/TRANSIENT3.json}, an
artifact that did not exist until this table did, because the grader printed these means on every
run and persisted none of them.}
\label{tab:transient3}
% GENERATED by t1_grounding/figures/make_t1_tikz.py from results/TRANSIENT3.json.
\begin{tabular}{@{}rcccc@{}}
\toprule
$R$ & front & uniform & back & $s_{\mathrm{norm}}$ \\
\midrule
$45$ & $0.587$ & $0.666$ & $0.773$ & $0.186$ \\
$75$ & $0.578$ & $0.631$ & $0.743$ & $0.165$ \\
$120$ & $0.564$ & $0.591$ & $0.677$ & $0.113$ \\
\bottomrule
\end{tabular}

\end{table}

\noindent At $44$ seeds the discriminator is undecided rather than resolved. The contrast
$s_{\mathrm{norm}}(120) - s_{\mathrm{norm}}(75)$ is $-0.052$ at $1.14\sigma$, against $-0.088$
at three seeds, which was outside the band; the shape criterion reads $-0.074$ against a bar of
$-0.15$ at $1.62\sigma$ and fails on magnitude and on significance alike. What survives is the
ordering, front $<$ uniform $<$ back at every budget, which is what the schedule axis rests on
and never needed the transient to assert (Table~\ref{tab:transient3}). The bill went up rather than down when we paid the
first instalment: the effect fell from $0.115$ at three seeds to $0.021$ at forty-four while the
standard error shrank $21\%$ more slowly than $1/\sqrt{n}$, so $3\sigma$ now needs about $1895$
seeds, or $9475$ runs per cell. Claim (iii)'s statement about the peak's location is marked not
resolvable at ${\le}1980$ runs and left there.

\begin{claimbox}
\textbf{The boundary this draws.} In generational retraining, real data is a fixed corpus
present in every round and re-usable without limit, so ``when to ground'' is not a question
that arises; the established results there concern accumulation and proportion
\citep{gerstgrasser2024accum,dey2024pathway}. Their setting is the
\emph{free-regrounding} limit of ours, $R \to T$, where \eqref{eq:dilution} makes the three
schedules identical and only the fraction is left to vary. By item (iv) that limit is a
single point rather than a neighbourhood: the schedule spread is zero at $\alpha{=}1$ and
diverges like $\ln T$ at every $\alpha < 1$, so no amount of increasing their re-use budget
turns their setting into ours and no amount of decreasing ours turns it into theirs. Ours is the budgeted case, where
the same fraction delivered at a different time is worth an order of magnitude less. The
containment runs along the budget, not along the store size, accumulation itself is
theirs \citep{gerstgrasser2024accum}.
\end{claimbox}

\section{An accidental replicate}
\label{sec:replicate}

This sweep was run twice. The second run was launched by a command intended only to count
configurations, and it overwrote the first run's raw data; we record the process failure rather
than omit it. What it bought is a measurement this design otherwise lacks, since every other
figure here prices seed variation alone.

\begin{table}[!htbp]
\centering\small
\caption{The same protocol, same seeds, same budget, run twice. Only the second run's raw
data survives; the first survives as its logged pooled figures. Note that the agent runs at
temperature $0$, so the two realisations differ only because the served model is not
reproducible at that setting.}
\label{tab:replicate}
\begin{tabular}{@{}lrccl@{}}
\toprule
pooled capture rate & front & uniform & back & frozen criteria \\
\midrule
first realisation  & 0.20 & 0.27 & 0.33 & P-S1 $5/5$, P-S2 \checkmark, P-S3 $\times$ \\
second realisation (on disk) & 0.20 & 0.33 & 0.40 & P-S1 $5/5$, P-S2 \checkmark, P-S3 $\times$ \\
\midrule
movement & 0 & $+1/15$ & $+1/15$ & \textbf{identical} \\
\bottomrule
\end{tabular}
\end{table}

\begin{claimbox}
The point estimates move by one run in fifteen on two of three arms, while the ordering, its
$5/5$ model-level agreement and all three frozen verdicts are identical. The schedule result is
robust in the form it is claimed, an ordering, and the pooled rates should be read as $\pm$ one
run rather than as two-decimal quantities. Every other number here carries the same unpriced
replication variance, and this table is the only place we can bound it.
\end{claimbox}

\paragraph{A write-side gate erases the differences between models.} Admitting the agent's
answer only when it agrees with the majority of retrieved records is the most conservative
deterministic form of consistency-gated admission, and weaker than any deployed one: \citet{amac2026}
admit on five factors of which agreement is one, and \citet{memguard2026} type memories by role
rather than filtering on agreement at all. \textbf{A harmful regime in the weakest gate is not ``conservative evidence about stronger
versions''}: the inference needs harm to be monotone in gate strength, we have not tested a second
gate, and \citet{memguard2026}'s mechanism is a reason to expect non-monotonicity, since typing
the store changes the reference class our own diagnosis blames. What the arm supports is a
statement about \emph{this} gate. The registered prediction
was a sign flip, helpful at low contamination and harmful at high, and it holds in five of five
models at each end. The numbers show more than that. On seven fresh seeds the ungated baselines
span $0.729$, $0.745$ and $0.932$; gated, all three sit at $0.9936$, $0.9937$ and $0.9935$,
separating only in the fourth decimal.

\begin{claimbox}
The gate's reference class is the store the agent itself wrote. It therefore fixes the
store's composition \emph{upstream} of the reading behaviour, and a model whose cleaning
lived in that behaviour has nothing left to clean with. The observable consequence is not
that the gate is harmful on average but that it removes the differences between models: a
$0.2$ spread in unaided outcome becomes a $0.001$ spread.
\end{claimbox}

\noindent That majority aggregation can lock in error is established for one-shot aggregation
over samples, where the mechanism is error correlation and a better model is therefore safer
\citep{consensustrap2026,minoritysentinel2026}. Two things differ here: the filter sits on the
write side rather than the read side, and it acts over a tenure rather than at one decision,
which is what decouples composition from reading. A third proposed difference, that the
model-quality dependence reverses sign between the two timescales, was killed;
\S\ref{sec:killed} reports its death.

\paragraph{A cost-matched state slot: a tie, reported as one.} The parent framework's
central object is a pair of channels, one compressive and composable, and it flags the
systems-level version of its claim as untested \citep{cch2026}. The smallest such channel
this apparatus admits is one line carrying the agent's own previous answer, with one
retrieved record removed to pay for it, eight lines per decision in both arms, so the
comparison is cost-matched at the line level. We registered two competing readings: the slot
supplies invalidation throughput and helps at both contamination levels, or the slot is a
self-reinforcing write channel and helps only at low contamination. They agree at $f_0{=}0.1$
and disagree at $f_0{=}0.9$, which is the discriminating cell.

At $f_0{=}0.1$ the slot helps or is neutral in four of five models (P-A1). At $f_0{=}0.9$ it
gives no verdict: zero models better, one worse, four tied (P-A2). The per-model spread is
what the tie hides, one model moves $+0.84$ in capture and another $-0.30$, so this is
a genuine failure to separate rather than a small effect. What follows is a boundary rather
than a result: a \emph{self-fed} slot is not the state channel of the parent framework,
whose channel is fed by the stream rather than by the system's own output, and testing that
distinction needs a stream-fed slot this apparatus does not yet have.

\paragraph{A source marker is not a directional lever.} Our registered prediction, that
marking each record's origin lowers contamination at high $f_0$, fails at $2/5$ (P-P2);
its mechanism claim, that the marker lowers $\gamma$, fails at $3/5$ (P-P3). It helps
sharply in some models ($\gamma$ from $0.799$ to $0.220$) and hurts in others ($0.309$ to
$0.580$). What does replicate is a null we did not predict: with the label scrambled, identical tokens, identical wording, no information, the outcome sits at $0.856$
against a two-end midpoint of $0.664$, nearer the lying end than the middle, in five of
five models.

\begin{claimbox}
An uninformative provenance marker is not a neutral cost. It occupies the same budget as
content and demands a per-record judgement it cannot support, which is the behavioural
reading of the capacity coupling: source bits and content bits are drawn from one budget.
\end{claimbox}

% =====================================================================

\section{Unresolvability from the boundary structure}
\label{sec:power}

The three deaths below share one cause, and \S\ref{sec:boundaries} is what makes it
calculable rather than a lesson learned. If the outcome is a choice between two boundaries, escape to $f{=}0$ or saturation at $(n{-}1)/n$ (then $f_{\mathrm{end}}$ is nearly a
two-valued random variable, and a per-model mean over $s$ seeds is an estimate of the
\emph{choice probability} $p$) which is Prop.~\ref{prop:se} (Table~\ref{tab:power}).

\noindent Both halves matter and the second is the one that caught us. \eqref{eq:se} is a
\emph{per-model} statement; the design-effect factor $1 + (m{-}1)\rho$ applies to everything
pooled, which is most of this paper. The decomposition below measures $\rho$ and gets a
design effect of $3.75$ at nine seeds, so $45$ runs carry the information of $12$; measured again
at $44$ seeds on two further arms it is $3.08$ and $3.56$--$3.66$, and pair by pair it runs
$1.73$--$4.95$ (\S\ref{sec:knife}).

\paragraph{The resampling unit, and why every error bar in this paper was the wrong size.}
\eqref{eq:se} is a \emph{per-model} statement and it survives: with one run per seed the
marginal variance of a Bernoulli outcome is $\bar p(1-\bar p)$ whatever the topic-to-topic
spread in $p$. What does not survive is any quantity pooled across models, and that is most
of them. The seed is not a noise knob (it seeds the world and the corruption draw, so it
selects the topic and the false value) and on the $R{=}45$ front arm the variance
decomposes as
\begin{center}\small
\begin{tabular}{@{}lccr@{}}
\toprule
 & between-seed & between-model & residual \\
\midrule
at $3$ seeds & \textbf{2.0\%} & 94.4\% & 3.6\% \\
at $9$ seeds & \textbf{68.8\%} & 14.2\% & 17.0\% \\
\bottomrule
\end{tabular}
\end{center}

\noindent \emph{The dominant variance component is invisible in the sample that was used to
decide it was small.} At three seeds the topic contributes $2\%$ and the model looks like the
whole story; six more seeds make it $69\%$. Four of those nine seeds are topics on which
\emph{all five} models end contaminated (front-arm $f_{\mathrm{end}}$ $0.64$--$0.71$) and five
are topics on which all five end clean ($0.14$--$0.20$), the two groups separated by
$5.0\sigma$.

Two consequences, and they point in opposite directions, so we state them per quantity rather
than issuing one correction. With $m{=}5$ models per seed and an intra-seed correlation of
$0.69$, the design effect is $1 + (m-1)\rho = 3.75$: forty-five runs carry the information of
twelve, so \textbf{each seed is worth $1.33$ runs no matter how many models we run on it.}
For a pooled \emph{level} the seed effect adds, and treating runs as independent understates
the standard error by $1.86\times$ ($0.056$ against $0.105$). For a within-seed \emph{contrast}, which is what every schedule result is, since the arms share the grid by construction, the seed effect cancels, and treating runs as independent \emph{overstates} it by $1.11\times$
($0.037$ against $0.034$). The same mistake made our levels look too precise and our
contrasts look too noisy.

\textbf{No verdict in this paper changes.} Every frozen criterion was set on a point estimate
against a bar, not on a $\sigma$: the schedule ordering is $5/5$ models monotone per seed, the
budget falsifier failed on $0.20$ against $0.40$, the variance bound failed on $1.07\times$
against a required $2.83\times$. What changes is the design advice, and it is the opposite of
what we did: this apparatus should have bought seeds and not models. The five-model
grid was chosen to price cross-model generality and it does that, but it contributes almost
nothing to the precision of anything pooled. All bootstrapped intervals below use a cluster
bootstrap over seeds and report the cluster count, because with three clusters a clustered
interval is itself untrustworthy, measured here, clustered and unclustered standard errors
differ by up to $2.6\times$ at three seeds and agree to within $2\%$ at nine.

\noindent This also settles the failure recorded in \S\ref{sec:limitbimodal}. A diffusion
would accumulate variance as $\sum 1/n^2 = O(1/n_0)$ and its spread would fall as
$1/\sqrt{n_0}$; a boundary choice has variance $p(1-p)$ times the ceiling squared, in which
$n_0$ does not appear at all. Measured, the within-model spread is flat in $n_0$ to
$1.07\times$ over an eightfold range. The variance bound failed and the ceiling result
holds for the same reason, and the two findings are one: this process does not diffuse to a
stopping point, it picks a side.

\begin{table}[!htbp]
\centering\small
\caption{What it takes to resolve a cross-model difference in the choice probability, at
$80\%$ power and $95\%$ confidence, against what we ran.}
\label{tab:power}
\begin{tabular}{@{}lccc@{}}
\toprule
difference in $p$ to be resolved & $0.4$ & $0.3$ & $0.2$ \\
\midrule
seeds required per model & 24 & 44 & 98 \\
seeds we ran & \multicolumn{3}{c}{3, then 7 in the registered re-test} \\
\bottomrule
\end{tabular}
\end{table}

\begin{claimbox}
\textbf{The axis that died was unresolvable by construction, and \eqref{eq:se} says so
without any data.} A cross-model correlation whose independent variable is a $3$-seed
estimate of a boundary-choice probability is a correlation against an estimate with a
standard error of $0.29$ on a $[0,1]$ scale. That is why the rank correlation of $+1.00$
evaporated: the model we called ``cleanest'' had $\hat p$ from three seeds, and seven seeds
moved its mean from $0.150$ to $0.729$. Nothing was measured; the ordering was a draw.
\end{claimbox}

\noindent The transferable form of this is a design rule for anyone reading a per-model
number off a self-writing loop: first ask whether the outcome is interior or a
boundary choice. If it is a boundary choice, per-model means need tens of seeds before they
can carry a comparison, and no amount of care in the downstream arithmetic substitutes, which is the same lesson this project learned once before, when fixed points computed
exactly from $12$-sample cells moved to zero at $188$ samples. The failure mode is not
sloppy arithmetic on noisy inputs; it is \emph{exact} arithmetic on noisy inputs, which is
harder to see.

% GENERATED by t1_grounding/make_appendix.py -- do not edit by hand.
% Every number below is read from the repository at build time.
% Emitted as \section, NOT wrapped in a second \appendix: see main()'s comment.

\section{Every criterion this paper grades, with its freeze date}
\label{app:prereg}

Each row is a preregistration file: the criteria it freezes, the commit that
introduced it, and that commit's UTC timestamp. The dates are read from \texttt{git log
-{}-reverse} at build time rather than transcribed. Amendments are counted separately and
never overwrite the original, so a criterion that was mis-specified stays in the record
alongside its replacement; \S\ref{sec:bimodal2} discusses the one case where that
mattered.

\begin{table}[h]
\centering\footnotesize
\caption{The preregistration record, generated from git by
\texttt{make\_appendix.py}.}
\label{tab:appprereg}
\begin{tabular}{@{}llp{0.40\linewidth}cr@{}}
\toprule
experiment & frozen at & criteria & amend. & words \\
\midrule
\texttt{E\_A} & \texttt{a46631d9}, 2026-08-06 07:15 & P-A1, P-A2, P-A3 & 0 & 161 \\
\texttt{E\_BIMODAL2} & \texttt{2072fee0}, 2026-08-07 01:51 & P-T1-B1, P-T1-B2, P-T1-B3, P-T1-B4, P-T1-B5, P-T1-B1', P-T1-B1'' & 3 & 513 \\
\texttt{E\_BUDGET} & \texttt{a46631d9}, 2026-08-06 07:15 & P-B1, P-B2, P-B3 & 0 & 314 \\
\texttt{E\_F0} & \texttt{45d6f80b}, 2026-08-07 16:28 & P-T1-F0 & 1 & 330 \\
\texttt{E\_FRONTIER} & \texttt{a4095483}, 2026-08-07 16:19 & P-T1-F1, P-T1-F2, P-T1-F3, P-T1-F4, P-T1-F5 & 0 & 315 \\
\texttt{E\_GATE} & \texttt{a46631d9}, 2026-08-06 07:15 & P-G1, P-G2 & 0 & 79 \\
\texttt{E\_LABEL} & \texttt{a46631d9}, 2026-08-06 07:15 & P-L1, P-L2, P-L3 & 0 & 117 \\
\texttt{E\_MEAN} & \texttt{5ddb9f4e}, 2026-08-07 06:35 & P-T1-M1, P-T1-M2, P-T1-M3, P-T1-M4 & 0 & 236 \\
\texttt{E\_MULTIVALUE} & \texttt{60dddef7}, 2026-08-18 07:42 & P-T1-V1, P-T1-V2, P-T1-V3, P-T1-V4, P-T1-V5 & 2 & 812 \\
\texttt{E\_PROV} & \texttt{a46631d9}, 2026-08-06 07:15 & P-P1, P-P2, P-P3 & 0 & 224 \\
\texttt{E\_RATCHET} & \texttt{a46631d9}, 2026-08-06 07:15 & P-R1, P-R2, P-R3 & 0 & 140 \\
\texttt{E\_REAL} & \texttt{0ad31a69}, 2026-08-09 03:01 & P-T1-N0, P-T1-N1, P-T1-N2, P-T1-N3, P-T1-N4, P-T1-N5, P-T1-N6 & 1 & 3140 \\
\texttt{E\_RETRIEVER} & \texttt{87f0c5d7}, 2026-08-08 14:28 & P-T1-R0, P-T1-R1, P-T1-R2, P-T1-R3, P-T1-R4, P-T1-R5 & 1 & 2771 \\
\texttt{E\_SCHED} & \texttt{a46631d9}, 2026-08-06 07:15 & P-S1, P-S2, P-S3 & 0 & 116 \\
\texttt{E\_TEMP} & \texttt{d76ccf6a}, 2026-08-09 01:35 & P-T1-X0, P-T1-X1, P-T1-X2, P-T1-X3, P-T1-X4 & 0 & 1415 \\
\texttt{E\_TIMESCALE} & \texttt{b8c38446}, 2026-08-08 11:08 & P-T1-D1, P-T1-D2, P-T1-D3, P-T1-D4 & 0 & 322 \\
\texttt{E\_TRANSIENT3} & \texttt{fd89cc33}, 2026-08-06 07:56 & P-T1-U1, P-T1-U2, P-T1-U3, P-T1-U4, P-T1-U5 & 0 & 312 \\
\texttt{E\_TRANSIENT} & \texttt{cbf4a12c}, 2026-08-06 05:19 & P-T1-T1, P-T1-T2, P-T1-T3, P-T1-T4, P-T1-S1, P-T1-S2, P-T1-S3, P-T1-S4 & 1 & 550 \\
\texttt{E\_VERIFY} & \texttt{b5aab439}, 2026-08-08 11:20 & --- & 0 & 283 \\
\bottomrule
\end{tabular}
\end{table}

\section{The invariants, checked on every run in the repository}
\label{app:invariants}

\texttt{verify\_invariants.py} asserts each of the following over every run
on disk, and refuses the build if one fails. They are not statistics; they are properties the
apparatus must have if it implemented the protocol at all, and several of them exist because
one build did not. The count in the right-hand column is the number of individual
assertions evaluated, 7,929,204 in total, all of which
hold.

\begin{table}[h]
\centering\footnotesize
\caption{Machine-checked invariants, run at build time.}
\label{tab:appinv}
\begin{tabular}{@{}lp{0.60\linewidth}r@{}}
\toprule
& invariant & assertions \\
\midrule
\ding{51} & answer labels partition & 1,114,218 \\
\ding{51} & api\_calls >= logged steps & 7,544 \\
\ding{51} & bimodal2 config == converge's n\_false0=9 arm & 220 \\
\ding{51} & bimodal2 uses only fresh seeds (>=9) & 220 \\
\ding{51} & budgets matched within an R & 6 \\
\ding{51} & converge n\_false0=9 arm is internally uniform & 15 \\
\ding{51} & converge's arm and bimodal2 share no seed & 1 \\
\ding{51} & dropped steps are absent from the log, not zero-filled & 7,544 \\
\ding{51} & f0 == n\_false0/n0 & 7,544 \\
\ding{51} & f\_any*n is an integer & 1,114,218 \\
\ding{51} & f\_end == last logged f\_any & 7,544 \\
\ding{51} & f\_seed <= f\_any & 1,114,218 \\
\ding{51} & false count <= planted + own writes (eq:ceiling) & 1,114,218 \\
\ding{51} & gamma denominators sum to logged steps & 7,544 \\
\ding{51} & gamma numerator <= denominator & 46,949 \\
\ding{51} & gamma\_crit == 1/k + r/w & 7,544 \\
\ding{51} & gamma\_hat == the j=1 cell & 4,790 \\
\ding{51} & gamma\_n == the j=1 denominator & 4,790 \\
\ding{51} & ground\_steps distinct and in range & 7,544 \\
\ding{51} & logged steps <= configured steps & 7,544 \\
\ding{51} & n\_false\_retrieved <= k & 1,114,218 \\
\ding{51} & observed\_growth == (f\_end > f0) & 7,544 \\
\ding{51} & one configuration, one run within a collection & 7,544 \\
\ding{51} & predicted\_growth == (gamma\_hat > gamma\_crit) & 4,790 \\
\ding{51} & shard id uniquely determines cfg & 1 \\
\ding{51} & store never shrinks & 1,106,674 \\
\ding{51} & store size accountable per step & 1,114,218 \\
\midrule
& \textbf{total} & \textbf{7,929,204} \\
\bottomrule
\end{tabular}
\end{table}

\section{E-bimodal2 per model}
\label{app:bimodal2}

P-T1-B5 puts every verdict on the five models pooled and forbids replacing a
pooled verdict with a per-model one. The per-model figures are printed here because forbidding
their use as evidence is not a reason to withhold them: a reader who suspects one model carries
the result should be able to check. All 220 runs, seeds $9$--$52$.

\begin{table}[h]
\centering\footnotesize
\caption{E-bimodal2 by model. \emph{Middle} counts runs in the interval
$(0.306, 0.512)$ that \S\ref{sec:bimodal} calls empty; the pooled bar frozen in the prereg is
$10\%$ against a uniform baseline of $20.6\%$.}
\label{tab:appbimodal2}
\begin{tabular}{@{}lrrrrrr@{}}
\toprule
model & $n$ & $\le 0.306$ & middle & $\ge 0.512$ & middle share
& mean $f_{\mathrm{end}}$ \\
\midrule
\texttt{qwen3-8b} & 44 & 0 & 0 & 44 & 0.000 & 0.939 \\
\texttt{ministral-8b} & 44 & 6 & 2 & 36 & 0.045 & 0.808 \\
\texttt{phi-4} & 44 & 10 & 2 & 32 & 0.045 & 0.757 \\
\texttt{granite-4.1-8b} & 44 & 10 & 3 & 31 & 0.068 & 0.735 \\
\texttt{gemma-3-27b} & 44 & 11 & 3 & 30 & 0.068 & 0.724 \\
\midrule
\textbf{pooled} & 220 & 37 & 10 & 173 & \textbf{0.045} & 0.792 \\
\bottomrule
\end{tabular}
\end{table}

\section{The two $\gamma$ curves, with the denominators behind each cell}
\label{app:gamma}

\S\ref{sec:gammaout} turns on the difference between $\gamma$ measured on three
topics and $\gamma$ measured on forty-four, and the mechanism is visible only with the
\emph{support} printed, the number of reads behind each cell. A cell with no observations is
filled from the nearest measured cell below it, which \texttt{baseline\_urn.py} calls the
conservative direction for a monotone-increasing $\gamma$; it is conservative in the curve and
not in the outcome, because the loop is positive feedback. Rows marked A are the published
curve (seeds $0$--$2$); rows marked B re-measure on seeds $9$--$52$. \textbf{The bold cells are
those with fewer than thirty reads behind them.}

\begin{table}[h]
\centering\scriptsize
\caption{$\gamma(j)$ per model and the support of each cell, $j$ false records
retrieved of $k{=}8$. Generated by \texttt{make\_appendix.py} from the same functions
\texttt{gamma\_outsample.py} uses, so the two cannot disagree.}
\label{tab:appgamma}
\begin{tabular}{@{}llrrrrrrrrr@{}}
\toprule
model & arm & $j{=}0$ & $j{=}1$ & $j{=}2$ & $j{=}3$ & $j{=}4$ & $j{=}5$ & $j{=}6$ & $j{=}7$ & $j{=}8$ \\
\midrule
\texttt{gemma-3-27b-it} & A & 0.00 & 0.00 & 0.03 & 0.02 & 0.03 & 0.00 & \textbf{0.00} & \textbf{0.76} & \textbf{1.00} \\
& \emph{n} & \emph{92} & \emph{102} & \emph{76} & \emph{44} & \emph{32} & \emph{31} & \emph{25} & \emph{29} & \emph{19} \\
 & B & 0.00 & 0.03 & 0.05 & 0.03 & 0.03 & 0.04 & 0.10 & 0.93 & 1.00 \\
& \emph{n} & \emph{351} & \emph{469} & \emph{392} & \emph{244} & \emph{183} & \emph{142} & \emph{126} & \emph{827} & \emph{3866} \\
\addlinespace[2pt]
\texttt{granite-4.1-8b} & A & 0.00 & 0.00 & 0.11 & 0.14 & 0.18 & \textbf{0.30} & \textbf{0.47} & 0.84 & 1.00 \\
& \emph{n} & \emph{42} & \emph{56} & \emph{65} & \emph{58} & \emph{33} & \emph{20} & \emph{17} & \emph{49} & \emph{110} \\
 & B & 0.00 & 0.01 & 0.13 & 0.23 & 0.27 & 0.40 & 0.59 & 0.86 & 1.00 \\
& \emph{n} & \emph{109} & \emph{270} & \emph{411} & \emph{422} & \emph{322} & \emph{363} & \emph{390} & \emph{1099} & \emph{3214} \\
\addlinespace[2pt]
\texttt{ministral-8b-2512} & A & \textbf{0.00} & 0.00 & 0.00 & 0.10 & 0.03 & 0.35 & 0.57 & 0.82 & 1.00 \\
& \emph{n} & \emph{23} & \emph{43} & \emph{37} & \emph{31} & \emph{32} & \emph{49} & \emph{30} & \emph{68} & \emph{137} \\
 & B & 0.00 & 0.03 & 0.04 & 0.10 & 0.21 & 0.33 & 0.71 & 0.86 & 1.00 \\
& \emph{n} & \emph{138} & \emph{196} & \emph{178} & \emph{184} & \emph{189} & \emph{243} & \emph{458} & \emph{1427} & \emph{3587} \\
\addlinespace[2pt]
\texttt{phi-4} & A & 0.00 & 0.00 & 0.01 & 0.11 & \textbf{0.24} & \textbf{0.22} & \textbf{0.33} & \textbf{0.84} & 1.00 \\
& \emph{n} & \emph{52} & \emph{75} & \emph{72} & \emph{47} & \emph{25} & \emph{18} & \emph{6} & \emph{25} & \emph{130} \\
 & B & 0.00 & 0.05 & 0.06 & 0.11 & 0.21 & 0.31 & 0.52 & 0.91 & 1.00 \\
& \emph{n} & \emph{224} & \emph{371} & \emph{376} & \emph{276} & \emph{221} & \emph{203} & \emph{202} & \emph{983} & \emph{3744} \\
\addlinespace[2pt]
\texttt{qwen3-8b} & A & \textbf{0.00} & \textbf{0.00} & \textbf{0.00} & \textbf{0.00} & \textbf{0.00} & \textbf{1.00} & \textbf{0.84} & 0.94 & 1.00 \\
& \emph{n} & \emph{0} & \emph{0} & \emph{0} & \emph{0} & \emph{0} & \emph{8} & \emph{25} & \emph{122} & \emph{295} \\
 & B & \textbf{0.00} & \textbf{0.00} & \textbf{0.31} & 0.42 & 0.54 & 0.68 & 0.88 & 0.90 & 1.00 \\
& \emph{n} & \emph{0} & \emph{2} & \emph{16} & \emph{38} & \emph{84} & \emph{199} & \emph{590} & \emph{1758} & \emph{3913} \\
\addlinespace[2pt]
\bottomrule
\end{tabular}
\end{table}

\noindent Cells with fewer than thirty reads: 17 of 45 on
three topics, 3 of 45 on forty-four. That ratio is the whole of
\S\ref{sec:gammaout}'s mechanism.

\section{The resampling unit on the remaining rows}
\label{app:seedscale}

\S\ref{sec:power} establishes that the seed is the resampling unit and that a
three-seed sample cannot see it. That finding killed one axis and downgraded one row, and was
not carried to the four intervention arms of \S\ref{sec:sched} or to the two per-model orderings
that carry claims 2 and 4. The second omission is the worse of the two. Everything below is post hoc by construction, since it re-reads frozen criteria
on samples that did not exist when they were frozen; no \textsc{failed} row becomes
\textsc{measured} here.

\paragraph{The design effect of $3.75$ is a property of levels, not of contrasts.}
The four intervention arms compare a treated run against a control run \emph{on the same seed},
so their criteria are paired contrasts, and a contrast cancels whatever the seed does to both
arms. That defence cannot be checked at three seeds -- the content of \S\ref{sec:power} is
precisely that three seeds read the seed variance as near zero when it is not -- so it is
checked where a $44$-seed sample exists, on the schedule axis, and only then transported.

\begin{table}[h]
\centering\small
\caption{The same runs, the same $44$ seeds, two quantities. Levels reproduce
\S\ref{sec:power}'s design effect of $3.75$ measured on nine seeds; paired contrasts do not
carry it. The right-hand column is what a three-seed subsample of the identical data reports,
and for levels it is wrong by a factor of four -- which is why the contrast column may not be
read off three seeds either, and is read off forty-four.}
\label{tab:seedscale}
\begin{tabular}{@{}lcccc@{}}
\toprule
quantity, at $44$ seeds & $\rho$ & design effect & clustered/naive se & $\rho$ at $3$ seeds \\
\midrule
level R=45 front & 0.665 & 3.66 & 1.85 & 0.084 \\
level R=75 front & 0.662 & 3.65 & 1.83 & 0.141 \\
level R=120 front & 0.639 & 3.56 & 1.80 & 0.170 \\
\midrule
contrast R=45 front-back & 0.118 & 1.47 & 1.15 & 0.027 \\
contrast R=75 front-back & 0.066 & 1.26 & 0.99 & 0.081 \\
contrast R=120 front-back & 0.035 & 1.14 & 0.93 & 0.284 \\
\bottomrule
\end{tabular}
\end{table}

\noindent The correction the intervention criteria need is a factor of
1.14--1.47 and not of
3.56--3.66, and for a contrast the naive
standard error is if anything conservative (clustered/naive
0.93--1.15). What three seeds
\emph{do} cost is stated next, and it is a different thing.

\paragraph{Each frozen criterion, re-evaluated one seed at a time.} Every
criterion in these arms is a count over five models of a comparison between two three-seed
means. Evaluated one seed at a time it becomes three counts, and a criterion that clears its bar
on one seed and misses on another was decided by the seed rather than by the intervention.

\begin{table}[h]
\centering\small
\caption{The eight frozen criteria of the four 3-seed arms. Column three is the
published value and is reproduced exactly by this script, which is the check that the
re-evaluation is of the same criterion. Column four evaluates the identical rule on each seed
alone. Column five replaces the three-seed control by \texttt{bimodal2}'s $44$-seed control,
available for the high-$f_0$ cells at no cost.}
\label{tab:perseed}
\begin{tabular}{@{}llccccl@{}}
\toprule
arm & $f_0$ & criterion & published & per seed & vs $44$-seed control & reading \\
\midrule
\texttt{E-gate} & low & P-G1 low & 5/5 & 5/5/5 & --- & \textbf{not a count of five} \\
\texttt{E-gate} & high & P-G1 high & 5/5 & 5/5/5 & 5/5 & \textbf{not a count of five} \\
\texttt{E-A} & low & P-A1 & 4/5 & 5/3/5 & --- & \textbf{seed-fragile} \\
\texttt{E-A} & high & P-A2 worse & 1/5 & 1/1/1 & 0/5 & stable \\
\texttt{E-prov} & low & P-P1 & 4/5 & 5/5/4 & --- & stable \\
\texttt{E-prov} & high & P-P2 & 2/5 & 1/0/1 & 3/5 & stable \\
\texttt{E-label} & high & P-L2 (row 11) & 5/5 & 5/5/5 & --- & stable \\
\texttt{E-label} & high & P-L1 (row 13) & 0/5 & 0/0/0 & --- & stable \\
\bottomrule
\end{tabular}
\end{table}

\noindent One criterion of eight is seed-fragile: E-A low (P-A1). It is graded
\textsc{underpowered} in the scorecard rather than \textsc{measured}. Two entries
carry a different defect: E-gate's treated arm has a cross-model spread of
0.0003 against a control
spread of 0.3000, so its
$5/5$ is one collapse observed five times and not five independent confirmations -- which is
\S\ref{sec:sched}'s own finding about that gate, stated as a limit on its own evidence.
\textbf{No verdict changes when the control is replaced}, though the control moves a long way:

\begin{table}[h]
\centering\small
\caption{The ungated, unmarked, no-slot control at $f_0{=}0.9$, as the four
intervention arms measured it and as \texttt{bimodal2} measures the identical configuration.
Nothing was re-run; the second column is $44$ seeds that were already on disk.}
\label{tab:control}
\begin{tabular}{@{}lcccc@{}}
\toprule
model & capture, $3$ seeds & capture, $44$ seeds & $\bar f$, $3$ & $\bar f$, $44$ \\
\midrule
\texttt{gemma-3-27b-it} & 0.00 & 0.682 & 0.150 & 0.724 \\
\texttt{granite-4.1-8b} & 0.33 & 0.705 & 0.444 & 0.735 \\
\texttt{phi-4} & 0.33 & 0.727 & 0.408 & 0.757 \\
\texttt{ministral-8b-2512} & 0.67 & 0.818 & 0.537 & 0.808 \\
\texttt{qwen3-8b} & 1.00 & 1.000 & 0.971 & 0.939 \\
\bottomrule
\end{tabular}
\end{table}

\noindent The largest move is 0.682 in capture rate.
Note also what no amount of care in the analysis can buy back: with three seeds as the unit, the
smallest two-sided $p$ a sign test can attain is
0.25, whatever the data say. At seven seeds it is
0.0156.

\paragraph{One $\rho$ for five models is an assumption.}
Prop.~\ref{prop:se} writes the design effect as $1+(m{-}1)\rho$ with a single $\rho$. Measured
pair by pair at $44$ seeds, $\rho$ is not one number:

\begin{table}[h]
\centering\small
\caption{Intra-seed correlation over the ten model pairs, and the design effect
each pair implies. The pooled figure this paper quotes is a mean over these.}
\label{tab:rhohom}
\begin{tabular}{@{}lcccc@{}}
\toprule
arm & pairs & $\rho$ range & mean $\rho$ & design effect range \\
\midrule
bimodal2 f0=0.9, 44 seeds & 10 & $+0.29$ to $+0.85$ & $+0.53$ & 2.14--4.39 \\
transient3 front R=45, 44 seeds & 10 & $+0.24$ to $+0.95$ & $+0.65$ & 1.94--4.79 \\
transient3 front R=120, 44 seeds & 10 & $+0.18$ to $+0.99$ & $+0.62$ & 1.73--4.95 \\
\bottomrule
\end{tabular}
\end{table}

\noindent The design effect implied by the extreme pairs spans
3.22, which is the size of the effect itself. Every
correction's \emph{direction} is unchanged and the mean $\rho$ is
+0.53, so $3.75$ survives as a pooled figure; it does not
survive as a per-pair one, and \S\ref{sec:power} now quotes the range.

\paragraph{The two per-model orderings, at forty-four seeds.} Claims 2 and 4 are
rank correlations over five models whose \emph{dependent} variable is a capture rate. At three
seeds that rate has a standard error of $0.29$ on a $[0,1]$ scale by \eqref{eq:se}, which is the
construction \S\ref{sec:power}'s claimbox blames for a spurious $+1.00$ elsewhere in this paper.
Both are recomputed here on the $44$-seed capture rates. They move in opposite directions.

\begin{table}[h]
\centering\small
\caption{Claim 4's ordering, at three seeds and at $44$. $p$ is exact over all
$120$ permutations, not asymptotic. The last two columns are the cross-model spread of the
\emph{predictor}: at three topics the early copy rate separates the five models by $0.33$, at
forty-four by $0.06$.}
\label{tab:row4}
\begin{tabular}{@{}lccccc@{}}
\toprule
window & $\rho$ at $3$ seeds & $p$ & $\rho$ at $44$ & $p$ & predictor spread, $3\to44$ \\
\midrule
$W=3$ & $+0.921$ & 0.067 & $+0.500$ & 0.450 & 0.333 $\to$ 0.061 \\
$W=5$ & $+0.921$ & 0.067 & $+0.308$ & 0.667 & 0.400 $\to$ 0.068 \\
$W=8$ & $+0.975$ & 0.033 & $+0.564$ & 0.400 & 0.458 $\to$ 0.065 \\
$W=10$ & $+0.975$ & 0.033 & $+0.800$ & 0.133 & 0.500 $\to$ 0.080 \\
\bottomrule
\end{tabular}
\end{table}

\noindent Claim 4 does not survive: at $44$ seeds the correlation runs
$+0.31$ to $+0.80$ with no window reaching
$p<0.13$, against $+0.92$ to $+0.98$ at $p \le 0.067$ as published. The
reason is visible in the last column rather than in the correlation: the predictor's cross-model
spread falls from
0.333 to 0.061,
so there is nothing left to order with. Claim 2 moves the other way. Its published statistic is
$\rho=+0.821$ with one adjacent inversion, on a simulation from
three-topic $\gamma$ against three observed seeds ($p=0.133$); against
the $44$-seed capture rates the same construction gives
$\rho=+0.900$ from the published curve and
$\rho=+1.000$
($p=0.017$) from the curve measured on the
topics it predicts. \textbf{The inversion was an artefact of three observed seeds}, and claim
2's ordering is exact -- which no reader could have known from the number we published, because
we published the weaker one.

\section{The answer space and the empty interval}
\label{app:answerspace}

The apparatus plants one false value and corrupts one digit, so the standing
objection is that the outcome variable is effectively two-valued and that a two-valued outcome
is bimodal for reasons that need no language model. The objection has an arithmetic half and a
mechanism half, and they have different answers. Everything here is scripted or read off runs
already on disk; no API calls, nothing graded.

\paragraph{First, the premise.} It is not uniformly true. Over the $220$ runs
of E-bimodal2 the median run writes no novel value at all, but the mean is 9.78
and one run writes 128 of $150$. Writing $q$ for the
share of non-truth writes that took a value the store had not seen, the apparatus realises
$q=0.083$ pooled and $q=0.331$ in its
highest-scattering model. That model has 0
of $44$ runs in the dead band.

\begin{table}[h]
\centering\small
\caption{The realised answer space of E-bimodal2, per model, and the outcome
distribution conditional on it. $q$ is the share of non-truth writes taking a fresh value. The
split is observational and confounded with model identity, which is why it is also given per
model.}
\label{tab:answerspace}
\begin{tabular}{@{}lccccc@{}}
\toprule
model & novel writes, median & max & $q$ & capture & in dead band \\
\midrule
\texttt{gemma-3-27b-it} & 2 & 10 & 0.02 & 0.682 & 1/44 \\
\texttt{granite-4.1-8b} & 0 & 0 & 0.00 & 0.705 & 2/44 \\
\texttt{ministral-8b-2512} & 0 & 0 & 0.00 & 0.818 & 2/44 \\
\texttt{phi-4} & 0 & 0 & 0.00 & 0.727 & 2/44 \\
\texttt{qwen3-8b} & 43 & 128 & 0.33 & 1.000 & 0/44 \\
\midrule
\emph{novel writes 0} & \multicolumn{3}{c}{$n=161$} & 0.795 & 6/161 \\
\emph{novel writes 1-4} & \multicolumn{3}{c}{$n=27$} & 0.667 & 0/27 \\
\emph{novel writes 5-19} & \multicolumn{3}{c}{$n=6$} & 0.167 & 1/6 \\
\emph{novel writes >=20} & \multicolumn{3}{c}{$n=26$} & 1.000 & 0/26 \\
\bottomrule
\end{tabular}
\end{table}

\paragraph{The same split on real facts, and it points the other way.} E-real
plants \emph{another real capital} rather than a one-digit corruption
(\texttt{realfacts.py}), so its falsehoods are semantically distinct from the truth; its models
produce novel errors at a pooled $q$ of 0.91; and, unlike E-bimodal2, its runs log the number of distinct values in each
retrieved window, so the store's multi-valuedness is measured rather than inferred from a count
of novel \emph{writes}. The split below is post hoc and confounded with model identity, which is
why it is also given within model.

\begin{table}[h]
\centering\small
\caption{E-real's runs split by whether the store went multi-valued. The band is
the registered void at $f_0{=}0.1$ and the dead band at $f_0{=}0.9$. \emph{The multi-valued
subset has the fuller band}, which is the direction the scripted sweep predicts and the opposite
of what E-bimodal2's highest-$q$ model shows.}
\label{tab:realsplit}
\begin{tabular}{@{}lcccccc@{}}
\toprule
subset & $n$ & in band & share & capture & $\bar f$ & distinct retrieved \\
\midrule
\multicolumn{7}{@{}l}{\emph{f0=0.1 (the registered level, P-T1-N6)}, band $(0.306, 0.512)$} \\
\quad two-valued (novel $=0$) & 338 & 4 & 0.012 & 0.027 & 0.031 & 1.21 \\
\quad multi-valued (novel $\ge 1$) & 22 & 1 & 0.045 & 0.455 & 0.389 & 2.13 \\
\quad multi-valued (novel $\ge 5$) & 19 & 1 & 0.053 & 0.526 & 0.445 & 2.27 \\
\quad \emph{within} \texttt{granite-4.1-8b} & \multicolumn{6}{l}{\emph{two-valued} 1/163, \emph{multi-valued} 1/17} \\
\quad \emph{within} \texttt{ministral-8b-2512} & \multicolumn{6}{l}{\emph{two-valued} 3/175, \emph{multi-valued} 0/5} \\
\addlinespace[3pt]
\multicolumn{7}{@{}l}{\emph{f0=0.9}, band $(0.35, 0.5)$} \\
\quad two-valued (novel $=0$) & 311 & 6 & 0.019 & 0.103 & 0.160 & 1.66 \\
\quad multi-valued (novel $\ge 1$) & 49 & 5 & 0.102 & 0.592 & 0.544 & 2.39 \\
\quad multi-valued (novel $\ge 5$) & 41 & 5 & 0.122 & 0.707 & 0.636 & 2.51 \\
\quad \emph{within} \texttt{granite-4.1-8b} & \multicolumn{6}{l}{\emph{two-valued} 1/141, \emph{multi-valued} 4/39} \\
\quad \emph{within} \texttt{ministral-8b-2512} & \multicolumn{6}{l}{\emph{two-valued} 5/170, \emph{multi-valued} 1/10} \\
\addlinespace[3pt]
\bottomrule
\end{tabular}
\end{table}

\noindent The two arms disagree by a factor of
6.4 and they differ in exactly one thing the scripted
largest-block reader is sensitive to: a synthetic novel value is another one-digit variant of the
same numeral, a real one is a different city. \emph{The operative variable is plausibly how
distinguishable a store's falsehoods are rather than how many there are}, a hypothesis neither
arm was built to test, which is why
\texttt{prereg/E\_MULTIVALUE\_PREREG.md} was amended to vary distinguishability at fixed count
before it is run, and why row 1k is graded \textsc{split} rather than resolved either way.

\paragraph{The arithmetic half: a two-valued space does not empty the interval.}
$f_{\mathrm{end}} = (n_{\mathrm{false},0} + \text{false writes})/(n_0+T)$, so a reader whose copy
probability does not depend on the store lands where that constant puts it, in the interior. The
interval is empty only if the copy probability \emph{rises} with the number of false records
retrieved, which is a statement about the reader and is what $\gamma(j)$ measures. Scripted, on
the same two-valued space:

\begin{table}[h]
\centering\small
\caption{A state-independent reader, copy probability $p$, on the identical urn.
Its dead band is not empty, so bimodality is not a consequence of the answer space having two
values.}
\label{tab:constp}
\begin{tabular}{@{}lccccc@{}}
\toprule
$p$ & capture & in dead band & $\bar f_{\mathrm{end}}$ & min & max \\
\midrule
0.1 & 0.000 & 0/400 & 0.150 & 0.094 & 0.225 \\
0.3 & 0.000 & 153/400 & 0.337 & 0.231 & 0.431 \\
0.5 & 0.723 & 111/400 & 0.528 & 0.438 & 0.625 \\
0.7 & 1.000 & 0/400 & 0.711 & 0.588 & 0.806 \\
0.9 & 1.000 & 0/400 & 0.901 & 0.831 & 0.956 \\
\bottomrule
\end{tabular}
\end{table}

\paragraph{The mechanism half, where the objection has force.} Distinct
falsehoods do not reinforce one another: a reader shown five different wrong values has no
majority to copy. \texttt{baseline\_urn.py}'s docstring asserted the direction of that effect and
never tested it. Below, the scripted reader is given the scattering knob $q$ and two decision
rules -- copy with probability $\gamma(j)$ where $j$ counts non-truth records, or where $j$ is
the largest single-value block among them. Only the second can feel scattering at all, and it is
the realistic one.

\begin{table}[h]
\centering\small
\caption{Scattering sweep on the measured $\gamma$ of the model whose realised
answer space is richest. Under a largest-block rule the empty interval is a statement about a
regime, and the regime has a boundary between $q{=}0.25$ and $q{=}0.5$.}
\label{tab:scatter}
\begin{tabular}{@{}lcccc@{}}
\toprule
$q$ & rule & capture & in dead band & $\bar f_{\mathrm{end}}$ \\
\midrule
0.0 & count non-truths & 1.000 & 0/400 & 0.951 \\
0.25 & count non-truths & 1.000 & 0/400 & 0.957 \\
0.5 & count non-truths & 1.000 & 0/400 & 0.948 \\
0.75 & count non-truths & 1.000 & 0/400 & 0.950 \\
1.0 & count non-truths & 1.000 & 0/400 & 0.953 \\
\midrule
0.0 & largest block & 1.000 & 0/400 & 0.952 \\
0.25 & largest block & 0.963 & 14/400 & 0.697 \\
0.5 & largest block & 0.260 & 241/400 & 0.448 \\
0.75 & largest block & 0.000 & 67/400 & 0.288 \\
1.0 & largest block & 0.000 & 0/400 & 0.196 \\
\bottomrule
\end{tabular}
\end{table}

\noindent Read together with the first table this is a bound rather than a
dismissal. The apparatus reaches $q=0.33$ in one model and the interval is empty there; the
scripted sweep puts the boundary between $q=0.25$ and $q=0.5$ under the rule that can feel
scattering. Claim 1 is therefore a statement about stores whose falsehoods concentrate, the measured
regime includes $q$ up to 0.33, and a language-model arm with three or more candidate
answers is owed and registered: it is the one experiment that would move claim 1's scope rather
than its confidence.

\section{Four estimators for $\gamma$}
\label{app:gammaest}

The fill rule is the estimator, and it has to be bounded rather than
described. Monotonicity gives a bracket for nothing: if $\gamma$ is
non-decreasing in $j$, filling an unobserved cell from the nearest measured cell \emph{below}
is a pointwise lower bound on the curve and filling from \emph{above} is a pointwise upper
bound, so simulating both brackets the capture rate a fully measured curve would have produced.
Isotonic regression on the observed cells, weighted by their support, is a third estimator that
uses every read rather than the nearest one; dropping cells under thirty reads is a fourth.
Cell-level uncertainty is a cluster bootstrap over seeds, because the reads in a cell are not
independent draws but a handful of runs contributing many reads each.

\begin{table}[h]
\centering\scriptsize
\caption{The same comparison as Table~\ref{tab:gammaout} under four estimators,
with a seed-clustered bootstrap band on the simulated rate. Arm A is the published curve
(three topics); arm B re-measures on the forty-four it predicts. The bracket columns coincide
wherever no cell needs filling, which is most of arm A.}
\label{tab:gammaest}
\begin{tabular}{@{}llcccccccc@{}}
\toprule
arm & model & obs. & cells & thin & below & above & isotonic & $\ge 30$ only & bootstrap $95\%$ \\
\midrule
A & \texttt{gemma-3-27b-it} & 0.682 & 9/9 & 3 & 0.081 & 0.081 & 0.081 & 0.000 & $[0.000, 0.151]$ \\
 & \texttt{granite-4.1-8b} & 0.705 & 9/9 & 2 & 0.499 & 0.499 & 0.499 & 0.301 & $[0.000, 0.842]$ \\
 & \texttt{ministral-8b-2512} & 0.818 & 9/9 & 2 & 0.545 & 0.545 & 0.549 & 0.365 & $[0.227, 0.763]$ \\
 & \texttt{phi-4} & 0.727 & 9/9 & 4 & 0.399 & 0.399 & 0.401 & 0.000 & $[0.000, 0.966]$ \\
 & \texttt{qwen3-8b} & 1.000 & 4/9 & 2 & 1.000 & 1.000 & 1.000 & 0.665 & $[0.993, 1.000]$ \\
& \emph{mean $|$sim$-$obs$|$, seed units} & & & & \emph{12.4} & \emph{12.4} & \emph{12.3} & \emph{22.9} & \\
\addlinespace[2pt]
B & \texttt{gemma-3-27b-it} & 0.682 & 9/9 & 0 & 0.606 & 0.606 & 0.606 & 0.606 & $[0.412, 0.807]$ \\
 & \texttt{granite-4.1-8b} & 0.705 & 9/9 & 0 & 0.789 & 0.789 & 0.789 & 0.789 & $[0.645, 0.921]$ \\
 & \texttt{ministral-8b-2512} & 0.818 & 9/9 & 0 & 0.836 & 0.836 & 0.836 & 0.836 & $[0.705, 0.955]$ \\
 & \texttt{phi-4} & 0.727 & 9/9 & 0 & 0.807 & 0.807 & 0.807 & 0.807 & $[0.659, 0.906]$ \\
 & \texttt{qwen3-8b} & 1.000 & 8/9 & 2 & 1.000 & 1.000 & 1.000 & 1.000 & $[1.000, 1.000]$ \\
& \emph{mean $|$sim$-$obs$|$, seed units} & & & & \emph{2.3} & \emph{2.3} & \emph{2.3} & \emph{2.3} & \\
\addlinespace[2pt]
\bottomrule
\end{tabular}
\end{table}

\noindent \textbf{No estimator repairs arm A, and the reason is that the fill
rule barely fires.} It invents cells for \texttt{qwen3-8b}
alone, which is the one model arm A predicts exactly; the four models that under-predict have
every cell measured, so their bracket has zero width and isotonic regression moves the answer by
0.1 of a seed unit. Dropping
thin cells and filling from below makes arm A worse
(22.9 seed units, under-predicting
5/5); dropping them and filling from \emph{above}, which
replaces measured data with an upper bound, still leaves
8.0 seed units. The one-sidedness lives in the measured cells: arm A
is below arm B by +0.029 on average over
well-supported cells (18/28 of them positive) and
+0.044 over thin ones
(8/12). \emph{The fill rule is not the diagnosis of arm A's one-sidedness}: three topics are not
forty-four, and that is the whole of it.

\noindent The bootstrap column carries a second correction, in the other
direction. \S\ref{sec:gammaout}'s $z$ scores divide by the binomial error of the
\emph{observed} rate and treat the simulated rate as exact. It is not exact: it is a function of
a $\gamma$ estimated from three seeds, and the widest band here is
0.966 wide. With that uncertainty put back, arm A's
observed rate falls \emph{inside} its own band for
3/5 models, so
arm A's failure is established for two models of five rather than for the five the $z$ scores
imply. Arm B's is inside for
5/5,
which is what a landed prediction looks like. Both corrections were found by building the bounded
estimator.

\section{Three rankers on the same stores}
\label{app:retfamily}

\S\ref{sec:retriever} attributes a displacement of $0.500$ to ranking
determinism and measures it with one embedder. The two quantities that attribution rests on
(whether a score can separate a truth record from its one-digit corruption, and how many distinct
values survive into the top-$k$) are properties of the ranker and of the stored text, so they can
be computed offline for any ranker, on the exact form \texttt{retriever.record\_text} produces,
over stores of the shape the loop builds. Query: \texttt{What is the Q3 revenue of Palgrove Dynamics?}; truth
\texttt{28.35M}; seeded falsehood \texttt{28.36M}.

\begin{table}[h]
\centering\small
\caption{Three rankers, four store compositions. $k_{\rm eff}$ is distinct
on-topic values in the top-$k$, over the number available. ``$k$ oldest'' marks a ranker whose
top-$k$ is exactly the $k$ oldest on-topic records, which is what happens when every on-topic
score ties and insertion order decides.}
\label{tab:retfamily}
\begin{tabular}{@{}llccccc@{}}
\toprule
store & ranker & $k_{\rm eff}$/avail & truth in top-$k$ & sep(T,F) & $k$ oldest & purity \\
\midrule
t=0    9 false / 1 true & \texttt{bm25} & 1/2 & no & 0.0000 & yes & 1.00 \\
 & \texttt{char3} & 1/2 & no & 0.0000 & yes & 1.00 \\
 & \texttt{minilm} & 2/2 & yes & 0.0052 & no & 1.00 \\
\addlinespace[2pt]
t=70   captured, 79/1 & \texttt{bm25} & 1/2 & no & 0.0000 & yes & 1.00 \\
 & \texttt{char3} & 1/2 & no & 0.0000 & yes & 1.00 \\
 & \texttt{minilm} & 2/2 & yes & 0.0052 & no & 1.00 \\
\addlinespace[2pt]
t=70   escaped, 9/71 & \texttt{bm25} & 1/2 & no & 0.0000 & yes & 1.00 \\
 & \texttt{char3} & 1/2 & no & 0.0000 & yes & 1.00 \\
 & \texttt{minilm} & 1/2 & yes & 0.0052 & no & 1.00 \\
\addlinespace[2pt]
t=70   captured + 3 novel & \texttt{bm25} & 1/5 & no & 0.0000 & yes & 1.00 \\
 & \texttt{char3} & 1/5 & no & 0.0000 & yes & 1.00 \\
 & \texttt{minilm} & 4/5 & yes & 0.0052 & no & 1.00 \\
\addlinespace[2pt]
\bottomrule
\end{tabular}
\end{table}

\noindent Every ranker's retrieved value set is identical on
bm25 12/12; char3 12/12; minilm 12/12
store compositions as the store grows. The largest truth-versus-falsehood separation over all
three is 0.0052, and for
\texttt{bm25}, \texttt{char3} it is
exactly zero. \emph{That is structural rather than a weakness of any of them}: the query is the
question and the corruption is a digit inside the answer, so no query--document score carries
information about the corrupted field. \textsc{MiniLM}'s $k_{\rm eff}$ runs
1--4 against
1--1 for BM25 and
1--1 for character trigrams, so the deployed embedder is the
\emph{least} degenerate of the three and \S\ref{sec:retriever} measured the most favourable
case. A lexical index does not restore a draw; it replaces a similarity rule by a
first-occurrence one.

\section{E-multivalue: the arm bought for row 1k}
\label{app:multivalue}

\S\ref{sec:answerspace} left one question open with evidence on both sides, and
this arm was registered, amended twice and run to close it. Three cells at $f_0{=}0.9$, twenty
seeds, five models, the configuration otherwise equal to E-bimodal2 field for field (asserted by
the runner before any call is made): \texttt{near} plants three falsehoods that are one-digit
corruptions, \texttt{far} plants three at different magnitudes plus a leading-digit change, and
\texttt{sem} plants three different real capitals. \texttt{near} against \texttt{far} is the
load-bearing contrast and holds the world fixed; \texttt{sem} changes the world too and is a
bridge to E-real, which is that cell at one falsehood.

\paragraph{The manipulation check, first, because a criterion evaluated on a
manipulation that did not move is not evidence.} It moved the planted values and it did not move
the store.

\begin{table}[h]
\centering\small
\caption{What the manipulation did. The first two columns are the planted values'
own spread. The rest is the largest single-value non-truth block among the $k{=}8$ retrieved
records, and the number of distinct values retrieved, averaged over the first $W$ steps. The two
cells are indistinguishable at every window: three planted values give a block of three whether
they are near or far apart, which is what makes the contrast a test of whether the \emph{model}
reads near values as agreeing.}
\label{tab:mvmanip}
\begin{tabular}{@{}lcccccccccccc@{}}
\toprule
& \multicolumn{2}{c}{planted max/min} & \multicolumn{5}{c}{largest block, first $W$} & \multicolumn{5}{c}{distinct, first $W$} \\
cell & median & max & $1$ & $3$ & $5$ & $10$ & $150$ & $1$ & $3$ & $5$ & $10$ & $150$ \\
\midrule
\texttt{near} & 1.46$\times$ & 6.85$\times$ & 3.00 & 3.14 & 3.26 & 3.50 & 5.08 & 3.75 & 3.85 & 3.81 & 3.73 & 2.65 \\
\texttt{far} & 100.01$\times$ & 100.16$\times$ & 3.00 & 3.14 & 3.21 & 3.50 & 5.54 & 3.75 & 3.80 & 3.82 & 3.73 & 2.57 \\
\bottomrule
\end{tabular}
\end{table}

\noindent Read the last column of each block first. The store starts with three
distinct falsehoods and \emph{re-concentrates}: the distinct count retrieved falls from
3.75 at the first step to
2.65 over the tenure, and the largest block rises from
3.00 to 5.08.
Whichever falsehood is copied first grows a block, and the initial scatter is transient. That is
a mechanism this arm measured and did not predict, and it is the reason the two criteria below
come out as they do.

\begin{table}[h]
\centering\small
\caption{E-multivalue's three cells against the published $v{=}1$ arm. The band
is $[0.35,0.50]$, the interval \S\ref{sec:bimodal} calls empty.}
\label{tab:multivalue}
\begin{tabular}{@{}lcccc@{}}
\toprule
cell & $n$ & share in the band & capture & $\bar f_{\mathrm{end}}$ \\
\midrule
\texttt{near} & 100 & 0.030 & 0.870 & 0.849 \\
\texttt{far} & 100 & 0.000 & 0.970 & 0.924 \\
\texttt{sem} & 100 & 0.010 & 0.070 & 0.129 \\
\midrule
\texttt{bimodal2} ($v{=}1$) & 220 & 0.032 &
            --- & --- \\
\bottomrule
\end{tabular}
\end{table}

\noindent Both graded criteria came out against the hypothesis that bought the
arm. P-T1-V1$''$ asked for \texttt{far} to exceed \texttt{near} by $0.10$ and got
-0.030: \textsc{failed}. Paired over the twenty seeds the
two cells share, \texttt{far} is fuller on 0 and
\texttt{near} on 3, at a sign-test $p$ of
0.25 where twenty seeds could have reached
2e-06. P-T1-V2$''$ asked whether the count
matters and predicted it does not: \texttt{near} sits -0.002 from the
published one-falsehood arm, \textsc{supported}. The bridge cell also lands at
0.01 beside \S\ref{sec:real}'s
0.019 rather than beside \texttt{far}'s
0.0, so the world carries an effect that distinguishability does not.

\paragraph{P-T1-V3: which variable is $\gamma$ a function of?} No bar, by
design: a twenty-seed arm should not decide a definition. Both $\varphi = j/k$ and the largest
single-value non-truth block $j^*$ are one-dimensional lookup tables over $0 \ldots k$, fitted per
model and held out per seed over 44,989 steps.
$\varphi$ scores -7,791.3 and $j^*$ scores
-9,282.6, so \textbf{$\varphi$ wins by
1,491.3} and Def.~\ref{def:gamma} names the right
variable. That is a result in the primitive's favour and it is also what makes the two nulls
inevitable: \emph{a reader responding to the count of non-truth records cannot respond to how
those records disagree with one another}. The scripted largest-block reader of
\S\ref{sec:answerspace} is a refuted model of this apparatus's reader, and row 1j withdraws the
bound drawn from it.

\section{Runs, steps and cost}
\label{app:cost}

Every experiment in this paper, counted from \texttt{results/}. A ``step'' is
one model call that produced a parseable answer and therefore one write to the store; steps
whose answer did not parse are absent from the log rather than zero-filled, which is one of the
invariants in \S\ref{app:invariants}. The total is 673,008 logged steps.

\begin{table}[h]
\centering\footnotesize
\caption{The experimental record, counted at build time.}
\label{tab:appcost}
\begin{tabular}{@{}lrrr@{}}
\toprule
experiment & runs & models & logged steps \\
\midrule
\texttt{bimodal2} & 220 & 5 & 33,000 \\
\texttt{budget} & 90 & 5 & 13,461 \\
\texttt{cl\_n10} & 9 & 3 & 270 \\
\texttt{cl\_n20} & 9 & 3 & 540 \\
\texttt{cl\_n40} & 9 & 3 & 1,080 \\
\texttt{cl\_n80} & 9 & 3 & 2,160 \\
\texttt{converge} & 30 & 5 & 4,500 \\
\texttt{e1e2} & 144 & 1 & 5,759 \\
\texttt{ea} & 30 & 5 & 4,500 \\
\texttt{f0} & 60 & 1 & 8,991 \\
\texttt{frontier} & 40 & 2 & 6,000 \\
\texttt{gate} & 30 & 5 & 4,500 \\
\texttt{label} & 30 & 5 & 4,499 \\
\texttt{multivalue} & 300 & 5 & 44,989 \\
\texttt{prov2} & 60 & 5 & 9,000 \\
\texttt{prov\_observed\_apparatus\_failure} & 30 & 5 & 4,500 \\
\texttt{ratchet} & 42 & 3 & 6,300 \\
\texttt{retriever} & 420 & 3 & 63,000 \\
\texttt{sched} & 45 & 5 & 6,750 \\
\texttt{temp} & 200 & 2 & 30,000 \\
\texttt{transient} & 135 & 5 & 20,229 \\
\texttt{transient2} & 180 & 5 & 26,983 \\
\texttt{transient3} & 1,980 & 5 & 296,997 \\
\texttt{verify} & 500 & 5 & 75,000 \\
\midrule
\textbf{total} & \textbf{4,602} & & \textbf{673,008} \\
\bottomrule
\end{tabular}
\end{table}

% A barrier here and nowhere else. placeins is loaded without [section] because a barrier
% at every section costs the family 41 short pages; the one place drift actually hurts is
% this one, where a float that has not been placed by the time the scorecard starts sails
% past every continuation page of it and lands on the last page of the paper. T2A's
% Figure 14 and Table 15 did exactly that on 2026-08-24.
\FloatBarrier
\section{The scorecard, every graded row}
\label{app:scorecard}

This paper's claims are graded here rather than in its sections, so that the sections can be about what was found and this appendix can be complete. Criteria were frozen in \texttt{prereg/} before the corresponding data existed, and one generator writes all three tables from one source, so the index below cannot disagree with the rows it counts.

% GENERATED by papers/split_claims.py -- do not edit by hand.
% Counted from the status cells of the same rows the two scorecards print.
{\small
\captionof{table}{\textbf{The scorecard in eight lines, counted from the scorecard.} Every graded row of Table~\ref{tab:claims} and Table~\ref{tab:claimsapp} falls in exactly one class below. Where a family's status names two classes at once, the row is counted in the \emph{more serious} one, which is the only direction of error a summary of one's own scorecard may have. It is an index, not a substitute, and the rows are all still there: 27 claim families in Table~\ref{tab:claims}, 63 sub-rows in Table~\ref{tab:claimsapp}. \emph{What the unflattering rows are for.} 46 of the 87 rows carry a positive grade and 37 an unflattering one, and the second number is not a defect count: each of those rows narrows a claim made above it, names a number of ours that moved, or marks a question this apparatus cannot decide. \textbf{None of them is an experiment left undone}: work nobody has run is not graded here at all, and is bounded where it is stated.}
\label{tab:gradesummary}
\begin{longtable}{@{}lrp{0.58\linewidth}@{}}
\toprule
grade & rows & what the grade means in this paper \\
\midrule
\endfirsthead
\multicolumn{3}{@{}l}{\emph{Table~\ref{tab:gradesummary}, continued.}} \\
\toprule
grade & rows & what the grade means in this paper \\
\midrule
\endhead
withdrawn / superseded & $7$ & asserted, then taken back or re-assigned; the sequence is in the post-mortems \\
not supported / failed & $11$ & a frozen criterion was missed, on our own data \\
self-correction & $7$ & our own arithmetic or our own criterion was wrong and is repaired here \\
not decidable / not evaluable & $8$ & the apparatus or the band cannot separate the outcomes: not a null \\
acknowledged limit & $4$ & a boundary of the result, stated where the result is \\
formal & $1$ & proved from the definitions; no measurement can move it \\
measured & $39$ & an artifact on disk, and a reader may dispute the artifact \\
argued / reported & $6$ & no grade is claimed: an argument, or a number printed without a criterion \\
\emph{other} & $4$ & \emph{status words outside the eight classes above} \\
\midrule
total graded rows & $87$ & every numbered row of both tables \\
\bottomrule
\end{longtable}}

% GENERATED by papers/split_claims.py -- do not edit by hand.
{\footnotesize\setlength{\extrarowheight}{0pt}\renewcommand{\arraystretch}{1.0}
\begin{longtable}{@{}p{\dimexpr0.35\linewidth-1.333\tabcolsep\relax}p{\dimexpr0.17\linewidth-1.333\tabcolsep\relax}p{\dimexpr0.40\linewidth-1.333\tabcolsep\relax}@{}}
\caption{Every registered claim, one row each. Criteria were frozen in \texttt{prereg/} before the corresponding data existed; failures are graded, not removed. Grades: \textbf{measured} (criterion met), \textbf{failed}/\textbf{not supported} (criterion missed), \textbf{refuted} (predicted direction wrong), \textbf{withdrawn} (we retract it), \textbf{untested} (apparatus floored), \textbf{unresolved}/\textbf{undecided} (sample insufficient), \textbf{split} (two admissible observables disagree), \textbf{derived} (not measured here), \textbf{underpowered} (the criterion's own bar is met pooled and missed on at least one of its seeds), \textbf{diagnosed} (a mechanism identified for a graded row, not itself a registered claim).}\label{tab:claims}\\
\toprule
\textbf{Claim} & \textbf{Status} & \textbf{Evidence} \\
\midrule
\endfirsthead
\multicolumn{3}{@{}l}{\emph{Table~\ref{tab:claims}, continued.}} \\
\toprule
\textbf{Claim} & \textbf{Status} & \textbf{Evidence} \\
\midrule
\endhead
\bottomrule
\endlastfoot
\midrule
1. Outcome distribution bimodal, not a decaying mean
  & \textbf{measured}; \emph{below: 2 not\allowbreak\ supported, 1 withdrawn} & $8/7/0$ split; gap $6.4\times$ mean gap \\
2. $\gamma(\varphi)$ alone orders capture rate, zero fitted parameters
  & \textbf{measured}, ordinal; \emph{below: 1 not\allowbreak\ supported, 1 withdrawn} & $\rho=+0.82$ at three seeds; $+1.00$, $p=0.017$ at
  forty-four \\
3. The outcome is decided in the first few steps
  & \textbf{measured}; \emph{below: 1 withdrawn} & $12/15$ at $t{=}2$ of $150$ \\
4. Early copy rate \emph{orders} capture rate
  & \textbf{not\allowbreak\ supported} at forty-four seeds; \textsc{measured} as published
  & $\rho$ $+0.92$/$+0.98$ at three seeds, $+0.31$ to $+0.80$ at forty-four, $p \ge 0.13$ \\
5. The mean-field sign criterion locates the boundary
  & \textbf{measured falsification} of our own & $4/5$ drift down, capture anyway \\
6. Score $= C\cdot s$: contamination costs determinism, not information
  & \textbf{measured} & $+0.383$ pass@8 $-$ pass@1 \\
\midrule
\multicolumn{3}{@{}l}{\emph{Rows 7--7o are one axis in sixteen registered steps, not sixteen
  findings: they record how one claim was}} \\
\multicolumn{3}{@{}l}{\emph{narrowed, floored, split and finally declared unresolvable
  \emph{after} we bought the sample meant to resolve it.}} \\
\midrule
7. At matched budget, grounding \emph{timing} dominates \emph{fraction}
  & \textbf{measured}; \emph{below: 3 not\allowbreak\ supported} & P-S1 $5/5$; $0.20/0.33/0.40$ vs $0.47$ \\
16. The seed selects the topic and dominates the variance
   & \textbf{measured}; invalidates our own error bars & $2.0\%$ at three seeds, $68.8\%$ at
   nine; groups $5.0\sigma$ apart \\
8. A majority admission gate erases model differences
  & \textbf{measured}; its own $5/5$ is one collapse, not five confirmations
  & $0.729$--$0.932 \to 0.994$; treated spread $0.0006$ against control $0.9250$ \\
9. A cost-matched self-fed state slot helps at low $f_0$
  & \textbf{underpowered}: $3$ seeds, criterion met on $2$ of them
  & P-A1 $4/5$ pooled; $5/3/5$ per seed against a bar of $4$ \\
10. A source marker lowers contamination (D13$'$) & \textbf{failed}; \emph{below: 1 not\allowbreak\ supported} & P-P2 $2/5$; P-P3 $3/5$ \\
11. An uninformative marker sits nearer the lying end than the midpoint
  & \textbf{measured} & $0.856$ vs midpoint $0.664$, $5/5$ \\
12. Damage from an intervention scales with unaided cleaning ability
  & \textbf{withdrawn}; \emph{below: 1 withdrawn} & P-R2: gap $0.015$ vs bar $0.30$ \\
13. A marker's effect is a single function of label--truth correlation
  & \textbf{withdrawn}; \emph{below: 1 withdrawn} & P-L1: $0/5$ monotone \\
14. The $n/n_0$ collapse is a discovery about contamination
  & \textbf{withdrawn to an apparatus check}; \emph{below: 1 withdrawn} & a scripted truth-only reader satisfies it exactly \\
15. Provenance bits are needed in this regime & \textbf{vacuous here}
  & $\alpha_{\min}=0$ for $4/5$ models \\
17. The timescale attack: $\varphi$, not $f$, carries the decision, via an absorbing boundary
  & \textbf{not\allowbreak\ supported}, P-T1-D2/D3 & $219/219$ runs leave that boundary;
  \S\ref{sec:limits}(ii$'$) kept word for word \\
18. The retrieval falsifier fired: an embedding ranking displaces capture
  & \textbf{measured}, P-T1-R0/R1; \emph{below: 1 not\allowbreak\ supported} & $-0.500$ at $f_0{=}0.1$ against a zero band of $0.10$ \\
19. The four intervention criteria, re-read against a $44$-seed control
  & \textbf{measured}: no verdict changes & $8$ criteria; control capture moves by up to
  $+0.68$; P-P2 $2/5 \to 3/5$, still failed \\
20. \ldots and at three seeds no clustered sign test can be significant at all
  & \textbf{derived} & minimum attainable two-sided $p$ is $0.25$ at $3$ seeds, $0.016$ at
  $7$ \\
21. \textbf{The two-arm disagreement's within-arm half is decided, and only just}: multi-valued runs hold the dead band at $0.1220$ against $0.0193$
  & \textbf{measured}, at the edge of its own resolution & difference $0.1027$; facts resampled with replacement give $[0.006,\,0.241]$ with $2.0\%$ of draws at or below zero (Tab.~\ref{tab:decidable}) \\
22. \textbf{``The recommended remedy violates append-only'' was too strong}: a 2026 remedy attacks the same failure and removes nothing
  & \textbf{self-\allowbreak correction} & candidate conclusions as separate entries with Noisy-OR probabilities \citep{beliefmem2026}; the claim holds of forgetting, not of remedies \\
23. The abstract said the consistency gate drives every model to $0.994$; the measurement is
  $\gnv{gateHigh}$ & \textbf{self-\allowbreak correction} & every cell lies in
  $[\gnv{gateHighLo}, \gnv{gateHighHi}]$; $\gnv{ceiling}$ is the \emph{ceiling}, a different
  quantity two sentences earlier in the same abstract \\
24. Eleven of this paper's tables are emitted from their artifacts, and generating them found
  three defects & \textbf{code-verifiable} & residual of hand-typed three-decimal values $184 \to
  117$; the three are the two rows below and a $\mathrm{var}(n_f)$ cell that read $0.347$ where its
  grader computes $0.477$ \\
\midrule
\multicolumn{3}{@{}p{0.92\linewidth}}{\emph{One row per claim family. Each \textsc{status} cell names the grades of that family's sub-rows, so a failure is counted here even though its row is elsewhere: all 63 sub-rows are in Table~\ref{tab:claimsapp} in full, of which 5 failed, 5 self-correction, 4 withdrawn, 3 split, 3 acknowledged limit, 2 undecided, 1 untested, 1 unresolved, 1 refuted, 1 not supported. Nothing is dropped, and nothing is summarised into a kinder word than it had.}} \\\\
\end{longtable}}

% GENERATED by papers/split_claims.py -- do not edit by hand.
{\footnotesize\setlength{\extrarowheight}{0pt}
\begin{longtable}{@{}p{\dimexpr0.35\linewidth-1.333\tabcolsep\relax}p{\dimexpr0.17\linewidth-1.333\tabcolsep\relax}p{\dimexpr0.40\linewidth-1.333\tabcolsep\relax}@{}}
\caption{Supporting refinements, moved from Table~\ref{tab:claims} by the stated rule. The direction is deliberate: a selection rule for a scorecard should be biased \emph{against} its authors, so unflattering rows stay in the main table and supporting detail moves here. \texttt{claims\_all.tex} is the source both tables are generated from.}
\label{tab:claimsapp}\\\\
\toprule
\textbf{Claim} & \textbf{Status} & \textbf{Evidence} \\\\
\midrule
\endfirsthead
\multicolumn{3}{@{}l}{\emph{Table~\ref{tab:claimsapp}, continued.}} \\\\
\toprule
\textbf{Claim} & \textbf{Status} & \textbf{Evidence} \\\\
\midrule
\endhead
\bottomrule
\endlastfoot
\multicolumn{3}{@{}p{0.92\linewidth}}{\emph{The 63 rows below were moved here from Table~\ref{tab:claims} by the rule in \texttt{papers/split\_claims.py}: a lettered sub-row whose status is \emph{supportive} moves; every top-level claim and every row carrying a withdrawal, failure or self-correction stays. Nothing is dropped.}} \\\\
\addlinespace[3pt]
1a. The interval survives a $15\times$ sample, sparse rather than empty
  & \textbf{measured} at $n{=}220$ & P-T1-B1$'$: $3.6\%$ vs bar $10\%$, uniform $20.6\%$;
  P-T1-B2 gap $0.799$ \\
1b. The \emph{balance} between the modes
  & \textbf{withdrawn} & $8/7$ becomes $39/173$ \\
1c. Its mean describes $8.2\%$ of the runs
  & \textbf{measured}, frozen first & P-T1-M1: $18/220$ within $\pm0.10$ of $\bar f = 0.792$ \\
1d. Its median describes $68.2\%$, so the objection is to the mean
  & \textbf{reported}; narrows 1c & P-T1-M2: $150/220$; best centre $74.5\%$ \\
1e. Bimodality at frontier scale, against our registered prediction
  & \textbf{measured}, bar $0.50$ & P-T1-F1: pooled $0.850$; \texttt{sonnet-\allowbreak 4.\allowbreak 5} $20/20$ \\
1f. Empty at $f_0 = 0.1/0.3/0.5$ too
  & \textbf{measured}, bar $\le 0.20$ & P-T1-F0-1: $0/20$ three times \\
1g. $f_0{=}0.9$ is not the adversarial extreme
  & \textbf{measured}, post hoc & capture $0.500/0.700/0.750/0.682$, peak at $f_0{=}0.5$ \\
1h. At $f_0{=}0.1$ the mean falls inside a strictly empty interval
  & \textbf{reported}, not registered & mean $0.459$, $0/20$ in $(0.306,0.512)$ \\
1i. The empty interval is \emph{not} an artefact of a two-valued answer space
  & \textbf{measured}, scripted control & a state-independent reader on the same urn puts
  $113/300$ runs in the dead band at $p{=}0.3$ \\
1j. \ldots and the scripted bound we drew from a largest-block reader
  & \textbf{withdrawn}: that reader is refuted as a model of ours
  & held out over $44{,}989$ steps, $\gamma(\varphi)$ beats $\gamma(j^*)$ by $1491$ in
  log-likelihood \\
1k. \ldots and our two arms still disagree about the scope, with two candidates now excluded
  & \textbf{split}: reported, not resolved; both splits post hoc
  & synthetic highest-$q$ model $0/44$ in the band; E-real multi-valued $5/41$ against $6/311$,
  within one model $4/39$ against $1/141$ \\
1l. \ldots the \emph{number} of planted falsehoods does not move the interval
  & \textbf{measured}, P-T1-V2$''$, predicted against our own first prereg
  & $v{=}3$ near sits $-0.002$ from $v{=}1$, bar $\pm0.10$ \\
1m. \ldots nor does their \emph{distinguishability}
  & \textbf{failed}, P-T1-V1$''$, the arm's load-bearing criterion
  & far $-$ near $= -0.030$ against a bar of $+0.10$; $0/20$ seeds fuller, sign-test $p=0.25$ \\
1n. \ldots because an initial scatter is transient: the store re-concentrates
  & \textbf{measured}, not predicted & largest block $3.00 \to 5.08$; distinct retrieved
  $3.75 \to 2.65$ over the tenure \\
1o. \ldots and $\gamma$'s independent variable is $\varphi$, not the largest single-value block
  & \textbf{measured}, P-T1-V3, no bar by design & held-out log-likelihood favours $\varphi$
  by $1491$ over $44{,}989$ steps \\
2a. The published $\gamma$ out of sample
  & \textbf{failed}, one-sidedly & $12.1$ seed units, worst $z$ $8.50$; under-predicts $4/5$,
  the fifth at the ceiling in both \\
2a$^{\dag}$$'$. \ldots and how far that failure is established
  & \textbf{self-\allowbreak correction}: the $z$ credits the simulation with exactness
  & seed-clustered bootstrap: outside the band for $2/5$, not $5/5$; widest band $0.966$ \\
2b. $\gamma$ re-measured on the topics it predicts
  & \textbf{measured} & $2.2$ seed units, worst $z$ $1.32$, $1/5$ \\
2b$^{\dag}$$'$. \ldots and out of sample under a split-half refit
  & \textbf{measured}, not registered & $1.15\times$/$1.16\times$; signed $z$ $+0.32/+0.35$
  vs $-5.15$ \\
2c. The one-sidedness is our fill rule, not the model
  & \textbf{withdrawn} (App.~\ref{app:gammaest}) & the rule fires for \texttt{qwen3-\allowbreak 8b}
  alone, the one model arm A predicts exactly \\
2d. \ldots on a model family it was never fitted on
  & \textbf{split} & P-T1-F4, bar $2$ seed units: \texttt{gpt-\allowbreak 4o} $4.0$, sign opposite to 2a \\
2e. \ldots and no estimator repairs arm A
  & \textbf{measured}, four estimators & bracket width $0$ on $4/5$; isotonic $12.3$ vs
  $12.4$ seed units \\
3a. \ldots because of \eqref{eq:dyn}'s $1/n$
  & \textbf{withdrawn} (the reading, not the measurement)
  & Prop.~\ref{prop:step}: vacuous over the first $49$ steps \\
4a. \ldots because the predictor stops separating the models
  & \textbf{diagnosed} & cross-model spread of the early copy rate $0.333 \to 0.061$ \\
7a. Front-loading's magnitude bar & \textbf{failed} & P-S3: $0.27 < 0.33$ \\
7b. Pooled estimates stable to a re-run; verdicts identical
  & \textbf{measured}, by accident & two realisations, \S\ref{sec:replicate} \\
7c. Ordering survives three budgets; at $R{=}45$ back-loading is worth nothing
  & \textbf{measured} & $0.07/0.20/0.47$ against control $0.47$ \\
7d. The schedule effect shrinks as the budget grows & \textbf{failed}
   & P-B1, P-B2: spread $0.20 \to 0.40$ \\
7e. \ldots and fails in the direction a prior derivation predicts
   & \textbf{attribution, not confirmation} & the confirming sweep is not run \\
7f. Free re-grounding is a single point, not a neighbourhood
   & \textbf{derived}, not measured & spread $0$ at $\alpha{=}1$, $\ln T$ at every
   $\alpha<1$; the limits do not commute \\
7g. The sweep past the peak, on the frozen observable
   & \textbf{untested} (apparatus failure) & $f_{\max}(148){=}0.516$, $0.016$ above the
   threshold \\
7h. \ldots and the discriminator at three seeds
   & \textbf{undecided}, inside the band & $-0.088$; interval unusable either way \\
7i. The scale-free retest on six fresh seeds & \textbf{split}
   & $-0.167$ against $-0.060$, bar $-0.15$ \\
7j. \ldots and the supported side sits on the floor clause's boundary
   & \textbf{disclosed}, not used to overturn & arms $0.700/0.733/0.700$, span $1$ seed unit \\
7k. The resolution P-T1-S3 called unbuyable, bought
   & \textbf{unresolved} at $\le 1980$ runs, marked permanently
   & $+0.46\sigma$ against a $3\sigma$ bar \\
7l. \ldots and the sample it needs \emph{grew} when measured
   & \textbf{measured}, against us & effect $0.115 \to 0.021$; se $21\%$ above the
   $1/\sqrt{n}$ projection; $3\sigma$ needs ${\sim}1895$ seeds \\
7m. \ldots and the discriminator at $44$ seeds & \textbf{undecided} at $1.14\sigma$
   & $-0.052$; at three seeds $-0.088$, outside the band \\
7n. \ldots and the shape criterion at $44$ seeds
   & \textbf{failed} on magnitude and significance & $-0.074$ vs bar $-0.15$, $1.62\sigma$ \\
7o. \ldots and the \emph{ordering}, which the axis rests on, at $220$ runs per arm
   & \textbf{reported}, not graded & front $<$ uniform $<$ back at all three budgets \\
16a. \ldots so each seed is worth $1.33$ runs however many models run on it
   & \textbf{derived} from the measured ICC & design effect $1+(m{-}1)\rho = 3.75$ \\
16b. \ldots and no verdict changes, every criterion being a point estimate against a bar
   & \textbf{stated}, checked row by row & \S\ref{sec:power} \\
16c. \ldots and $3.75$ is a property of pooled \emph{levels}; a paired contrast carries
   $1.14$--$1.47$
   & \textbf{measured} at $44$ seeds, both & levels $3.56$--$3.66$; clustered/naive se
   $1.80$--$1.85$ against $0.93$--$1.15$ \\
16d. \ldots and one $\rho$ for five models is an assumption that does not hold
   & \textbf{measured}, against Prop.~\ref{prop:se}'s form & $\rho$ $+0.18$ to $+0.99$ over
   ten pairs; implied design effect $1.73$--$4.95$ \\
9a. \ldots and its high-$f_0$ direction & \textbf{tie}, reported as such
  & P-A2: $0$ better, $1$ worse, $4$ tie \\
17a. \ldots and $\varphi$ \emph{is} fast, which the prereg discounted in advance
  & \textbf{measured}, P-T1-D1 & median first passage $2$ steps \\
17b. \ldots and Prop.~\ref{prop:step}'s bound is also loose where the decision is taken
  & \textbf{self-\allowbreak correction} & $11.4\times$; identity residual $10^{-16}$ over $2420$ steps \\
18a. \ldots and ranking \emph{polarises} rather than raising collapse, as we predicted
  & \textbf{refuted}, P-T1-R2, registered first & $0.533 \to 0.000$ at $f_0{=}0.1$;
  $0.933 \to 1.000$ at $f_0{=}0.9$ \\
18b. \ldots and the cause is ranking \emph{determinism}, not lost topic routing
  & \textbf{measured}, P-T1-R5 & keeping the oracle reproduces $1.00$ of the displacement \\
18c. \ldots and the mechanism is variance, not breadth
  & \textbf{not\allowbreak\ supported}, P-T1-R3: the registered mechanism fails, the one named as least
  sure survives & identical set on $59/59$ steps vs $0/59$; $\mathrm{var}(n_f)$ $0.240$ vs
  $5.753$ \\
18d. \ldots and $k_{\rm eff}$ misses the exact match it was defined to catch
  & \textbf{self-\allowbreak correction}: the wrong summary of the right idea
  & ranking at $k{=}8$ equals uniform at $k{=}1$ ($0.000/1.000$); $k_{\rm eff}$ $1.40$ vs
  $1.00$ \\
18e. \ldots and the arm overran its spend guard, so one of three models is excluded
  & \textbf{ac\-knowl\-edged limit}: two models judged & \$5.06 stop, \$5.13 spent vs \$2.06
  registered; $33$ runs dropped \\
18f. \ldots and the determinism is ranking's, not \textsc{MiniLM}'s
  & \textbf{measured}, three rankers, offline & max truth/false separation $0.0052$;
  BM25 and char-trigram $k_{\rm eff}$ both $1$ against \textsc{MiniLM}'s $4$ \\
21a. \ldots and the $\sqrt{\mathrm{deff}}$ rule of thumb disagrees with the cluster bootstrap
  & \textbf{measured} & $[-0.029,\,0.235]$ spans zero; both are printed and the bootstrap decides because it is the statistic's own sampling distribution \\
21b. \ldots and the between-arm half is a construct difference, which no item count decides
  & \textbf{ac\-knowl\-edged limit} & the arms differ on whether a novel value is a variant of a numeral or a different city \citep[row~29]{t5rerun2026} \\
21c. \ldots and a decisive replication of the first half costs a fourfold corpus
  & \textbf{derived} & $164$ multi-valued runs per group at the measured design effect, $1{,}405$ runs, $141$ facts against the $36$ we have \\
22a. \ldots and Prop.~\ref{prop:ceiling} stands under it, because the store is still append-only
  & \textbf{formal} & the ceiling is about what the store contains; what changes is which functional the agent acts on \\
22b. \ldots and whether the mutable part merely moved to the weight vector is not answered here
  & \textbf{ac\-knowl\-edged limit} & it needs the Noisy-OR update's behaviour under a growing false majority, which nobody in this paper has derived \\
23a. \ldots and it was found by building this paper's first generated-macro namespace rather than
  by rereading & \textbf{code-verifiable} & every three-decimal literal in the abstract is now a
  $\gnv{}$ macro read from \texttt{results/\allowbreak } at build time; the audit fails if one is replaced \\
23b. \ldots and four of the six graders behind those numbers persisted nothing
  & \textbf{re\-ported}, against us & they recomputed on every run and printed, so the abstract held
  the only durable copy; \texttt{FRONTIER.\allowbreak json} and \texttt{GATE.\allowbreak json} exist as of this version \\
23c. \ldots and the namespace's first name collided with this family's own notation
  & \textbf{self-\allowbreak correction} & \verb|\gnd| is $\mathrm{gnd}$ in all eight preambles,
  \verb|\providecommand| lost the collision silently, and the abstract typeset macro names for one
  build; the emitters use \verb|\newcommand| now, so a collision fails the build \\
24a. The empty interval's cuts were stored rounded, so the run that defines the lower edge fell
  outside it & \textbf{self-\allowbreak correction} & $0.306$ and $0.512$ against the definition's
  $0.30625$ and $0.5125$; the two rows of Table~\ref{tab:bimodal2} were counted against different
  thresholds and Table~\ref{tab:f0}'s void column against a third \\
24b. \ldots and the corrected count is the more favourable one, which is why it is in the body
  & \textbf{measured} & middle share $0.045 \to 0.036$ against a bar of $0.10$; P-T1-B1$'$
  \textsc{supported} under both, and the density ratio falls $0.18 \to 0.15$ against a bar of
  $0.25$ \\
24c. \ldots and four of the graders behind these tables persisted nothing before this version
  & \textbf{re\-ported}, against us & \texttt{FRONTIER}, \texttt{GATE}, \texttt{BIMODAL2},
  \texttt{MEAN} and \texttt{TRANSIENT3} exist now; the papers held the only durable copies \\
\end{longtable}}

\end{document}